\documentclass[a4paper,12pt]{article}

\usepackage{lipsum}
\usepackage{amsfonts}
\usepackage{graphicx}
\usepackage{epstopdf}
\usepackage{algorithmic}
\usepackage{relsize}
\usepackage{tikz}
\usepackage{tzplot}

\usepackage{tabularx}
\usepackage{booktabs}
\usetikzlibrary{positioning}
\ifpdf
  \DeclareGraphicsExtensions{.eps,.pdf,.png,.jpg}
\else
  \DeclareGraphicsExtensions{.eps}
\fi

\usepackage{enumitem}
\usepackage{tikz}
\usepackage{tkz-euclide}
\setlist[itemize]{leftmargin=.5in}

\usepackage{amsmath,amsthm}
\usepackage{amssymb}
\usepackage{hyperref}
\usepackage{algorithm}
\usepackage{geometry}
\usepackage{esint}
\usepackage{array}
\usepackage[linewidth=2pt]{mdframed}

\setlength\parindent{0pt}
\newmdenv[
topline=false,
bottomline=false,
rightline=false,
skipabove=\topsep,
skipbelow=\topsep,
linewidth=4
]{siderules}
\usepackage{hyperref}
\usepackage{xcolor}
\usepackage[most]{tcolorbox}
\usepackage[nottoc]{tocbibind}

\newtheorem*{theorem*}{Theorem}
\newtheorem{theorem}{Theorem}

\newtheorem{remark}[theorem]{Remark}
\newtheorem{definition}[theorem]{Definition}
\newtheorem{lemma}[theorem]{Lemma}
\newtheorem{corollary}[theorem]{Corollary}

\newcommand*\dx{\mathop{}\!\mathrm{d}}

\DeclareMathOperator{\relu}{ReLU}

\title{On the growth of the parameters of approximating ReLU neural networks}
\author{Martin Holler \thanks{Department of Mathematics and Scientific Computing, University of Graz. MH further is a member of NAWI Graz (\href{https://www.nawigraz.at}{www.nawigraz.at}) and of BioTechMed Graz (\href{https://biotechmedgraz.at}{biotechmedgraz.at}) (\href{mailto:martin.holler@uni-graz.at}{martin.holler@uni-graz.at})} \and Erion Morina \thanks{Department of Mathematics and Scientific Computing, University of Graz.
    (\href{mailto:erion.morina@uni-graz.at}{erion.morina@uni-graz.at}).}}

\usepackage{amsopn}

\ifpdf
\hypersetup{
	pdftitle={On the growth of the parameters of approximating ReLU neural networks},
	pdfauthor={M. Holler and E. Morina}
}
\fi
\begin{document}	
	\maketitle
	\begin{abstract}
		This work focuses on the analysis of fully connected feed forward ReLU neural networks as they approximate a given, smooth function. In contrast to conventionally studied universal approximation properties under increasing architectures, e.g., in terms of width or depth of the networks, we are concerned with the asymptotic growth of the parameters of approximating networks. Such results are of interest, e.g., for error analysis or consistency results for neural network training. The main result of our work is that, for a ReLU architecture with state of the art approximation error, the realizing parameters grow at most polynomially. The obtained rate with respect to a normalized network size is compared to existing results and is shown to be superior in most cases, in particular for high dimensional input.
	\end{abstract}
	\begin{keywords}
		Neural networks, approximation, complexity, growth of parameters\\
	\end{keywords}
	\begin{MSCcodes}
		41A25, 41A65
	\end{MSCcodes}
	\section{Introduction}
	It is well known that certain neural network architectures have a universal approximation property, i.e., functions of certain regularity may be approximated arbitrarily well with respect to appropriate norms. This can be achieved by increasing the complexity of the underlying neural networks, where complexity is usually described by the network size in terms of width, depth, or number of weights and neurons when it comes to fully connected feed forward neural networks.
	In the classical work \cite{mhaskar}, shallow approximations of Sobolev functions with respect to $L^p$-norms are studied. See also \cite{guehring20} for deep ReLU approximation of Sobolev functions with respect to general Sobolev norms and the references therein. For the (nearly) optimal approximation of (piecewise) smooth functions by ReLU networks see \cite{lu_main, Petersen18}. The works in \cite{devore21,elbraecther21, gribonval20} together with their references give a comprehensive overview of the approximation theory based on neural networks.\\
	 A question that has not been given much attention in the current literature is how the parameters realizing the approximating networks behave asymptotically, see Section \ref{sec:compare} for an overview of related works. This question is of particular interest, e.g., in view of a full error analysis or consistency results for neural network training. %

As an example for the former, let us consider the results of \cite{jentzen_riekert, jentzen_welti}, where a full error analysis of deep learning for empirical risk minimization is provided. There, the underlying networks are trained based on Stochastic Gradient Descent (SGD) with random initializations and the results hold in the probabilistic sense.
A simplification of the result in \cite[Theorem 1.1]{jentzen_riekert} is given as follows. For some $d\in\mathbb{N}$ let $f:[0,1]^d\to [0,1]$ be Lipschitz continuous. Assume that $(f_{N,L})_{N,L\in \mathbb{N}}$ is a sequence of networks such that $f_{N,L}$ is a network of width $N$ and depth $L$ with parameters bounded by $c(N,L)$, minimizing the empirical risk over $M$ given i.i.d. training samples. Let further $\mathbf{A}(N,L)$ denote some upper bound on the approximation error of $f$ in terms of $N$ and $L$, approaching zero for increasing $N$ and $L$ (Note that $\mathbf{A}$ depends on the approximating architecture and in general stronger regularity assumptions of $f$ than Lipschitz-continuity are required). Furthermore, let $\mathbf{O}(N,L,K)c(N,L)^{L+1}$ be some upper bound on the optimization error, where $K$ is the number of random initializations of SGD, and let $\mathbf{G}(N,L,M)c(N,L)$ be some upper bound on the generalization error. Explicit characterizations of $\mathbf{G}$ and $\mathbf{O}$ in terms of $N,L,M,K$, respectively, are given in \cite[Theorem 1.1]{jentzen_riekert}. The error functions $\mathbf{G}$ and $\mathbf{O}$ approach zero for fixed $N,L\in \mathbb{N}$ as $K$ and $M$ increase. Under these assumptions it holds true by \cite[Theorem 1.1]{jentzen_riekert} that
	 \begin{align*}
	 	\mathbb{E}(\Vert f_{N,L}-f\Vert_{L^1([0,1]^d, \mathbb{P})})\leq \mathbf{A}(N,L)+\mathbf{O}(N,L,K)c(N,L)^{L+1}+\mathbf{G}(N,L,M)c(N,L).
	 \end{align*}
	 This shows that in order to bound the left hand side, one does not only require estimates on $\mathbf{A}(N,L)$, $\mathbf{O}(N,L,K)$, and $\mathbf{G}(N,L,M)$, but also on the bound $c(N,L)$ of the parameters of the approximating neural network.

		It is clear that bounds on the parameters of approximating neural networks must always be analyzed jointly with the network depth and width. Indeed, networks with large parameter bounds may be expanded in width and depth such that they describe the same function, but such that the modified parameters are considerably smaller. On the other hand. fixing the architecture in terms of width and depth can obviously not allow the approximation of arbitrary complex but smooth functions.
	In order to account for this, our strategy is to consider networks that achieve optimal or nearly optimal approximation results with respect to width and depth, and analyze the asymptotic behavior of parameters realizing those networks.
	Specifically, we analyze the asymptotic behavior of the parameters realizing the approximating network architectures studied in \cite{lu_main} and \cite{mhaskar}. An advantage with these works is that they are based on explicit constructions which allow for successive estimation of the occurring parameters. The approximation scheme in \cite{mhaskar} provides an optimal approximation error for single-hidden-layer neural networks whereas \cite{lu_main} establishes the optimal order of approximation in terms of width and depth. Note that we do not make any statement on optimality of the asymptotical behavior of the realizing parameters. The goal is to get an insight in the asymptotic behavior of the parameters realizing the approximating schemes in \cite{ lu_main,mhaskar}, compared to the current state in literature.\\
	
	For the shallow approximation with smooth activation functions introduced in \cite{mhaskar} based on trigonometric polynomials, we show exemplarily the negative result that, for the Gaussian and logistic activation function, the parameters realizing the approximating neural networks grow at least exponentially under mild assumptions.\\
	For the deep approximation introduced in \cite{lu_main} based on the ReLU activation function we provide a small modification with slightly increased depth, such that the realizing parameters of the modified approximating networks grow at most polynomially. A simplified version of our main result in this context, which is based on \cite[Theorem 1.1]{lu_main}, is given as follows.
	\begin{theorem*}[Simplification of Theorem \ref{theorem_lu_param_growth}]
		Let $d,q\in \mathbb{N}$ and $f\in \mathcal{C}^q([0,1]^d)$. Then for any $N,L\in \mathbb{N}$ there exists a ReLU feed forward neural network $f_{N,L}$ with width of order $N\log N$ and depth of order $L^2\log L$ such that
		\[
			\Vert f-f_{N,L}\Vert_{L^\infty([0,1]^d)}=\mathcal{O}(N^{-2q/d}L^{-2q/d}).
		\]
		The parameters of the $f_{N,L}$ grow asymptotically as $\mathcal{O}(\max(N^{(6q-3)/d}L^{(6q-2)/d}, N^2L^3))$.
	\end{theorem*}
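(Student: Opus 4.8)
The plan is to work directly from the explicit constructive proof of \cite[Theorem 1.1]{lu_main}, tracing its network module by module and bounding, at each stage, the magnitude of the weights and biases that occur. Because that architecture is built from finitely many elementary sub-networks via composition, parallel stacking, and affine pre- and post-composition, I would first isolate the bookkeeping rules governing how a parameter bound propagates under each operation: under composition and stacking the bound is essentially the maximum over the constituent blocks (up to the merging of two adjacent affine layers), while an affine pre- or post-composition $x\mapsto A\,\Phi(Bx+c)+e$ scales the bound by the entrywise size of $A$ and $B$. Consequently the final parameter magnitude is dictated by the few blocks that either carry an explicit large scaling or force the merging of affine layers with large entries.

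The second step is to fix the effective resolution. Matching the target error $\mathcal{O}(N^{-2q/d}L^{-2q/d})$ against a piecewise Taylor approximation of order $q$ forces cubes of side $\sim K^{-1}$ with $K\sim N^{2/d}L^{2/d}$, hence $\sim N^2L^2$ cubes in total, and every bound will eventually be re-expressed in $N$ and $L$ through these relations. I would then split the building blocks into two families. The polynomial blocks --- the ReLU realizations of $x\mapsto x^2$, of the product $xy$, and thus of the monomials of degree at most $q$ making up the local Taylor data --- rest on the sawtooth/self-composition construction and can be realized with parameters bounded by an absolute constant; composing and combining them to form the local polynomials amplifies these bounds only polynomially and carries the dependence on $q$. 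The delicate family is the localization module, which must assign to each point the index of the cube containing it and select the corresponding Taylor data.

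The main obstacle is precisely this localization module. In \cite{lu_main} it is implemented by a point-fitting/bit-extraction sub-network that packs the data of all $\sim N^2L^2$ cubes into the binary digits of a single real number and reads them off; this is what produces the efficient width--depth trade-off, but it also forces weights of magnitude comparable to $2^{(\text{number of encoded bits})}$, i.e.\ exponential growth. My modification replaces this high-magnitude extraction by a depth-trading variant that realizes the same piecewise-constant selection map as a cascade of many elementary steps, each resolving only a bounded amount of information and hence using weights of size $\mathcal{O}(1)$ in the rescaled variable. This keeps the total parameter magnitude polynomial in $K$ and in the number of cubes, at the cost of enlarging the depth by a factor of order $L$ up to logarithmic terms --- which is exactly why the depth becomes $L^2\log L$ rather than $L\log L$, while the width of order $N\log N$ and the rate $\mathcal{O}(N^{-2q/d}L^{-2q/d})$ of \cite[Theorem 1.1]{lu_main} are retained.

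Finally I would reassemble the modified network and aggregate the per-block bounds through the propagation rules. The depth-traded localization/selection structure, depending on the $\sim N^2L^2$ encoded cubes together with the length of the selection cascade, contributes the dimension-independent term $\mathcal{O}(N^2L^3)$, whereas the interaction of the polynomial blocks with the affine rescalings by $K\sim N^{2/d}L^{2/d}$ yields the $q$-dependent term, re-expressed as $\mathcal{O}(N^{(6q-3)/d}L^{(6q-2)/d})$; the stated bound is the maximum of these two competing regimes. Beyond this careful but essentially mechanical bookkeeping, the only genuinely non-routine point is to verify that the depth-traded localization module reproduces the exact selection behaviour demanded by the construction of \cite{lu_main} without degrading its error estimate, so that all conclusions of that theorem remain valid for the modified network.
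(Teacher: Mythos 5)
Your proposal follows the same overall strategy as the paper (trace the construction of \cite[Theorem 1.1]{lu_main} block by block, propagate supremum bounds on the parameters through compositions and stackings), and you correctly identify the paper's central modification: the bit-extraction sub-networks of \cite[Lemmas 5.6, 5.7]{lu_main} carry weights of size $2^L$, and replacing the single multiplication by $2^L$ with a cascade of $\mathcal{O}(L)$ doublings with weights of size $2$ trades this exponential magnitude for an extra factor $L$ in depth --- exactly why the depth becomes $L^2\log L$, and exactly the source of the dimension-independent term $NL(N+L^2)\lesssim N^2L^3$. Up to this point your sketch matches the paper.

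The genuine gap is in your derivation of the $q$-dependent term. You attribute $N^{(6q-3)/d}L^{(6q-2)/d}$ to ``the interaction of the polynomial blocks with the affine rescalings by $K\sim N^{2/d}L^{2/d}$,'' but the polynomial blocks ($x^2$, products, monomials) contribute only $\mathcal{O}(N)$: they are evaluated on $x-\Psi(x)$, which already lies in the unit cube, so no large rescaling ever enters them, and in any case rescalings by $K^{|\alpha|}$ with $|\alpha|\le q-1$ could only produce exponents of order $2(q-1)/d$, not $(6q-3)/d$. In the paper the $q$-dependence comes from a mechanism your sketch never engages with: the trifling region. A ReLU network is continuous, so the cube-indexing step function $\Psi$ cannot be realized exactly; it must have steep transitions on a region of width $\delta=\tfrac{1}{(c+1)R}$, and the error committed there is controlled by the modulus of continuity, requiring $d\,\omega_f(\delta)\le N^{-2q/d}L^{-2q/d}$, i.e.\ $c\gtrsim \tilde{L}d\,N^{2(q-1)/d}L^{2(q-1)/d}$ for Lipschitz constant $\tilde{L}$. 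The two-hidden-layer interpolation construction behind $\Psi$ (the analogue of \cite[Lemma 2.2]{shen_external1}) then has parameters of order $c^3N^{3/d}L^{4/d}$ --- note the \emph{cube} of $c$, which is specific to that construction and cannot be guessed from generic bookkeeping --- and substituting the choice of $c$ yields $N^{(6q-3)/d}L^{(6q-2)/d}$. Without the trifling-region/modulus-of-continuity argument and the $c^3$ estimate for the step-function interpolation, your proposal cannot produce the stated exponents; your claim that the selection cascade can be built with ``weights of size $\mathcal{O}(1)$ in the rescaled variable'' hides precisely the factor that dominates the final bound.
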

	\paragraph{Scope of the paper.} In Section \ref{sec:compare} we compare the result above to existing results in the literature. In Section \ref{sec:param_growth} we provide our main results, in particular the analysis on the asymptotical behavior of the parameters of the approximation with deep networks introduced in \cite{lu_main}.
	In Appendix \ref{subsec:shallow} we provide the proof of our negative result on the asymptotic behavior of the parameters for the approximation with single-hidden-layer introduced in \cite{mhaskar}.
	\section{Comparison to existing literature}
	\label{sec:compare}
This section provides works that also deal with the asymptotic behavior of realizing parameters of approximating networks and compares their result with ours. 

The work \cite{deryck21} considers the approximation of certain Sobolev-regular functions by shallow feed-forward tanh-type neural networks. In particular, the approximation result in \cite[Theorem 5.1]{deryck21} is provided under parameters that grow at most polynomially in terms of the width. The work \cite{belomestny23} deals with the approximation of Hölder-smooth functions by feed forward neural networks with piecewise polynomial activation functions amongst others. In \cite[Theorem 2]{belomestny23} the approximation result is achieved with uniformly bounded weights. 
For so-called $(p,C)-$smooth functions (see e.g. \cite[Definition 1]{langer21}) an approximation result with sigmoidal activation functions is provided in \cite[Theorem 1]{langer21} for at most polynomially growing parameters in terms of the width. In \cite[Proposition 4.8]{raslan21} an approximation result for Sobolev-regular functions under activation functions enabling the construction of exact/exponential/polynomial partitions of unity (see \cite[Definition 4.1]{raslan21}) is established with parameters bounded polynomially in terms of the number of non-zero weights.\\
	
For comparing the above approximation result to ours, it is important that the functions that are approximated attain the same regularity, that the norm which measures the approximation error is the same or at least comparable, and that the hyperparameter of the approximating architecture with respect to which the approximation error decreases (e.g. width) is transformed to the same order of complexity.

Consequently, we compare the approximation results for functions $f\in \mathcal{C}^q([0,1]^d)$, since they meet the 
regularity requirements of \cite{lu_main} (smoothness), \cite{deryck21,raslan21} (Sobolev-regularity) and \cite{belomestny23, langer21} (Hölder-regularity since the derivative of order $q-1$ is Lipschitz-continuous).
A comparable norm for the approximation error in \cite{belomestny23,deryck21,raslan21, langer21,lu_main} is the supremum norm. A summary of the approximation results of the different works and of Theorem \ref{theorem_lu_param_growth} is provided in Table \ref{comparison:1} and Table \ref{comparison:2}.
	\begin{table}[h!]
		\centering
		\begin{tabular}{c c c c c c}
			\toprule
			Result & Width & Depth & Approximation & Growth of parameters & Activation\\
			\midrule
			Th. \ref{theorem_lu_param_growth} &$\mathcal{O}(N)$ & $\mathcal{O}(L)$ &$\mathcal{O}(N^{\frac{-2q}{d(1+\delta)}}L^{\frac{-q}{d(1+\delta)}})$ & $\mathcal{O}(N^{\frac{6q-3}{d}}L^{\frac{3q-1}{d}}\lor N^{2}L^{\frac{3}{2}})$ & $\relu$\\
			\cite{belomestny23}& $\mathcal{O}(N)$ & $\mathcal{O}(1)$ & $\mathcal{O}(N^{-q/d})$ &$\mathcal{O}(1)$ &\text{ReQU}\\
			\cite{deryck21}& $\mathcal{O}(N)$ & $3$ & $\mathcal{O}(N^{-q/d})$ & $\mathcal{O}(N^{(d+q^2)/2})$ &$\tanh$\\
			\cite{langer21}& $\mathcal{O}(N)$ & $\mathcal{O}(1)$ & $\mathcal{O}(N^{-2q/d})$ & $\mathcal{O}(N^{(16q+2d+9)/d})$& $\frac{1}{1+\exp(-x)}$\\
			\bottomrule
		\end{tabular}
		\caption{Comparison of state of the art results on growth of parameters realizing approximations to $f\in\mathcal{C}^q([0,1]^d)$ with normalized width.}
		\label{comparison:1}
	\end{table}
		\begin{table}[h!]
		\centering
	\begin{tabular}{c c c c c}
		\toprule
		Result & Nonzero weights & Approximation& Growth of parameters & Activation\\
		\midrule
		Th. \ref{theorem_lu_param_growth} &$\mathcal{O}(W)$&$\mathcal{O}(W^{-q/d})$ & $\mathcal{O}(W^{\frac{9q-4}{2d}\lor \frac{7}{4}})$ & ReLU\\
		\cite{raslan21} &$\mathcal{O}(W)$&$\mathcal{O}(W^{-q/d})$ & $\mathcal{O}(W^{4+2q/d})$ & RePU, soft+\\
		\bottomrule
	\end{tabular}
	\caption{Comparison of state of the art result on growth of parameters realizing approximations to $f\in\mathcal{C}^q([0,1]^d)$ with normalized number of nonzero weights.}
	\label{comparison:2}
	\end{table}
	Note that the asymptotical bound for the approximation error of Theorem \ref{theorem_lu_param_growth} in Table \ref{comparison:1} holds for any $\delta\in (0,1)$ (as the logarithm of $N$ grows slower than any positive power of $N$) and that the result in \cite{belomestny23} is valid for $q\geq 3$.
Also note that, in Theorem \ref{theorem_lu_param_growth},  it is possible to vary the depth of the approximating architecture as opposed to the results in \cite{belomestny23,deryck21,langer21}, but for the sake of comparison we consider a constant depth.

The approximation error in Table \ref{comparison:1} relative to the width of Theorem \ref{theorem_lu_param_growth} is better than in \cite{belomestny23, deryck21} but slightly worse than in \cite{langer21}.
Regarding the growth of parameters, in case $6q<2d+3$, one can observe that the bound for Theorem \ref{theorem_lu_param_growth} grows slower than of \cite{deryck21} except for the case $q=1, d=2$. In case $6q\geq 2d+3$, the bound for Theorem \ref{theorem_lu_param_growth} grows slower than of \cite{deryck21} except for $d=2$ with $q\in \{2,3,4\}$ and $d=1$ with $q\leq 11$. Regarding \cite{langer21}, one can observe that the growth of parameters in Theorem \ref{theorem_lu_param_growth} is always slower than the one of \cite{langer21}. Regarding a comparison to \cite{belomestny23}, it is interesting to see that \cite{belomestny23} even yields uniformly bounded parameters in $[-1,1]$ (though with slightly worse approximation error compared to Theorem \ref{theorem_lu_param_growth}). In the following, we detail the main differences between the result of \cite{belomestny23} and Theorem \ref{theorem_lu_param_growth}.
	
One difference is that Theorem \ref{theorem_lu_param_growth} allows for adjusting the depth of the approximating architecture, yielding a better approximation error. Furthermore, different regularity assumptions are required, in \cite{belomestny23} a form Hölder-regularity whereas Theorem \ref{theorem_lu_param_growth} requires smoothness of certain degree. Moreover, a key difference is that the result in \cite{belomestny23} is based on the ReQU activation function $\sigma^{\text{ReQU}}(x) = (x\lor 0)^2$ which is capable of approximating several higher order derivatives simultaneously.
	The approximation in \cite{belomestny23} is essentially based on tensor-product splines, which are certain piecewise polynomials. The crucial point here is that the coefficients of corresponding normalized basis splines are a priori uniformly bounded in terms of the approximated function.\\
	The choice of the ReQU in \cite{belomestny23} is therefore crucial as it can represent piecewise polynomials exactly, thus, in particular also the identity mapping and products. Expanding the approximating architecture yields that the parameters, which are contained in a compact set, can be restricted to the interval $[-1,1]$, respectively.\\
	The usage of ReQU in \cite{belomestny23} is essential, as for the ReLU activation function, used in \cite{lu_main}, approximability is only achievable in spaces $W^{q,p}([0,1]^d)$ for $0\leq q\leq 1$ and $1\leq p\leq \infty$ (see \cite{guehring20}) due to the first order irregularity of the ReLU in zero. In addition, the ReLU is only capable of approximating multiplications with decreasing error and increasing architectures (see \cite[Lemma 4.2]{lu_main}) as opposed to the ReQU.\\
	
	Finally, we compare the result in Theorem \ref{theorem_lu_param_growth} to \cite[Proposition 4.8]{raslan21} summarized in Table \ref{comparison:2}. Again we consider the approximation of some $f\in \mathcal{C}^q([0,1]^d)$. As a consequence, the result in \cite{raslan21} yields that for $\epsilon>0$ and the number of nonzero weights being of order $\mathcal{O}(\epsilon^{-d/q})$, the approximation error equals $\epsilon$ and the parameters are of order $\mathcal{O}(\epsilon^{-2(1+2d/q)})$. Thus, if the number of nonzero weights is of order $\mathcal{O}(W)$ then the approximation error is of complexity $\mathcal{O}(W^{-q/d})$ and the parameters of order $\mathcal{O}(W^{4+2q/d})$. The approximation result in Theorem \ref{theorem_lu_param_growth} is formulated in terms of width $N$ and depth $L$. Thus, the number of nonzero weights is of order $\mathcal{O}(NL)$. Identifying the product $NL$ by $W$ under $N\approx L$ (and hence $N,L\approx W^{1/2}$) we derive that the growth of parameters in Theorem \ref{theorem_lu_param_growth} given in Table \ref{comparison:1} is of order $\mathcal{O}(W^{\frac{9q-4}{2d}\lor \frac{7}{4}})$. As a consequence, for $18q\leq 7d+8$ the upper bound of the parameters in Theorem \ref{theorem_lu_param_growth} grows slower than that of \cite{raslan21}. In case $18q>7d+8$ this applies only if $5q\leq 8d+4$.\\
	
In summary, compared to state of the art results, except for \cite{belomestny23} where uniform boundedness is achieved by using ReQU activations, the bound on the parameters of Theorem \ref{theorem_lu_param_growth} grows slower in most cases, depending on the input dimension and regularity of the approximated function.

	\section{Growth of parameters of approximating neural networks}
	\label{sec:param_growth}
	 In this section we provide analytical results on the asymptotic behavior of the supremum norm of the parameters of two approximating fully connected feed forward neural network architectures based on \cite{lu_main}. 
For completeness, we first provide the definition of a fully connected feed forward neural network.
\begin{definition}
	Given $L\in \mathbb{N}$ and $n_l\in \mathbb{N}$ for $0\leq l\leq L$, a fully connected feed forward neural network $\mathcal{N}_\theta$ with activation function $\sigma$ is defined as $\mathcal{N}_\theta= L_{\theta_L}\circ\dots\circ L_{\theta_1}$ for $L_{\theta_l}:\mathbb{R}^{n_{l-1}}\to\mathbb{R}^{n_l}$ with $L_{\theta_l}(z):=\sigma(w^lz+\beta^l)$ for $1\leq l\leq L-1$ and $L_{\theta_L}(z):= w^Lz+\beta^L$ where $\theta_l=(w^l,\beta^l)$ with $w^l\in \mathcal{L}(\mathbb{R}^{n_{l-1}},\mathbb{R}^{n_l})\simeq\mathbb{R}^{n_l\times n_{l-1}}$, $\beta^l\in \mathbb{R}^{n_l}$ for $1\leq l\leq L$. Further we define the depth of the network by $\mathcal{D}(\mathcal{N})=L$ and width $\mathcal{W}(\mathcal{N})=N= \max_l n_l$. Denoting by $\mathcal{FNN}$ the class of fully connected feed forward neural networks and by $\Theta$ the class of parameter configurations $\theta = (\theta_l)_{1\leq l\leq L}$ we, moreover, define the realization map $\mathcal{R}:\Theta\to \mathcal{FNN}, \theta\mapsto \mathcal{N}_\theta$ and
	\begin{align}
		\label{pmap}
		\notag\mathcal{P}:\mathcal{FNN}&\to[0,\infty)\\
		\mathcal{N}&\mapsto \min_{\theta\in\Theta: \mathcal{R}(\theta)=\mathcal{N}}\Vert\theta\Vert_\infty,
	\end{align}
	where $\Vert \theta\Vert_\infty$ is the corresponding supremum norm for $\theta \in \bigotimes_{l=1}^L \mathbb{R}^{n_l\times n_{l-1}}\times \mathbb{R}^{n_l}$. 
\end{definition}
Note that it follows by standard arguments that the minimum in \eqref{pmap} is attained and hence the map $\mathcal{P}$ is well-defined. Furthermore, given $\mathcal{N}\in\mathcal{FNN}$ and a $\tilde{\theta}\in \Theta$ with $\mathcal{R}(\tilde{\theta})=\mathcal{N}$ it holds true that $\mathcal{P}(\mathcal{N})\leq \Vert \tilde{\theta}\Vert_\infty$.\\

The main goal of this work is to study the following problem: Given $f \in \mathcal{X}$, e.g., $\mathcal{X}=\mathcal{C}^q([0,1]^d)$, assume that there exists $c>0$
such that for all $N, L\in \mathbb{N}$ there exists $\phi_{N,L}\in\mathcal{FNN}$ with width $\mathcal{W}(\phi_{N,L})=w(N)$ and depth $\mathcal{D}(\phi_{N,L})=d(L)$ fulfilling
\begin{align}
	\label{approximation}
	\Vert \phi_{N,L}-f\Vert_\mathcal{Y}\leq c \Vert f\Vert_\mathcal{X} \alpha_\mathcal{X}(N,L)
\end{align}
with  $\mathcal{X}\hookrightarrow \mathcal{Y}$, $\alpha:\mathbb{N}^2\to [0,\infty)$ monotonically decreasing in both components, and decreasing to zero in at least one component.
In case the constant $c$ is independent of $f, N$ and $L$ we write $\Vert \phi_{N,L}-f\Vert_\mathcal{Y}\lesssim\Vert f\Vert_\mathcal{X} \alpha_\mathcal{X}(N,L)$ for \eqref{approximation}.
Given this kind of approximation result, the question is how $\mathcal{P}(\phi_{N,L})$ behaves asymptotically for $\phi_{N,L}$ approximating $f$ via the width $N$ and/or the depth $L$ going to infinity.

Note that, in contrast to most works that deal with approximability in terms of width and depth of neural networks, we take such results as given and rather focus on the worst-case growth of the supremum of the parameters of the approximating neural network.

As first example, we consider the classical approximation result of \cite[Theorem 2.1]{mhaskar} using single-hidden-layer neural networks. As the following (negative) result shows, the asymptotic growth of the network parameters highly depends on the way the neural-network approximation is constructed, and may even be exponential in some cases.

\begin{theorem}
	\label{mhaskar_main_param}
	Let $1\leq q\leq \infty$ and $f\in W^{1,\infty}([-1,1])$ be given as in \eqref{special_f}. Then the single hidden layer feed forward neural networks $(f_N)_{N\in\mathbb{N}}$ as constructed in  \cite[Theorem 2.1]{mhaskar} (see also \eqref{mhaskar_neural_network}) with width of order $\mathcal{O}(N)$ and activation function given by either $\phi(x)=\exp(-x^2)$ or $\phi(x)=(1+\exp(-x))^{-1}$ fulfill
	\[
	\Vert f-f_N\Vert_{L^q([-1,1])}=\mathcal{O}(N^{-1}).
	\]
	Furthermore, there exists some $c>1$ such that the realizing parameters of the $(f_{N})_N$ grow asymptotically as $\Omega(c^{N})$.
\end{theorem}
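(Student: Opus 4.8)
The first assertion is essentially a restatement of \cite[Theorem 2.1]{mhaskar}. For $f\in W^{1,\infty}([-1,1])$ that construction yields, with $\mathcal{O}(N)$ neurons, a network $f_N$ reproducing a trigonometric polynomial of degree $n\asymp N$ which approximates $f$ up to an $L^\infty$-error of order $n^{-1}$; since $[-1,1]$ has finite Lebesgue measure, the $L^q$-error is controlled by the $L^\infty$-error for every $1\le q\le\infty$, giving $\|f-f_N\|_{L^q([-1,1])}=\mathcal{O}(N^{-1})$. The substance of the theorem is therefore the exponential lower bound on the parameters, and this is what I would focus on.

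To this end I would track how the construction \eqref{mhaskar_neural_network} actually produces the degree-$n$ part of $f_N$. The activations $\phi(x)=\exp(-x^2)$ and $\phi(x)=(1+\exp(-x))^{-1}$ are real-analytic and non-polynomial, so the scheme recovers the monomials (equivalently the sinusoids) of increasing degree from the single profile $\phi$ via the relation $\partial_w^{k}\phi(wx+b)\big|_{w=0}=x^{k}\phi^{(k)}(b)$, implemented by a $k$-th order divided difference of some step $h>0$. This is the origin of the large parameters: reproducing the degree-$k$ term forces outer coefficients of size comparable to $h^{-k}\binom{k}{j}/\phi^{(k)}(b)$. The key point is that the step $h$ is not free: the divided-difference remainder, of the form $x^{k}\big(\phi^{(k)}(\xi x+b)-\phi^{(k)}(b)\big)$, must be kept below the targeted accuracy uniformly on $[-1,1]$, and because the relevant derivative norms $\|\phi^{(k+1)}\|_\infty$ are themselves large for both activations, this pins $h$ below a constant $h_0<1$ for the top degree $k=n\asymp N$. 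Consequently at least one coefficient of $f_N$ has modulus at least $h_0^{-n}=c^{N}$ for a suitable $c>1$, which is the desired $\Omega(c^N)$ growth.

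The step that needs the most care, and where the special form of $f$ in \eqref{special_f} enters, is ruling out cancellation: one must guarantee that the degree-$n$ content of $f_N$ is genuinely present with a coefficient bounded away from zero, so that the large divided-difference coefficients cannot be rearranged away. Here I would use that the trigonometric approximant of the prescribed $f$ carries all frequencies up to $n$ with non-negligible weight, forcing the leading block of coefficients in the construction to be of full size; quantifying this, together with making the bound $h\le h_0$ rigorous against the fast-growing derivative norms of $\phi$ (the Hermite-type growth of $\|\phi^{(k)}\|_\infty$ for the Gaussian, and the analogous pole-induced growth for the logistic), is the main obstacle. I would also remark that, since both activations are analytic, the exponential growth can alternatively be read off from the global structure of $f_N$ — a pole with exponentially large residue for the meromorphic logistic realization, and the complex growth rate for the entire Gaussian realization — which is the route to take if one wants the lower bound to hold for the minimal realization $\mathcal{P}(f_N)$ rather than for the constructed parameters; the Gaussian case, lacking poles, is then the delicate one.
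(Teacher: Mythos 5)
Your outline correctly identifies the skeleton of the paper's own argument: inspect the explicit coefficients $V_k(f)\tau_{k,p}(\phi^{(p)}(b))^{-1}h^{-p}\binom{p}{r}$ of the construction \eqref{mhaskar_neural_network}, lower-bound the top-degree one, and make sure neither the frequency content of $f$ nor the derivative values of $\phi$ can destroy it. However, the one quantitative inference you actually make is invalid, and the two steps you defer as ``the main obstacle'' are precisely the substance of the proof, so the proposal has a genuine gap. Concretely: from ``$h$ is pinned below a constant $h_0<1$'' you conclude that some coefficient has modulus at least $h_0^{-n}=c^N$, but this ignores the factor $(\phi^{(p)}(b))^{-1}$. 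For both activations the derivatives can grow factorially (Hermite-type growth $\Vert\phi^{(p)}\Vert_{L^\infty}\sim p!$ for the Gaussian), and an exponential $h_0^{-n}$ divided by a factorial tends to zero superexponentially; so a constant upper bound on $h$ proves nothing. What saves the argument in the paper is that Mhaskar's construction forces $h\leq \delta/(3ms)$, i.e., $h^{-1}\geq 3m/\delta$ grows \emph{linearly} in $m$, so the top coefficient carries the superexponential factor $(3m/\delta)^{2m}$; this must then be played against \emph{proved} factorial upper bounds on the derivatives, namely $\Vert\phi^{(p)}\Vert_{L^\infty(\mathbb{R})}\leq p!$ for $\phi(x)=\exp(-x^2)$ (by induction via the Hermite recursion) and $\Vert\phi^{(2m)}\Vert_{L^\infty(\mathbb{R})}\leq (2m)!\,2^{-2m-1}$ for the logistic function (via the recursion $P_{p+1}=P_p'P_1$ for $\phi^{(p)}=P_p(\phi)$), after which Stirling's formula gives the exponential ratio $(3m/\delta)^{2m}/(2m)!\sim (3e/(2\delta))^{2m}/\sqrt{4\pi m}$. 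Without these upper bounds the sign of the exponent is simply undetermined.

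The second deferred step also cannot be waved through: ``the approximant carries all frequencies with non-negligible weight'' is exactly what must be computed for the specific $f$ in \eqref{special_f}. The paper does this by passing to $f^*(t)=f(2\cos t)$, which is the explicit piecewise parabola $\pi^2/4-t^2$ on $[-\pi/2,\pi/2]$ and $0$ elsewhere, showing $\vert\hat{f^*}(k)\vert\geq k^{-3}$ and hence $\min_{0\leq k\leq 2m}\vert V_k(f)\vert\geq \tfrac14 m^{-3}$ via the explicit relation \eqref{VKestimation} between $V_k(f)$ and the Fourier coefficients; a polynomial (not constant) lower bound suffices, but it must be established, since for generic $f\in W^{1,\infty}$ the coefficients $V_k(f)$ can vanish and the exponential lower bound is then false. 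Finally, your closing suggestion of reading off growth from poles/entire-function growth of a minimal realization is speculative and not needed: the theorem (and the paper's proof) concerns the parameters of the networks as constructed in \eqref{mhaskar_neural_network}, so the coefficient-level analysis is both necessary and sufficient once the two quantitative inputs above are in place.
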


Now we move to the case of deep-neural-network approximation and the main result of this paper. An important result in this context is provided in \cite{lu_main}, which shows via an explicit construction, that fully connected feed forward neural networks with ReLU activation functions, with width of order $N\log(8N)$ and depth of order $L\log(4L)$, can approximate functions $f\in \mathcal{C}^q([0,1]^d)$ with an error of order $\Vert f\Vert_{\mathcal{C}^q([0,1]^d)}(NL)^{-2q/d}$. That is in \eqref{approximation} it holds $\mathcal{X}= \mathcal{C}^q([0,1]^d), \mathcal{Y}=L^\infty([0,1]^d)$ and $\alpha_\mathcal{X}(N,L)=(NL)^{-2q/d}$. The main result in \cite{lu_main} reads as follows
	
	\begin{theorem}{\cite[Theorem 1.1]{lu_main}}
		\label{lu_param_growth}
		For $f\in \mathcal{C}^q([0,1]^d)$ with $q\in\mathbb{N}^+$ there exists some ReLU generated neural network $\phi$ with width $\mathcal{W}(\phi)=\mathcal{O}(N\log N)$ and depth $\mathcal{D}(\phi)=\mathcal{O}(L\log L)$ such that
		\[
		\Vert f-\phi\Vert_{L^\infty([0,1]^d)}\lesssim \Vert f\Vert_{\mathcal{C}^q([0,1]^d)}N^{-2q/d}L^{-2q/d}.
		\]
	\end{theorem}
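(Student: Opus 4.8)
The plan is to reproduce the explicit construction underlying \cite[Theorem 1.1]{lu_main}, whose guiding principle is a piecewise Taylor approximation on a uniform grid, with the number of pieces tuned to the representational capacity of width-$N$, depth-$L$ ReLU networks. Concretely, I would partition $[0,1]^d$ into $K^d$ congruent subcubes of side $1/K$ and, on each subcube $Q$, replace $f$ by its degree-$(q-1)$ Taylor polynomial $P_Q$ centered at the corner $x_Q$ of $Q$. Since $f\in\mathcal{C}^q([0,1]^d)$, the remainder satisfies $\Vert f-P_Q\Vert_{L^\infty(Q)}\lesssim \Vert f\Vert_{\mathcal{C}^q([0,1]^d)}K^{-q}$. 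The key numerology is that such networks can represent on the order of $N^2L^2$ distinct pieces; choosing $K^d\sim N^2L^2$, i.e. $K\sim (NL)^{2/d}$, yields a Taylor error of order $\Vert f\Vert_{\mathcal{C}^q([0,1]^d)}(NL)^{-2q/d}$, matching the claimed rate. It then remains to realize the map $x\mapsto P_{Q(x)}(x)$, where $Q(x)$ is the subcube containing $x$, by a ReLU network of the stated width $\mathcal{O}(N\log N)$ and depth $\mathcal{O}(L\log L)$.

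I would assemble this map from three subnetworks. First, a \emph{localization} network that, coordinatewise, approximates the step map $[0,1]\to\{0,\dots,K-1\}$ identifying which subinterval a coordinate lies in; composing over the $d$ coordinates yields the cube index. Because ReLU networks cannot realize exact step functions, accuracy degrades on a thin \emph{trifling region} near the subcube boundaries; I would handle this either by restricting the supremum estimate to the complement of this region of arbitrarily small measure, or by combining a few shifted grids so that every point lies well inside at least one grid, thereby recovering a genuine $L^\infty([0,1]^d)$ bound. Second, a \emph{coefficient-retrieval} network that maps the integer cube index to the finitely many Taylor coefficients $\partial^\alpha f(x_Q)$ for $|\alpha|\le q-1$. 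Third, a \emph{polynomial-evaluation} network that forms $P_Q(x)=\sum_{|\alpha|\le q-1}\frac{1}{\alpha!}\,c_\alpha\,(x-x_Q)^\alpha$ from the retrieved coefficients $c_\alpha$ and the displacement $x-x_Q$.

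For the third block the essential tool is approximate multiplication: ReLU networks approximate $t\mapsto t^2$ on a bounded interval with error $2^{-2L}$ using depth $\mathcal{O}(L)$ via the sawtooth/self-composition construction (cf. \cite[Lemma 4.2]{lu_main}), and the polarization identity $st=\tfrac12\bigl((s+t)^2-s^2-t^2\bigr)$ then yields an approximate product; composing $\mathcal{O}(q)$ such products builds every monomial $(x-x_Q)^\alpha$ with $|\alpha|\le q-1$ while keeping depth and width at the asserted orders. The associated multiplication error must be driven below the Taylor error, which is possible since each factor lies in a bounded range and only finitely many (for fixed $q,d$) multiplications occur per monomial.

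The step I expect to be the genuine obstacle is the coefficient-retrieval block, i.e. the \emph{bit-extraction / point-fitting} lemma asserting that a ReLU network of width $\mathcal{O}(N)$ and depth $\mathcal{O}(L)$ can interpolate $\mathcal{O}(N^2L^2)$ prescribed values at $\mathcal{O}(N^2L^2)$ prescribed points. This is precisely the mechanism that upgrades the naive resolution $\sim NL$ (achievable by width or depth alone) to the \emph{squared} resolution $\sim N^2L^2$, and hence the exponent $2q/d$ rather than $q/d$. Establishing it requires encoding the target values in a binary representation and extracting the relevant bits by iterated compositions that exploit depth quadratically, together with a careful accounting of the logarithmic overhead that produces the factors $\log N$ and $\log L$ in $\mathcal{W}(\phi)$ and $\mathcal{D}(\phi)$. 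Once this lemma is in hand the remainder is routine: concatenate the three blocks, verify that the total width is $\mathcal{O}(N\log N)$ and depth $\mathcal{O}(L\log L)$, and bound the overall error by the sum of the Taylor, multiplication, and localization contributions, each of order $\Vert f\Vert_{\mathcal{C}^q([0,1]^d)}(NL)^{-2q/d}$.
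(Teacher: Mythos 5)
Your proposal reconstructs essentially the same argument as the construction from \cite{lu_main} that the paper follows for this theorem: piecewise Taylor approximation on $\sim N^2L^2$ subcubes, step-function localization with a trifling region, bit-extraction/point-fitting to retrieve the Taylor coefficients (correctly identified as the mechanism behind the exponent $2q/d$ and the $\log$ factors), sawtooth-based approximate multiplication for the monomials, and removal of the trifling region by combining shifted evaluations (the median trick of \cite[Theorem 2.1]{lu_main}). The only loose points are quantitative details rather than gaps: the squaring error available at width $\mathcal{O}(N)$, depth $\mathcal{O}(L)$ is of the form $N^{-L}$ (not merely $2^{-2L}$), which is what guarantees the multiplication error stays below the Taylor error for all regimes of $N$ and $L$, and the shift size $\delta$ must be taken small enough that $d\,\omega_f(\delta)\lesssim (NL)^{-2q/d}$.
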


	Based on this approximation, the main result of this section is as follows.
	\begin{theorem}
		\label{theorem_lu_param_growth}
		Let $d,q\in \mathbb{N}$ and $f\in \mathcal{C}^q([0,1]^d)$. Then for any $N,L\in \mathbb{N}$ there exists a ReLU feed forward neural network $f_{N,L}$ with width $C_1N\log(8N)$ and depth $C_2L^2\log(4L)$ such that
		\[
		\Vert f-f_{N,L}\Vert_{L^\infty([0,1]^d)}\lesssim\Vert f\Vert_{\mathcal{C}^q([0,1]^d)}N^{-2q/d}L^{-2q/d}
		\]
		with $C_1,C_2>0$ independent of $f, N$ and $L$. The parameters of the $f_{N,L}$ grow asymptotically as $$\mathcal{O}(\max(N^{(6q-3)/d}L^{(6q-2)/d}, NL(N+L^2))).$$
	\end{theorem}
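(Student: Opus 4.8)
The plan is to start from the explicit network construction underlying Theorem \ref{lu_param_growth} in \cite{lu_main}, decompose it into its constituent sub-networks, and bound the supremum norm of the weights of one concrete realization; since $\mathcal{P}(f_{N,L})$ is the infimum of $\Vert\theta\Vert_\infty$ over all realizations, any such bound is an upper bound for $\mathcal{P}(f_{N,L})$. The construction consists, roughly, of (i) a localization/quantization step that identifies, for each input $x$, the cell of a partition of $[0,1]^d$ into $K^d$ cubes with $K\sim (NL)^{2/d}$; (ii) a data-fitting (bit-extraction) sub-network that retrieves the Taylor coefficients $\partial^\alpha f(x_j)/\alpha!$ associated with that cell; and (iii) an evaluation step forming the local Taylor polynomial via the approximate-multiplication network of \cite[Lemma 4.2]{lu_main}. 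For steps (i) and (iii) the weights are benign: the coefficients are bounded by $\Vert f\Vert_{\mathcal{C}^q([0,1]^d)}$, the sawtooth composition approximating $x^2$ (hence products) uses weights bounded by an absolute constant, and the localization and assembly weights scale only polynomially with the architecture, producing the $NL(N+L^2)$ contribution. The first step is therefore to verify these polynomial bounds by straightforward successive estimation.

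The main obstacle is step (ii). The efficiency of \cite{lu_main} --- achieving the squared rate $(NL)^{-2q/d}$ with width only $\mathcal{O}(N\log N)$ and depth only $\mathcal{O}(L\log L)$ --- rests on a bit-extraction device that encodes $\sim (NL)^2$ prescribed values in the binary expansion of a single real number and extracts the relevant block by multiplying by a large power of two. Extracting $\mathcal{O}(L)$ bits within a bounded number of layers forces weights of magnitude $2^{\mathcal{O}(L)}$, i.e. exponential in the depth parameter; this is exactly the source of the uncontrolled growth we wish to eliminate.

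The fix, and the technical heart of the argument, is to replace this extraction by an iterative one: extract a single bit (or an $\mathcal{O}(1)$-size block of bits) per layer using weights bounded independently of $L$, carrying along the running partial value and the not-yet-processed remainder in bounded additional width. Extracting the full $\mathcal{O}(L)$ bits then costs $\mathcal{O}(L)$ extra layers rather than $\mathcal{O}(1)$, which multiplies the depth by a factor $\mathcal{O}(L)$ and turns $L\log L$ into $L^2\log L$, while leaving the width at order $N\log N$ and, crucially, realizing the same function so that the error bound of Theorem \ref{lu_param_growth} is inherited verbatim. The weights remaining in the modified extraction are then governed by the precision $K^{-q}$ to which coefficients must be stored and by the localization scale $K$; tracking these powers of $K\sim (NL)^{2/d}$ through steps (i)--(iii) yields the factor $N^{(6q-3)/d}L^{(6q-2)/d}$.

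Finally I would collect the bounds: the modified network is one admissible realization of $f_{N,L}$, so $\mathcal{P}(f_{N,L})$ is bounded by the maximum over all sub-networks of their weight bounds, namely $\mathcal{O}(\max(N^{(6q-3)/d}L^{(6q-2)/d},\,NL(N+L^2)))$. Beyond the bit-extraction redesign, the remaining delicate points are the bookkeeping that keeps the width at $C_1N\log(8N)$ and the depth at $C_2L^2\log(4L)$ after inserting the iterative extraction, and the verification that the exponents produced by the powers of $K$ match the claimed $(6q-3)/d$ and $(6q-2)/d$.
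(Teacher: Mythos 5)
Your overall strategy matches the paper's: follow the explicit construction of \cite[Theorems 1.1, 2.1, 2.2]{lu_main}, bound the weights of one concrete realization (which bounds $\mathcal{P}$ from above), and, crucially, repair the bit-extraction device by replacing the one-shot multiplication by $2^L$ with an iterative, bounded-weight computation spread over $\mathcal{O}(L)$ extra layers. This is exactly the paper's modification of \cite[Lemma 3.5]{shen_external2} (a width-$2$, depth-$L$ doubling network with $\mathcal{P}=2$), and it is what turns the depth $L\log L$ into $L^2\log(4L)$ in both your sketch and Theorem \ref{theorem_lu_param_growth}. Up to this point your proposal is sound and coincides with the paper.

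However, your bookkeeping of where the two terms in the final bound come from is incorrect, and the error hides a missing ingredient. In the paper, the term $NL(N+L^2)$ is produced precisely by the (repaired) bit-extraction/point-fitting networks $\phi_\alpha$ of \cite[Lemma 5.7, Proposition 4.4]{lu_main}, not by ``localization and assembly'' as you claim: the stored Taylor data enter with weights $2^{-j}\leq 1$ and values bounded by $\Vert f\Vert_{\mathcal{C}^q([0,1]^d)}$, so the storage precision never inflates the weight magnitudes. Conversely, the $q$-dependent term $N^{(6q-3)/d}L^{(6q-2)/d}$ comes from the localization network $\Psi$, and its origin is a mechanism absent from your proposal: the step function of \cite[Proposition 4.3]{lu_main} can only be realized exactly outside a trifling region \eqref{trifling_region} of width $\delta$; the approximation must then be extended to all of $[0,1]^d$ via the median construction of \cite[Theorem 2.1]{lu_main}; and recovering the rate $N^{-2q/d}L^{-2q/d}$ on the whole cube forces $\delta \lesssim d^{-1}\tilde{L}^{-1}N^{-2q/d}L^{-2q/d}$, i.e. $c \gtrsim N^{2(q-1)/d}L^{2(q-1)/d}$ as in \eqref{cchoice}. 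Since the interpolation analysis (Lemma \ref{lemm:inequi_2}) gives $\mathcal{P}(\Psi)\lesssim c^3N^{3/d}L^{4/d}$ --- the step-function slopes near the cell boundaries scale like powers of $1/(R\delta)$ --- this choice of $c$ yields exactly $N^{(6q-3)/d}L^{(6q-2)/d}$. Your alternative explanation (powers of $K\sim(NL)^{2/d}$ arising from coefficient precision) cannot produce this bound: any power of $K$ gives equal exponents in $N$ and $L$, whereas the true bound has the asymmetric exponents $(6q-3)/d$ and $(6q-2)/d$ inherited from the $N^{3/d}L^{4/d}$ geometry of the step-function grid. Without the trifling-region/extension step and the accompanying choice of $\delta$ (equivalently $c$), your argument contains no mechanism generating any $q$-dependent weight growth at all.
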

	In order to prove Theorem \ref{theorem_lu_param_growth}, we follow the construction for the proof of Theorem \ref{lu_param_growth}, which is based on the two main auxiliary results \cite[Theorem 2.1 and Theorem 2.2]{lu_main} as follows.
\begin{enumerate}
\item The result in \cite[Theorem 2.2]{lu_main} gives a constructive proof for approximating a given, sufficiently regular function on $[0,1]^d$ with an approximation error of order $(NL)^{-2q/d}$ with a ReLU-neural-network with width $\mathcal{O}(N\log N)$ and depth $\mathcal{O}(L\log L)$ outside a trifling region
	\begin{align}
		\label{trifling_region}
		\Omega([0,1]^d, R,\delta):=\bigcup_{i=1}^d\bigg\{x\in [0,1]^d: x_i\in \bigcup_{k=1}^{R-1}\bigg(\frac{k}{R}-\delta, \frac{k}{R}\bigg)\bigg\}.
	\end{align}
\item The result \cite[Theorem 2.1]{lu_main} then shows how such an approximation can be extended to approximate the function on all of $[0,1]^d$.
\item Finally, in the main result \cite[Theorem 1.1]{lu_main}, the trifling region is chosen small enough, which infers the final asymptotic behavior of the network parameters in terms of $N$ and $L$.
\end{enumerate}	
Accordingly, our proof of Theorem \ref{theorem_lu_param_growth} is divided into three subsections corresponding to the steps 1) - 3) above, where the main effort lies in the first step.

\subsection{Estimation of $\mathcal{P}(\phi)$ in \cite[Theorem 2.2]{lu_main}:} The approximating neural network of \cite[Theorem 2.2]{lu_main} is given by
	\[
		\phi(x):=\sum_{\Vert\alpha\Vert_1\leq q-1}\varphi(\frac{1}{\alpha!}\phi_\alpha(\Psi(x)), P_\alpha(x-\Psi(x)))
	\]
	for $x\in \mathbb{R}^d$, where the role of the subnetworks $\Psi, P_\alpha, \phi_\alpha, \varphi$ is as follows:
	\begin{itemize}
		\item The ReLU FNN $\Psi$ realizes projections of subcubes of $[0,1]^d$ to exactly one corner of the subcube based on one-dimensional step functions $\psi$ (see considerations on \cite[Proposition 4.3]{lu_main} for $\psi$)
		\item The ReLU FNN $P_\alpha$ achieves an approximation of multinomials of order at most $q-1$ (see \cite[Proposition 4.1]{lu_main}).
		\item The ReLU FNN $\phi_\alpha$ achieves fitting partial derivatives of $f$ of order at most $q-1$ at the corners of the subcubes to which $\Psi$ projects to (see \cite[Proposition 4.4]{lu_main}).
		\item The ReLU FNN $\varphi$ approximates binomials (see \cite[Lemma 4.2]{lu_main}).
	\end{itemize}
The approximation error for $\phi$ as above is estimated in \cite[Step 3, p. 25 ff]{lu_main}, which essentially relies on the triangle inequality and the approximation properties of the single components.\\
Figure \ref{fig:overview} provides an overview of the relevant subresults derived in \cite{lu_main,shen_external1,shen_external2} for constructing $\phi$ as above. In the following we will not give and explain each of these subresults in detail, but rather refer to the original references \cite{lu_main, shen_external1, shen_external2}.
	
	\tikzset{every picture/.style={line width=0.75pt}} %
	\tikzset{every picture/.style={line width=0.75pt}} %
	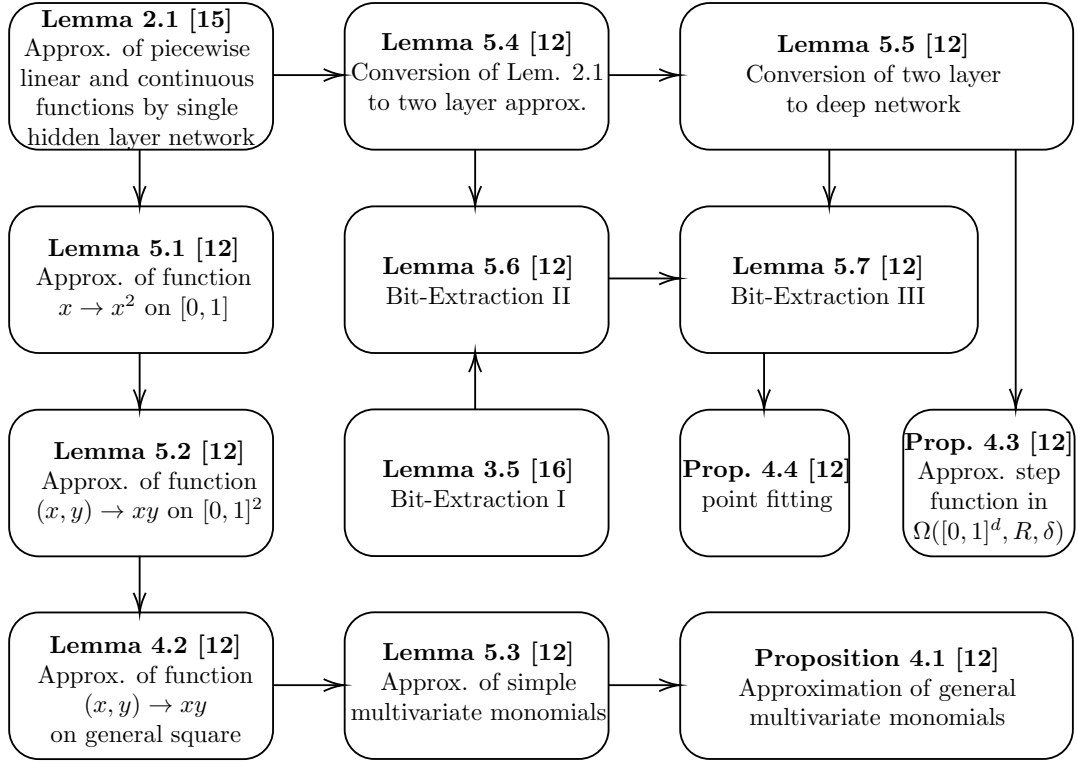
\begin{figure}[h!]
		\centering
		\tikzset{every picture/.style={line width=0.75pt}} %
		\scalebox{.93}{
		\begin{tikzpicture}[x=0.75pt,y=0.75pt,yscale=-1,xscale=1]
			\draw   (10,26.87) .. controls (10,18.1) and (17.1,11) .. (25.87,11) -- (135.72,11) .. controls (144.48,11) and (151.58,18.1) .. (151.58,26.87) -- (151.58,74.47) .. controls (151.58,83.23) and (144.48,90.33) .. (135.72,90.33) -- (25.87,90.33) .. controls (17.1,90.33) and (10,83.23) .. (10,74.47) -- cycle ;
			\draw   (10,136.87) .. controls (10,128.1) and (17.1,121) .. (25.87,121) -- (135.72,121) .. controls (144.48,121) and (151.58,128.1) .. (151.58,136.87) -- (151.58,184.47) .. controls (151.58,193.23) and (144.48,200.33) .. (135.72,200.33) -- (25.87,200.33) .. controls (17.1,200.33) and (10,193.23) .. (10,184.47) -- cycle ;
			\draw   (10,246.87) .. controls (10,238.1) and (17.1,231) .. (25.87,231) -- (135.72,231) .. controls (144.48,231) and (151.58,238.1) .. (151.58,246.87) -- (151.58,294.47) .. controls (151.58,303.23) and (144.48,310.33) .. (135.72,310.33) -- (25.87,310.33) .. controls (17.1,310.33) and (10,303.23) .. (10,294.47) -- cycle ;
			\draw   (10,356.53) .. controls (10,347.77) and (17.1,340.67) .. (25.87,340.67) -- (135.72,340.67) .. controls (144.48,340.67) and (151.58,347.77) .. (151.58,356.53) -- (151.58,404.13) .. controls (151.58,412.9) and (144.48,420) .. (135.72,420) -- (25.87,420) .. controls (17.1,420) and (10,412.9) .. (10,404.13) -- cycle ;
			\draw   (190,26.87) .. controls (190,18.1) and (197.1,11) .. (205.87,11) -- (315.72,11) .. controls (324.48,11) and (331.58,18.1) .. (331.58,26.87) -- (331.58,74.47) .. controls (331.58,83.23) and (324.48,90.33) .. (315.72,90.33) -- (205.87,90.33) .. controls (197.1,90.33) and (190,83.23) .. (190,74.47) -- cycle ;
			\draw   (190,136.87) .. controls (190,128.1) and (197.1,121) .. (205.87,121) -- (315.72,121) .. controls (324.48,121) and (331.58,128.1) .. (331.58,136.87) -- (331.58,184.47) .. controls (331.58,193.23) and (324.48,200.33) .. (315.72,200.33) -- (205.87,200.33) .. controls (197.1,200.33) and (190,193.23) .. (190,184.47) -- cycle ;
			\draw   (190,246.7) .. controls (190,237.94) and (197.1,230.83) .. (205.87,230.83) -- (315.72,230.83) .. controls (324.48,230.83) and (331.58,237.94) .. (331.58,246.7) -- (331.58,294.3) .. controls (331.58,303.06) and (324.48,310.17) .. (315.72,310.17) -- (205.87,310.17) .. controls (197.1,310.17) and (190,303.06) .. (190,294.3) -- cycle ;
			\draw   (190,356.45) .. controls (190,347.69) and (197.1,340.58) .. (205.87,340.58) -- (315.72,340.58) .. controls (324.48,340.58) and (331.58,347.69) .. (331.58,356.45) -- (331.58,404.05) .. controls (331.58,412.81) and (324.48,419.92) .. (315.72,419.92) -- (205.87,419.92) .. controls (197.1,419.92) and (190,412.81) .. (190,404.05) -- cycle ;
			\draw   (370,26.87) .. controls (370,18.1) and (377.1,11) .. (385.87,11) -- (564.97,11) .. controls (573.73,11) and (580.83,18.1) .. (580.83,26.87) -- (580.83,74.47) .. controls (580.83,83.23) and (573.73,90.33) .. (564.97,90.33) -- (385.87,90.33) .. controls (377.1,90.33) and (370,83.23) .. (370,74.47) -- cycle ;
			\draw   (370,136.87) .. controls (370,128.1) and (377.1,121) .. (385.87,121) -- (513.8,121) .. controls (522.56,121) and (529.67,128.1) .. (529.67,136.87) -- (529.67,184.47) .. controls (529.67,193.23) and (522.56,200.33) .. (513.8,200.33) -- (385.87,200.33) .. controls (377.1,200.33) and (370,193.23) .. (370,184.47) -- cycle ;
			\draw   (370,246.87) .. controls (370,238.1) and (377.1,231) .. (385.87,231) -- (444.8,231) .. controls (453.56,231) and (460.67,238.1) .. (460.67,246.87) -- (460.67,294.47) .. controls (460.67,303.23) and (453.56,310.33) .. (444.8,310.33) -- (385.87,310.33) .. controls (377.1,310.33) and (370,303.23) .. (370,294.47) -- cycle ;
			\draw   (489.67,246.87) .. controls (489.67,238.1) and (496.77,231) .. (505.53,231) -- (565.72,231) .. controls (574.48,231) and (581.58,238.1) .. (581.58,246.87) -- (581.58,294.47) .. controls (581.58,303.23) and (574.48,310.33) .. (565.72,310.33) -- (505.53,310.33) .. controls (496.77,310.33) and (489.67,303.23) .. (489.67,294.47) -- cycle ;
			\draw   (370,356.45) .. controls (370,347.69) and (377.1,340.58) .. (385.87,340.58) -- (564.8,340.58) .. controls (573.56,340.58) and (580.67,347.69) .. (580.67,356.45) -- (580.67,404.05) .. controls (580.67,412.81) and (573.56,419.92) .. (564.8,419.92) -- (385.87,419.92) .. controls (377.1,419.92) and (370,412.81) .. (370,404.05) -- cycle ;
			\draw    (151.67,50) -- (187.67,50) ;
			\draw [shift={(189.67,50)}, rotate = 180] [color={rgb, 255:red, 0; green, 0; blue, 0 }  ][line width=0.75]    (10.93,-3.29) .. controls (6.95,-1.4) and (3.31,-0.3) .. (0,0) .. controls (3.31,0.3) and (6.95,1.4) .. (10.93,3.29)   ;
			\draw    (151.67,380) -- (187.67,380) ;
			\draw [shift={(189.67,380)}, rotate = 180] [color={rgb, 255:red, 0; green, 0; blue, 0 }  ][line width=0.75]    (10.93,-3.29) .. controls (6.95,-1.4) and (3.31,-0.3) .. (0,0) .. controls (3.31,0.3) and (6.95,1.4) .. (10.93,3.29)   ;
			\draw    (331.67,380) -- (367.67,380) ;
			\draw [shift={(369.67,380)}, rotate = 180] [color={rgb, 255:red, 0; green, 0; blue, 0 }  ][line width=0.75]    (10.93,-3.29) .. controls (6.95,-1.4) and (3.31,-0.3) .. (0,0) .. controls (3.31,0.3) and (6.95,1.4) .. (10.93,3.29)   ;
			\draw    (331.67,160) -- (367.67,160) ;
			\draw [shift={(369.67,160)}, rotate = 180] [color={rgb, 255:red, 0; green, 0; blue, 0 }  ][line width=0.75]    (10.93,-3.29) .. controls (6.95,-1.4) and (3.31,-0.3) .. (0,0) .. controls (3.31,0.3) and (6.95,1.4) .. (10.93,3.29)   ;
			\draw    (331.67,50) -- (367.67,50) ;
			\draw [shift={(369.67,50)}, rotate = 180] [color={rgb, 255:red, 0; green, 0; blue, 0 }  ][line width=0.75]    (10.93,-3.29) .. controls (6.95,-1.4) and (3.31,-0.3) .. (0,0) .. controls (3.31,0.3) and (6.95,1.4) .. (10.93,3.29)   ;
			\draw    (80,90) -- (80,119) ;
			\draw [shift={(80,121)}, rotate = 270] [color={rgb, 255:red, 0; green, 0; blue, 0 }  ][line width=0.75]    (10.93,-3.29) .. controls (6.95,-1.4) and (3.31,-0.3) .. (0,0) .. controls (3.31,0.3) and (6.95,1.4) .. (10.93,3.29)   ;
			\draw    (80,200) -- (80,229) ;
			\draw [shift={(80,231)}, rotate = 270] [color={rgb, 255:red, 0; green, 0; blue, 0 }  ][line width=0.75]    (10.93,-3.29) .. controls (6.95,-1.4) and (3.31,-0.3) .. (0,0) .. controls (3.31,0.3) and (6.95,1.4) .. (10.93,3.29)   ;
			\draw    (80,310) -- (80,339) ;
			\draw [shift={(80,341)}, rotate = 270] [color={rgb, 255:red, 0; green, 0; blue, 0 }  ][line width=0.75]    (10.93,-3.29) .. controls (6.95,-1.4) and (3.31,-0.3) .. (0,0) .. controls (3.31,0.3) and (6.95,1.4) .. (10.93,3.29)   ;
			\draw    (260,90) -- (260,119) ;
			\draw [shift={(260,121)}, rotate = 270] [color={rgb, 255:red, 0; green, 0; blue, 0 }  ][line width=0.75]    (10.93,-3.29) .. controls (6.95,-1.4) and (3.31,-0.3) .. (0,0) .. controls (3.31,0.3) and (6.95,1.4) .. (10.93,3.29)   ;
			\draw    (450,90) -- (450,119) ;
			\draw [shift={(450,121)}, rotate = 270] [color={rgb, 255:red, 0; green, 0; blue, 0 }  ][line width=0.75]    (10.93,-3.29) .. controls (6.95,-1.4) and (3.31,-0.3) .. (0,0) .. controls (3.31,0.3) and (6.95,1.4) .. (10.93,3.29)   ;
			\draw    (415,200) -- (415,229) ;
			\draw [shift={(415,231)}, rotate = 270] [color={rgb, 255:red, 0; green, 0; blue, 0 }  ][line width=0.75]    (10.93,-3.29) .. controls (6.95,-1.4) and (3.31,-0.3) .. (0,0) .. controls (3.31,0.3) and (6.95,1.4) .. (10.93,3.29)   ;
			\draw    (550,90) -- (550,228.67) ;
			\draw [shift={(550,230.67)}, rotate = 270] [color={rgb, 255:red, 0; green, 0; blue, 0 }  ][line width=0.75]    (10.93,-3.29) .. controls (6.95,-1.4) and (3.31,-0.3) .. (0,0) .. controls (3.31,0.3) and (6.95,1.4) .. (10.93,3.29)   ;
			\draw    (260,230.67) -- (260,201.67) ;
			\draw [shift={(260,199.67)}, rotate = 90] [color={rgb, 255:red, 0; green, 0; blue, 0 }  ][line width=0.75]    (10.93,-3.29) .. controls (6.95,-1.4) and (3.31,-0.3) .. (0,0) .. controls (3.31,0.3) and (6.95,1.4) .. (10.93,3.29)   ;

			\draw (7,11) node [anchor=north west][inner sep=0.75pt]  [font=\footnotesize]  {$ \begin{array}{c}
					{\displaystyle \textbf{Lemma\ 2.1\ \cite{shen_external1}}}\\
					\text{Approx. of piecewise}\\
					\text{linear and continuous}\\
					\text{functions by single}\\
					\text{ hidden layer network}
				\end{array}$};

			\draw (16,135) node [anchor=north west][inner sep=0.75pt]  [font=\footnotesize]  {$ \begin{array}{c}
					{\displaystyle \textbf{Lemma\ 5.1\ \cite{lu_main}}}\\
					\text{Approx. of function}\\
					x\rightarrow x^2 \ \text{on} \ [0,1]
				\end{array}$};

			\draw (16,245) node [anchor=north west][inner sep=0.75pt]  [font=\footnotesize]  {$ \begin{array}{c}
					{\displaystyle \textbf{Lemma\ 5.2\ \cite{lu_main}}}\\
					\text{Approx. of function}\\
					(x,y)\rightarrow xy \ \text{on} \ [0,1]^2
				\end{array}$};

			\draw (17,350) node [anchor=north west][inner sep=0.75pt]  [font=\footnotesize]  {$ \begin{array}{c}
					{\displaystyle \textbf{Lemma\ 4.2\ \cite{lu_main}}}\\
					\text{Approx. of function}\\
					(x,y)\rightarrow xy\\
					\text{on general square}
				\end{array}$};

			\draw (185,354) node [anchor=north west][inner sep=0.75pt]  [font=\footnotesize]  {$ \begin{array}{c}
					{\displaystyle \textbf{Lemma\ 5.3\ \cite{lu_main}}}\\
					\text{Approx. of simple}\\
					\text{multivariate monomials}
				\end{array}$};

			\draw (393,357) node [anchor=north west][inner sep=0.75pt]  [font=\footnotesize]  {$ \begin{array}{c}
					{\displaystyle \textbf{Proposition\ 4.1\ \cite{lu_main}}}\\
					\text{Approximation of general}\\
					\text{multivariate monomials}
				\end{array}$};

			\draw (185,24) node [anchor=north west][inner sep=0.75pt]  [font=\footnotesize]  {$ \begin{array}{c}
					{\displaystyle \textbf{Lemma\ 5.4\ \cite{lu_main}}}\\
					\text{Conversion of Lem. 2.1}\\
					\text{to two layer approx.}
				\end{array}$};

			\draw (397,25) node [anchor=north west][inner sep=0.75pt]  [font=\footnotesize]  {$ \begin{array}{c}
					{\displaystyle \textbf{Lemma\ 5.5\ \cite{lu_main}}}\\
					\text{Conversion of two layer}\\
					\text{to deep network}
				\end{array}$};

			\draw (202,145) node [anchor=north west][inner sep=0.75pt]  [font=\footnotesize]  {$ \begin{array}{c}
					{\displaystyle \textbf{Lemma\ 5.6\ \cite{lu_main}}}\\
					\text{Bit-Extraction II}
				\end{array}$};

			\draw (202,255) node [anchor=north west][inner sep=0.75pt]  [font=\footnotesize]  {$ \begin{array}{c}
					{\displaystyle \textbf{Lemma\ 3.5\ \cite{shen_external2}}}\\
					\text{Bit-Extraction I}
				\end{array}$};

			\draw (389,145) node [anchor=north west][inner sep=0.75pt]  [font=\footnotesize]  {$ \begin{array}{c}
					{\displaystyle \textbf{Lemma\ 5.7\ \cite{lu_main}}}\\
					\text{Bit-Extraction III}
				\end{array}$};

			\draw (363,255) node [anchor=north west][inner sep=0.75pt]  [font=\footnotesize]  {$ \begin{array}{c}
					{\displaystyle \textbf{Prop.\ 4.4\ \cite{lu_main}}}\\
					\text{point fitting}
				\end{array}$};

			\draw (482,240) node [anchor=north west][inner sep=0.75pt]  [font=\footnotesize]  {$ \begin{array}{c}
					{\displaystyle \textbf{Prop.\ 4.3\ \cite{lu_main}}}\\
					\text{Approx. step}\\
					\text{function in}\\
					\Omega([0,1]^d, R,\delta)
				\end{array}$};

		\end{tikzpicture}
	}\caption{Overview of structure of results in \cite{lu_main}}
		\label{fig:overview}
	\end{figure}
Now we estimate the growth of the parameters in each of these subresults to finally estimate $\mathcal{P}(\phi)$. In accordance with the chain of dependencies depicted in Figure \ref{fig:overview}, we start by analyzing \cite[Lemma 2.1]{shen_external1}. 
	\paragraph{Complexity estimation in \cite[Lemma 2.1]{shen_external1}:} 
	The result in \cite[Lemma 2.1]{shen_external1} shows that the set of continuous piecewise linear functions with $N$ pieces mapping an interval to $\mathbb{R}$ is expressible by a single-hidden-layer ReLU network $\phi$ of width $\mathcal{W}(\phi)=\mathcal{O}(N)$. We restrict ourselves to two cases that occur in the subsequent results in \cite{lu_main, shen_external1, shen_external2}, i.e., the general estimation of the growth of the parameters of the interpolating networks is not necessary. These two cases are covered by the following two lemmata. %
	\begin{lemma}
		\label{lemm:equi_1}
		Let $\tilde{R}\in\mathbb{N}$, $R>0$ and $y_k\in \mathbb{R}$ for $0\leq k\leq \tilde{R}$ be given. Then there exists a single-hidden-layer ReLU network $\phi$ with width $\mathcal{W}(\phi)=\tilde{R}$ such that $\phi(x_k)=y_k$ where $x_k = k/R$ for $0\leq k\leq \tilde{R}$, i.e., the $x_k$ are equidistantly distributed in the interval $[0,\tilde{R}/R]$. Furthermore, the network $\phi$ fulfills
		\[
		\mathcal{P}(\phi)\leq 
		\max(1,\vert x_0\vert, \vert x_{\tilde{R}-1}\vert, \vert y_0\vert, R \max_j\vert (y_{j+2}-y_{j+1})-(y_{j+1}-y_j)\vert).
		\]
	\end{lemma}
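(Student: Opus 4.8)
The plan is to make the piecewise linear interpolant underlying \cite[Lemma 2.1]{shen_external1} completely explicit and then read off and estimate its parameters one layer at a time. Write $x_k=k/R$ and let $s_k:=\frac{y_{k+1}-y_k}{x_{k+1}-x_k}=R(y_{k+1}-y_k)$ denote the slope of the interpolant on $[x_k,x_{k+1}]$. I would take for $\phi$ the single-hidden-layer network
\[
\phi(x)=y_0+s_0\,\relu(x-x_0)+\sum_{k=1}^{\tilde{R}-1}(s_k-s_{k-1})\,\relu(x-x_k).
\]
The node $x_{\tilde{R}}$ does not enter because $\relu(x-x_{\tilde{R}})$ vanishes on the whole domain $[x_0,x_{\tilde{R}}]$, so $\phi$ has exactly $\tilde{R}$ hidden neurons and $\mathcal{W}(\phi)=\tilde{R}$. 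The first step is to verify interpolation: on $[x_{m-1},x_m]$ precisely the neurons with index $<m$ are active, hence the slope there equals $s_0+\sum_{k=1}^{m-1}(s_k-s_{k-1})=s_{m-1}$; combined with $\phi(x_0)=y_0$ this gives $\phi(x_k)=y_k$ for all $k$ by induction over the intervals.

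Next I would read off the parameter configuration $\theta$ realizing $\phi$ and bound $\Vert\theta\Vert_\infty$, which upper bounds $\mathcal{P}(\phi)$. The input weights are all equal to $1$, contributing the entry $1$; the hidden biases are $-x_0,\dots,-x_{\tilde{R}-1}$, whose magnitudes lie between $|x_0|$ and $|x_{\tilde{R}-1}|$, contributing $\max(|x_0|,|x_{\tilde{R}-1}|)$; the output bias is $y_0$, contributing $|y_0|$. For the output weights with $k\ge 1$ one computes $s_k-s_{k-1}=R\big((y_{k+1}-y_k)-(y_k-y_{k-1})\big)$, i.e.\ exactly $R$ times the second difference in the statement (with $j=k-1$ ranging over $0,\dots,\tilde{R}-2$), so these are bounded by $R\max_j|(y_{j+2}-y_{j+1})-(y_{j+1}-y_j)|$. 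Collecting these four groups reproduces four of the five entries of the claimed maximum, and the routine part of the proof is this bookkeeping.

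The step I expect to be the main obstacle is the single remaining output weight $s_0=R(y_1-y_0)$ attached to the boundary neuron at $x_0$: a priori this is a \emph{first} difference and is not directly controlled by the second-difference quantity in the bound. I would not estimate the naive realization but instead exploit the minimization built into the definition of $\mathcal{P}$. Since $x_0=0$, the boundary neuron is simply $s_0\,\relu(x)$, so positive homogeneity $c\,\relu(x)=\relu(cx)$ for $c>0$ lets me rebalance its weight against its input weight, and whenever interior second differences vanish the corresponding neurons carry weight $0$ and become free to split $s_0$ across several units of smaller weight. The delicate point, and where I would concentrate the effort, is to show that after this rebalancing and redistribution the boundary contribution is always dominated by $\max(1,|x_0|,|x_{\tilde{R}-1}|,|y_0|,R\max_j|\cdots|)$; this is the one estimate that does not follow from the explicit construction alone and requires using the slack in $\mathcal{P}$ together with the relation between the first slope and the listed quantities.
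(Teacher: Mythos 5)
Your construction is exactly the paper's. The paper realizes $\phi(x)=W_2\relu(W_1x+b_1)+b_2$ with $W_1=(1,\dots,1)^T$, $b_1=(-x_0,\dots,-x_{\tilde{R}-1})^T$, $b_2=y_0$, and obtains the output weights by resolving the recursion \eqref{arithmet_rec}, namely $w_0=R(y_1-y_0)$ and $w_j=R\big((y_{j+1}-y_j)-(y_j-y_{j-1})\big)$ for $1\leq j\leq \tilde{R}-1$; these are precisely your $s_0$ and $s_k-s_{k-1}$, so your interpolation check and the bookkeeping of the input weights, hidden biases, output bias and interior output weights coincide with the paper's proof (the paper proves \eqref{sol_arithmet_rec} by induction where you read the kinks off directly, which is arguably cleaner). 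The difference lies in the last step: the paper simply asserts the claimed bound, silently leaving the boundary weight $w_0=R(y_1-y_0)$ unaccounted for, whereas you correctly flag that this first difference is not dominated by any entry of the stated maximum. So the obstacle you isolate is not a defect of your argument relative to the paper's; it is a genuine gap in the lemma as stated, which the paper's own proof does not close either.

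However, the route you propose for closing it --- exploiting the minimum in the definition of $\mathcal{P}$, positive homogeneity of $\relu$, and redistributing $s_0$ over neurons whose second differences vanish --- cannot succeed, because the stated inequality is false whenever the first difference is large while the second differences are small. Take $y_k=k$ for $0\leq k\leq\tilde{R}$ and $R>\tilde{R}$: all second differences vanish, $x_0=y_0=0$ and $\vert x_{\tilde{R}-1}\vert<1$, so the right-hand side of the claimed bound equals $1$. But any parameter configuration $\theta$ of a single-hidden-layer ReLU network of width $\tilde{R}$, say $\sum_{k}w_k\relu(v_kx+\beta_k)+c$, has almost-everywhere derivative bounded by $\sum_k\vert w_k\vert\vert v_k\vert\leq\tilde{R}\Vert\theta\Vert_\infty^2$, while interpolating this data forces an average slope of $R$ on some subinterval of length $1/R$; hence every realization of the prescribed width satisfies $\Vert\theta\Vert_\infty\geq\sqrt{R/\tilde{R}}>1$, so $\mathcal{P}(\phi)>1$ no matter how the weights are rebalanced or split. (Re-realization at larger width would evade this, but then the parameter bound would be decoupled from the stated architecture, which, as the paper's introduction itself notes, renders such bounds vacuous.) The correct repair is therefore not a cleverer realization but a weaker statement: the maximum must include the first-difference term $R\vert y_1-y_0\vert$. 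This is harmless downstream, since everything that follows in the paper only uses the remark after the lemma, $\mathcal{P}(\phi)\lesssim\max(X,RY)$ with $X,Y$ as in \eqref{def:xy}; there $RY\geq R\vert y_1-y_0\vert$ dominates $\vert w_0\vert$, and your completed bookkeeping already constitutes a full proof of that estimate.
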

	\begin{proof}
		The network $\phi$ may be realized by $\phi(x)=W_2\relu(W_1x+b_1)+b_2$ where by \cite[Lemma 2.1]{shen_external1} the parameters $W_1, b_1, b_2$ can be chosen as $W_1=(1,\dots, 1)^T\in \mathbb{R}^{\tilde{R}\times 1}, b_1 = (-x_0, \dots, -x_{\tilde{R}-1})^T$. It remains to determine $b_2$ and $W_2$ and finally the asymptotical behavior of the parameters of $\phi$.

It is straightforward to show that, with $b_2 = y_0$ and for $W_2 = (w_0, \dots, w_{\tilde{R}-1})$ with
\[
w_0 =\frac{y_1-y_0}{x_1-x_0} = R(y_1-y_0)
\]
and, for $1 \leq j  \leq \tilde{R}-1$, 
		\begin{equation}
			\label{arithmet_rec}
			w_j = \frac{1}{x_{j+1}-x_j}(y_{j+1}-y_0-\sum_{l=0}^{j-1}w_l(x_{j+1}-x_l)) = R(y_{j+1}-y_0)-\sum_{l=0}^{j-1}w_l(j-l+1) 
		\end{equation}
		it holds that $\phi(x_k)=y_k$ for $0\leq k \leq \tilde{R}$. 
		By resolving the arithmetic recursion \eqref{arithmet_rec} one can easily show that the unique solution of \eqref{arithmet_rec} is given by
		\begin{align}
			\label{sol_arithmet_rec}
			w_j = R((y_{j+1}-y_j)-(y_j-y_{j-1})) ~ ~ \text{for} ~ ~ 1\leq j\leq \tilde{R}-1.
		\end{align}
		This may be proven by induction:\\
		
		\textit{Start:} $w_1 = R(y_2-y_0)-2w_0 = R((y_2-y_1)-(y_1-y_0))$.
		\begin{align*}
			\textit{Step:} \small~ w_{j+1}&=R(y_{j+2}-y_0) -\sum_{l=0}^j w_l (j-l+2)\\
			&=R(y_{j+2}-y_0)-R\sum_{l=1}^j(y_{l+1}-2y_l+y_{l-1})(j-l+2)-R(j+2)(y_1-y_0)\\
			&= R(y_{j+2}-y_0)-R\sum_{l=2}^{j-1}y_l(j-l+3-2(j-l+2)+j-l+1)\\
			&\hspace{0.5cm}-R(3y_j+2y_{j+1}-4y_j-2(j+1)y_1+(j+1)y_0+jy_1)-R(j+2)(y_1-y_0)\\
			&= R(y_{j+2}-2y_{j+1}+y_j)
		\end{align*}
		\normalsize
		As a consequence, we derive that
		\[
			\mathcal{P}(\phi)\leq 
		\max(1,\vert x_0\vert, \vert x_{\tilde{R}-1}\vert, \vert y_0\vert, R \max_j\vert (y_{j+2}-y_{j+1})-(y_{j+1}-y_j)\vert).\qedhere
		\]
	\end{proof}
	Note that by employing the triangle inequality together with
	\begin{align}
		\label{def:xy}
		X := \max_{0\leq k\leq \tilde{R}}\vert x_k\vert ~ ~ ~ \text{ and } ~ ~ ~  Y:=\vert y_0\vert+\max_{0\leq k\leq \tilde{R}-1}\vert y_{k+1}-y_k\vert
	\end{align} we obtain in the previous Lemma that
		\[
			\mathcal{P}(\phi)\lesssim\max(X,RY). 
		\]
	Next we consider a similar result generalized to certain inequidistant grids $\{x_k\}_k$ of the interval $[0, \tilde{R}/R]$. Note that the following Lemma is general enough to cover the instances in \cite{lu_main} used by \cite[Lemma 2.1]{shen_external1}.
	\begin{lemma}
		\label{lemm:inequi_1}
		Let $\tilde{R}\in\mathbb{N}$, $R>0$ and $\delta =\frac{1}{(c+1)R}$ for some $c\in \mathbb{N}$ be such that there exist $m,n\in \mathbb{N}$ with $2\tilde{R}=m(n+1)$ where $n+1=2p$ for some $p\in \mathbb{N}$. Let further the grid points $\{x_k\}_k$ be given as
		\[ x_{2k}=\frac{k}{R} ~ ~ \text{ for } ~ ~  0\leq k\leq \tilde{R}\quad \text{and}\quad x_{2k-1}=\frac{k}{R}-\delta ~ ~ \text{ for } ~ ~ 1\leq k\leq \tilde{R}.
		\] Assume that $y_k\in \mathbb{R}$ for $0\leq k\leq 2\tilde{R}$ are given. Then there exists a single-hidden-layer ReLU network $\phi$ with width $\mathcal{W}(\phi)=2m$ such that $\phi(x_{j(n+1)})=y_{j(n+1)}$ for $j=0,\dots, m$ and $\phi(x_{j(n+1)+n})=y_{j(n+1)+n}$ for $j=0,\dots, m-1$. Furthermore, the network $\phi$ fulfills
		\[
		\mathcal{P}(\phi)\lesssim\max(X, cRY),
		\]
			\begin{align}
			\label{def:xy2}
			\text{where } ~ ~ ~ X := \max_{0\leq k\leq 2\tilde{R}}\vert x_k\vert ~ ~ ~ \text{ and } ~ ~ ~  Y:=\vert y_0\vert+\max_{0\leq k\leq 2\tilde{R}-1}\vert y_{k+1}-y_k\vert.
		\end{align}
	\end{lemma}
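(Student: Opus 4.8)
The plan is to realize $\phi$ as a single-hidden-layer ReLU network $\phi(x)=W_2\relu(W_1x+b_1)+b_2$ exactly as in the proof of Lemma \ref{lemm:equi_1} and \cite[Lemma 2.1]{shen_external1}, and then to bound its parameters. First I would list the $2m+1$ interpolation nodes in increasing order as $\xi_0<\dots<\xi_{2m}$, where $\xi_{2k}=x_{2kp}=kp/R$ with value $\eta_{2k}=y_{2kp}$ (for $0\le k\le m$) and $\xi_{2k+1}=x_{2(k+1)p-1}=(k+1)p/R-\delta$ with value $\eta_{2k+1}=y_{2(k+1)p-1}$ (for $0\le k\le m-1$); here $p=(n+1)/2$ and $\tilde R=mp$. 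The consecutive gaps then alternate between a large one, $\xi_{2k+1}-\xi_{2k}=p/R-\delta$, and a small one, $\xi_{2k+2}-\xi_{2k+1}=\delta$. As in Lemma \ref{lemm:equi_1} I would take $W_1=(1,\dots,1)^T$, $b_1=-(\xi_0,\dots,\xi_{2m-1})^T$ and $b_2=\eta_0=y_0$, so that the $2m$ hidden neurons sit at the $2m$ breakpoints $\xi_0,\dots,\xi_{2m-1}$ and $\mathcal{W}(\phi)=2m$.

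For the output weights $W_2=(w_0,\dots,w_{2m-1})$ I would use the standard continuous-piecewise-linear representation: writing $s_j:=(\eta_{j+1}-\eta_j)/(\xi_{j+1}-\xi_j)$ for the slope on $[\xi_j,\xi_{j+1}]$, the choice $w_0=s_0$ and $w_j=s_j-s_{j-1}$ for $1\le j\le 2m-1$ makes the slope on each $[\xi_j,\xi_{j+1}]$ equal to $s_j$ by telescoping, whence $\phi(\xi_k)=\eta_k$ for all $k$. This is the inequidistant analogue of the recursion \eqref{arithmet_rec}, whose resolved form \eqref{sol_arithmet_rec} is precisely ``$R$ times the second difference''; the same telescoping argument applies verbatim. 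Hence every parameter bound reduces to bounding the slopes $s_j$.

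The crux is to control the two families of slopes. On a small interval, $|s_{2k+1}|=|y_{2(k+1)p}-y_{2(k+1)p-1}|/\delta\le Y/\delta=(c+1)RY$, using $\delta=1/((c+1)R)$ and the definition \eqref{def:xy2} of $Y$. On a large interval the numerator $\eta_{2k+1}-\eta_{2k}=y_{2(k+1)p-1}-y_{2kp}$ telescopes over $2p-1$ consecutive differences, so $|\eta_{2k+1}-\eta_{2k}|\le(2p-1)Y$; since $c\ge 1$ and $p\ge 1$ give $p-\tfrac{1}{c+1}\ge p/2$, the denominator satisfies $p/R-\delta\ge p/(2R)$, whence $|s_{2k}|\le(2p-1)Y\cdot(2R/p)\le 4RY$. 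The essential point, and the step I expect to be the main obstacle, is exactly this cancellation: the factor $2p-1$ arising from summing $y$-increments across the whole large block is killed by the $1/p$ coming from the block width $p/R$, so that no dependence on $p$ (equivalently on $n$) survives and the large-interval slopes stay $\mathcal{O}(RY)$, while only the small-interval slopes carry the factor $c$. Keeping the index bookkeeping between the ordered nodes $\xi_j$ and the original grid indices correct is what makes this telescoping clean.

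Combining the two estimates gives $\max_j|s_j|\le\max(4,c+1)RY\lesssim cRY$, so that $|w_0|\le(c+1)RY$ and $|w_j|\le|s_j|+|s_{j-1}|\lesssim cRY$ for $j\ge 1$; thus $\|W_2\|_\infty\lesssim cRY$. Together with $\|W_1\|_\infty=1$, $\|b_1\|_\infty=\max_j|\xi_j|\le X$ and $|b_2|=|y_0|\le Y$, the definition of $\mathcal{P}$ yields $\mathcal{P}(\phi)\le\max(1,X,\|W_2\|_\infty,Y)\lesssim\max(X,cRY)$, as claimed.
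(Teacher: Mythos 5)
Your proof is correct and takes essentially the same route as the paper: the identical realization $\phi(x)=W_2\relu(W_1x+b_1)+b_2$ with $W_1=(1,\dots,1)^T$, $b_1$ the negated ordered breakpoints, $b_2=y_0$, and (by uniqueness of the interpolating output weights) the same $W_2$ --- your slope-difference formula $w_j=s_j-s_{j-1}$ is exactly the resolved form \eqref{w2_param_inequi} that the paper obtains by solving the triangular system \eqref{lin_sys} via the recursion \eqref{rec:w_equi}. The key estimate is also the same one the paper uses: the factor of order $n\sim 2p$ coming from telescoping the $y$-increments across a large block is cancelled by the block width $p/R-\delta\gtrsim p/R$, so only the small-interval slopes carry the factor $(c+1)R$, yielding $\mathcal{P}(\phi)\lesssim\max(X,cRY)$.
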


	\begin{proof}
	We recall that interpolation is considered only in the points $x_{j(n+1)}$ for $j=0,\dots, m$ and $x_{j(n+1)+n}$ for $j=0,\dots, m-1$. Define
	\begin{align*}
		&z_{2j}=x_{j(n+1)}, ~ v_{2j} = y_{j(n+1)}, ~ ~ ~ \text{for } ~ j=0, \dots, m\\
		&z_{2j-1}=x_{j(n+1)-1}, ~ v_{2j-1} = y_{j(n+1)-1}, ~ ~ ~ \text{for } ~ j=1, \dots, m.´
	\end{align*} 
	Then the network $\phi$, realized in form of $\phi(x)=W_2\relu(W_1x+b_1)+b_2$, has to fulfill the conditions $\phi(z_k)=v_k$ for $0\leq k\leq 2m$. The parameters $W_1, b_1, b_2$ may be chosen similarly as in the proof of Lemma \ref{lemm:equi_1} by $W_1 = (1,\dots, 1)^T\in \mathbb{R}^{2m}$, $b_1 = (-z_0, \dots, -z_{2m-1})^T$ and $b_2 = y_0$. 
	Then $W_2 = (w_0, \dots, w_{2m-1})^T$ is the unique solution of the linear system
	\begin{align}
		\label{lin_sys}
		ZW_2 = V
	\end{align}
	where $Z_{ij}=z_i-z_{j-1}$ if $1\leq j\leq i\leq 2m$ and $0$ else, and $V_i=v_i-v_0$ for $i,j=1,\dots, 2m$. As it holds true that
	\begin{align}
		\label{zx}
	z_{2j}=x_{2pj} = \frac{pj}{R}  ~ ~ \text{ for } ~ ~ 0\leq j\leq m ~ ~ \text{and} ~ ~ z_{2j-1}=x_{2pj-1} = \frac{pj}{R}-\delta ~ ~ \text{for} ~ ~ 1\leq j\leq m
	\end{align}
	the solution of \eqref{lin_sys} is given by $w_0 = (z_1-z_0)^{-1}(v_1-v_0)=(\frac{p}{R}-\delta)^{-1}(v_1-v_0)$ and the recursion
	\begin{align}
		\label{rec:w_equi}
	w_i = (z_{i+1}-z_i)^{-1}(v_{i+1}-v_0-\sum_{j=0}^{i-1}w_j(z_{i+1}-z_j)).
\end{align}
 Noting that $z_{2j+1}-z_{2j}=\frac{p}{R}-\delta=\frac{p(c+1)-1}{R(c+1)}$ and $z_{2j}-z_{2j-1}=\delta$ under the assumption that $\delta = \frac{1}{(c+1)R}$ for some $c\in \mathbb{N}$ (yielding in particular, that $c\delta = \frac{1}{R}-\delta$) the system may be extended to an equidistant system with linearly interpolated target data such that the considerations in the proof of Lemma \ref{lemm:equi_1} are applicable yielding 
	\begin{align}
		\label{w2_param_inequi}
		\notag w_{2j+1}&=(c+1)R(v_{2j+2}-(1+\frac{1}{p(c+1)-1})v_{2j+1}+\frac{1}{p(c+1)-1}v_{2j}),\\
		w_{2j}&=(c+1)R(\frac{1}{(c+1)p-1}v_{2j+1}-(1+\frac{1}{(c+1)p-1})v_{2j}+v_{2j-1}).
	\end{align}
	Indeed, it is straightforward to verify that the explicit representations in \eqref{w2_param_inequi} resolve the recursion in \eqref{rec:w_equi} and hence, solve \eqref{lin_sys}. Consequently, we derive that
		\begin{align}
			\label{lemm2.1_shen_complexity}
			\mathcal{P}(\phi)\lesssim\max(\vert x_0\vert, \vert x_{m(n+1)-1}\vert, \vert y_0\vert, cR \max_{0\leq i\leq 2\tilde{R}-1}\vert y_{i+1}-y_{i}\vert).
		\end{align}
		This follows as $w_{2j+1}$ and $w_{2j}$ for $j=0, \dots, m-1$ given in \eqref{w2_param_inequi} are bounded by 
		\[
		(c+1)R(\frac{1}{p(c+1)-1}\max_{0\leq j\leq m-1}\vert y_{j(n+1)+n}-y_{j(n+1)}\vert+\max_{0\leq i\leq 2\tilde{R}-1}\vert y_{i+1}-y_i\vert)\leq2(c+1)RY.
		\]
		For the last inequality note that $\max_j\vert y_{j(n+1)+n}-y_{j(n+1)}\vert\leq n\max_i\vert y_{i+1}-y_i\vert$ and$$\frac{n}{p(c+1)-1}= \frac{2n}{(n+1)(c+1)-2}\leq \frac{2n}{2n}=1.$$
		Thus, using \eqref{lemm2.1_shen_complexity} together with the definition of $X, Y$ in \eqref{def:xy2} we derive that $$\mathcal{P}(\phi)\lesssim\max(X,cRY).$$\qedhere
\end{proof}
	\paragraph{Complexity estimation in \cite[Lemma 5.4]{lu_main}:} 
	The result in \cite[Lemma 5.4]{lu_main} is the same as \cite[Lemma 2.2]{shen_external1}. It shows that the set of continuous piecewise linear functions with $m(n+1)$ pieces mapping an interval (with increasing breakpoints $x_i$ and corresponding values $y_i\geq 0$ for $0\leq i\leq m(n+1)$) to $\mathbb{R}$ is expressible by a two hidden layer ReLU network $\phi$ whose layers have widths $2m$ and $2n+1$, respectively. The underlying neural network $\phi$ may be realized in the form $\phi(x)=W_3\relu(W_2\relu(W_1 x+b_1)+b_2)+b_3$ where $W_1\in\mathbb{R}^{2m\times 1}, W_2 \in \mathbb{R}^{(2n+1)\times2m}, W_3\in\mathbb{R}^{1\times (2n+1)}$, $b_1\in \mathbb{R}^{2m}, b_2\in\mathbb{R}^{2n+1}, b_3\in \mathbb{R}$ fulfilling $\phi(x_i)=y_i$ for $0\leq i\leq m(n+1)$.

Our result on the growth of parameters of this network is as follows.
	\begin{lemma}
		\label{lemm:inequi_2}
		In the situation of Lemma \ref{lemm:inequi_1} with $y_i\geq 0$ there exists a two hidden layer ReLU network $\phi$ whose layers have widths $2m$ and $2n+1$, respectively, fulfilling $\phi(x_i)=y_i$ for $0\leq i\leq m(n+1)$. Furthermore, it holds true that
		\[
			\mathcal{P}(\phi)\lesssim\max(X, c^3nRY)
		\]
		with $X$ and $Y$ defined as in \eqref{def:xy2}.
	\end{lemma}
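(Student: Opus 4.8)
The plan is to follow the explicit construction underlying \cite[Lemma 5.4]{lu_main} (equivalently \cite[Lemma 2.2]{shen_external1}), which realizes $\phi$ in the two-hidden-layer form $\phi(x)=W_3\relu(W_2\relu(W_1x+b_1)+b_2)+b_3$, and to estimate the four blocks of parameters $W_1,b_1$, $W_2,b_2$ and $W_3,b_3$ successively, reducing the first hidden layer to the situation already handled in Lemma \ref{lemm:inequi_1}. The construction splits the $m(n+1)$ interpolation nodes into $m$ consecutive blocks of $n+1=2p$ nodes each. The first hidden layer, of width $2m$, is responsible for the coarse interpolation at the block endpoints $x_{j(n+1)}$ and the neighbouring nodes $x_{j(n+1)-1}$, i.e., exactly the data $\{z_k,v_k\}$ appearing in the proof of Lemma \ref{lemm:inequi_1}; the second hidden layer, of width $2n+1$, carries the shared within-block profile reproducing the remaining $n$ nodes of every block.

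First I would bound the first-layer parameters. Since $W_1=(1,\dots,1)^T$ and $b_1$ collects the grid points $-z_k$, one has $\|W_1\|_\infty=1$ and $\|b_1\|_\infty\le X$ with $X$ as in \eqref{def:xy2}. The intermediate signal $\relu(W_1x+b_1)$ is bounded by $X$ in modulus, and the coarse contribution it produces is governed, exactly as in Lemma \ref{lemm:inequi_1}, by weights of size $\lesssim \max(X,cRY)$; here the factor $cR$ stems from inverting the finest spacing $\delta=\frac{1}{(c+1)R}$, using the relations $c\delta=\frac1R-\delta$ and $\frac{n}{p(c+1)-1}\le 1$ to absorb the block length $n$.

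The decisive estimate concerns $W_2,b_2$ and the output weights $W_3$. The biases $b_2$ and weights $W_2$ encode the block selection together with the local coordinate inside a block, so their moduli are controlled by $X$ and by spacing ratios of order $cR$. The output weights $W_3$ are, as for single-hidden-layer interpolants (cf. \eqref{sol_arithmet_rec} and \eqref{w2_param_inequi}), second differences of the $y_i$ divided by products of the fine spacings, and since the smallest spacing is $\delta=\frac{1}{(c+1)R}$ each such division contributes a factor $\lesssim cR$. Because the second layer already acts on a signal carrying slope of order $cR$ produced by the first layer, these contributions compose, and together with the $n+1$ pieces per block and the amplitude $\lesssim cRY$ of the intermediate values one is led to weights of size $\lesssim c^3nRY$. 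The hypothesis $y_i\ge 0$ is what guarantees that the arguments of the inner $\relu$ retain the sign required for the composition to reproduce $\phi$ exactly rather than a truncation of it. Taking the maximum over the four parameter blocks then yields $\mathcal{P}(\phi)\lesssim\max(X,c^3nRY)$, as claimed.

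The main obstacle I anticipate is the bookkeeping in the previous paragraph: unlike the single-hidden-layer situation of Lemma \ref{lemm:inequi_1}, here the slope produced by the first layer and the interpolation weights of the second layer must be tracked jointly through the $\relu$-composition, and one must verify that the small spacings $\delta$ enter with precisely the total power giving $c^3n$ rather than a larger power. Exploiting the identities $c\delta=\frac1R-\delta$ and $\frac{n}{p(c+1)-1}\le 1$ in the same manner as for \eqref{w2_param_inequi}, should keep the accumulated factor at $c^3n$ and close the argument.
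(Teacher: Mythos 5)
There is a genuine gap in your proposal, and it sits exactly at the step you flag as "bookkeeping". In the construction of \cite[Lemma 2.2]{shen_external1} that this lemma rests on, the output weights are \emph{not} where the large parameters live: one can take $W_3=(1,1,-1,1,-1,\dots,1,-1)$ and $b_3=0$, so $W_3,b_3$ contribute only $\mathcal{O}(1)$. All the heavy parameters sit in $W_2,b_2$, whose rows and entries are the interpolation weights of inductively constructed single-hidden-layer networks $g_0$, $g_k^+$, $g_k^-$ ($1\leq k\leq n$) built to fit the \emph{residual} functions $f_1=f_0-g_0$, $f_{k+1}=f_k-\relu(g_k^+)+\relu(g_k^-)$. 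Your proposal instead places the divided differences in $W_3$ and argues that slope factors of order $cR$ from the two layers "compose"; but a multiplicative composition of two such factors would produce quantities of order $c^2R^2$, not $c^3nR$, so this heuristic cannot be turned into the stated bound. The assertion that the identities $c\delta=\frac1R-\delta$ and $\frac{n}{p(c+1)-1}\le 1$ keep the total factor at $c^3n$ is exactly the claim that needs proof, and nothing in the proposal supplies it.

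What is genuinely missing is control of the residual values $f_{k,k}^j=f_k(x_{j(n+1)+k})$, which are the data the networks $g_k^{\pm}$ must interpolate. A priori these could grow with $k$ — each step subtracts ReLUs of previous interpolants, so naive iteration over the $n$ steps could amplify magnitudes geometrically — and then no polynomial bound in $c,n,R$ would hold. The paper's proof spends its entire technical core on precisely this point: a one-step recursion for the node values (Lemma \ref{lemma:f_kl_one_step}), its resolution into expressions involving only $f_{1,i}^j$ (Lemma \ref{lemma:fkl_resolved}), and the resulting identity $f_{k,k}^j=f_{1,k}^j-\frac{1}{1-\alpha}f_{1,k-1}^j+\frac{\alpha}{1-\alpha}f_{1,k-2}^j$ for even $k$ (Corollary \ref{corr: f_kkj}), which together with the estimate \eqref{diff_f_1k_estim} gives the uniform bound $\max_j|f_{k,k}^j|\leq 3(c+1)Y$ independently of $k$. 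Only with this bound does the factorization of $c^3nRY$ emerge: one factor $cR$ from applying Lemma \ref{lemm:inequi_1} to the $g_k^{\pm}$ interpolation problems, one factor of order $cn$ from the extrapolation coefficients $\frac{n+1-2\alpha}{2\min(\alpha,1-\alpha)}$ relating the values $f_{k,k}^j$ to the targets of $g_k^{\pm}$ at the block endpoints, and one factor $c$ from the residual bound $\max_j|f_{k,k}^j|\lesssim cY$. Your plan would need to reconstruct this induction (or an equivalent uniform-in-$k$ residual estimate) before the claimed bound follows; as written, the decisive inequality is assumed rather than derived.
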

Inspecting the proof of \cite[Lemma 2.2]{shen_external1} one sees that the asymptotics of the parameter supremum of the parameters of $\phi$ is governed by the parameters $W_2$ and $b_2$. The reason is that $W_1, W_3, b_1, b_3$ can be chosen as $W_1 = (1,\dots,1)^T\in\mathbb{R}^{2m\times 1}$, 
\begin{align*}
 b_1 & = (-x_0, -x_n, -x_{n+1}, -x_{2n+1}, \dots, -x_{m(n+1)-1})\in\mathbb{R}^{2m}, \\
 W_3 & = (1,1,-1,1,-1,\dots, 1,-1)\in \mathbb{R}^{1\times(2n+1)},
 \end{align*}
$b_3=0$ and contribute to the asymptotical behavior of $\mathcal{P}(\phi)$ only in terms of $X$ defined as in \eqref{def:xy}.

Hence, in order to prove Lemma \ref{lemm:inequi_2}, we need to consider the construction of the parameters $W_2$ and $b_2$ in detail. These parameters are given in terms of the parameters of inductively constructed single-hidden-layer ReLU-networks $g_k^+, g_k^-$ for $1\leq k\leq n$ and $g_0$ via
\[
	 	g_0(x)=(W_2)_1\relu(W_1x+b_1)+(b_2)_1 ~ ~ ~ \text{ and }
	 \]
	 \[
	 	g_k^+(x) = (W_2)_{2k}\relu(W_1 x+b_1)+(b_2)_{2k}, ~ ~ g_k^-(x) = (W_2)_{2k+1}\relu(W_1 x+b_1)+(b_2)_{2k+1}
	 \]
	 for $1\leq k\leq n$, where $(W_2)_j$ denotes the $j$-th row of $W_2$.
	 
	 The single-hidden-layer ReLU networks are constructed to be linear on the intervals $[x_{j(n+1)}, x_{j(n+1)+n}]$ for $0\leq j\leq m-1$ and $[x_{j(n+1)-1}, x_{j(n+1)}]$ for $1\leq j\leq m$ such that, in the breakpoints $x_{j(n+1)}$ for $0\leq j\leq m$ and $x_{j(n+1)-1}$ for $1\leq j\leq m$, certain, in the following described, interpolation conditions are met.

Let $f_0$ be the piecewise linear continuous function fitting the data $y_i$ at $x_i$ for $0\leq i\leq m(n+1)$, which is linear on each of the subintervals $[x_{i-1},x_i]$ for $1\leq i\leq m(n+1)$.
The network $g_0$ is constructed such that $g_0(x_{j(n+1)})=y_{j(n+1)}$ for $0\leq j\leq m$ and $g_0(x_{j(n+1)+n})=y_{j(n+1)+n}$ for $0\leq j\leq m-1$. Given $f_0$ and $g_0$ the network $f_1$ is defined by $f_1 := f_0-g_0$.
Following \cite[Lemma 2.2]{shen_external1} the networks $g_k^+$ and $g_k^-$ are constructed inductively for $1\leq k\leq n$ as follows.

 	Assuming $f_{k}$ to be given, the networks $g_k^+$, $g_k^-$ and $f_{k+1}$  are determined by the following conditions.
	For each $0\leq j\leq m$ if $f_k(x_{j(n+1)+k})\geq 0$ then $g_k^+, g_k^-$ must attain values at $x_{j(n+1)}, x_{j(n+1)+n}$ such that $g_k^+(x_{j(n+1)+k})=f_k(x_{j(n+1)+k})$ and $g_k^+(x_{j(n+1)+k-1})=0$ and $g_k^-\equiv 0$ on $[x_{j(n+1)}, x_{j(n+1)+n}]$. Else it holds true that $f_k(x_{j(n+1)+k})< 0$. Then $g_k^+, g_k^-$ must attain values at $x_{j(n+1)}, x_{j(n+1)+n}$ such that $g_k^-(x_{j(n+1)+k})=-f_k(x_{j(n+1)+k})$ and $g_k^-(x_{j(n+1)+k-1})=0$ and $g_k^+\equiv 0$ on $[x_{j(n+1)}, x_{j(n+1)+n}]$. Also, $g_k^+$ and $g_k^-$ must be linear on $[x_{j(n+1)}, x_{j(n+1)+n}]$.\\
	Finally, the function $f_{k+1}$ is defined by
	\[
		f_{k+1} := f_k-\relu(g_k^+)+\relu(g_k^-).
	\]
	
	We proceed by considering the following auxiliary results which provide a more explicit form of $f_k$ and will be useful later for estimating $f_k(x_{j(n+1)+k})$ for $0\leq k,l\leq n$. Recall that these values essentially describe the data which is interpolated in the construction of the $g_k^+$ and $g_k^-$, hence they determine $W_2$ and $b_2$.
	
	Let us introduce the shorthand notation $f_{k,l}^j= f_k(x_{j(n+1)+l})$ for $0\leq k,l\leq n, 0\leq j\leq m$. Then the $f_{k,l}^j$ fulfill the following recursion.
	\begin{lemma}
		\label{lemma:f_kl_one_step}
		In the situation of Lemma \ref{lemm:inequi_1} let $2\leq k\leq l\leq n$ and $\alpha = R\delta$. If
		\begin{itemize}
			\item $k$ is even and $l$ is even then $f_{k,l}^j=f_{k-1,l}^j+\frac{k-l-2}{2(1-\alpha)}f_{k-1,k-1}^j$.
			\item $k$ is even and $l$ is odd then
			$f_{k,l}^j=f_{k-1,l}^j+\frac{k-l-3+2\alpha}{2(1-\alpha)}f_{k-1,k-1}^j$.
			\item $k$ is odd and $l$ is even then
			$f_{k,l}^j=f_{k-1,l}^j+\frac{k-l-1-2\alpha}{2\alpha}f_{k-1,k-1}^j$.
			\item $k$ is odd and $l$ is odd then
			$f_{k,l}^j=f_{k-1,l}^j+\frac{k-l-2}{2\alpha}f_{k-1,k-1}^j$.
		\end{itemize}
	\end{lemma}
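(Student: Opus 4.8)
The plan is to unravel the definition $f_k = f_{k-1}-\relu(g_{k-1}^+)+\relu(g_{k-1}^-)$ and evaluate it at the grid point $x_{j(n+1)+l}$, exploiting the fact that on the interval $[x_{j(n+1)},x_{j(n+1)+n}]$ both $g_{k-1}^+$ and $g_{k-1}^-$ are affine and at most one of them is nonzero. By the defining conditions, the nonzero function among $g_{k-1}^\pm$ is the affine function on $[x_{j(n+1)},x_{j(n+1)+n}]$ that vanishes at $x_{j(n+1)+k-2}$ and attains the value $f_{k-1,k-1}^j$ (if $f_{k-1,k-1}^j\ge 0$, via $g_{k-1}^+$) or $-f_{k-1,k-1}^j$ (if $f_{k-1,k-1}^j< 0$, via $g_{k-1}^-$) at $x_{j(n+1)+k-1}$. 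Its slope is therefore $\pm f_{k-1,k-1}^j$ divided by the gap $x_{j(n+1)+k-1}-x_{j(n+1)+k-2}$.

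First I would record the key sign-unification. For $l\ge k$ the point $x_{j(n+1)+l}$ lies strictly to the right of the zero $x_{j(n+1)+k-2}$ of the nonzero affine piece, whose slope has the sign of $\pm f_{k-1,k-1}^j$; hence the argument of the active $\relu$ is nonnegative and $\relu$ acts as the identity there, while the other $\relu$ contributes $0$. In the case $f_{k-1,k-1}^j\ge 0$ one subtracts $g_{k-1}^+(x_{j(n+1)+l})$, and in the case $f_{k-1,k-1}^j<0$ one adds $g_{k-1}^-(x_{j(n+1)+l})$; since $g_{k-1}^-$ carries the opposite sign of the value, both cases collapse to the single identity
\[
f_{k,l}^j = f_{k-1,l}^j-\frac{f_{k-1,k-1}^j}{x_{j(n+1)+k-1}-x_{j(n+1)+k-2}}\bigl(x_{j(n+1)+l}-x_{j(n+1)+k-2}\bigr),
\]
valid for all $2\le k\le l\le n$ regardless of the sign of $f_{k-1,k-1}^j$. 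This cancellation is what makes the four claimed formulas uniform.

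It then remains to compute the two grid differences appearing above in closed form and simplify. Using that $j(n+1)=2pj$ is even, so $x_{j(n+1)}=pj/R$, and the alternating spacing from Lemma \ref{lemm:inequi_1} (the gap is $(1-\alpha)/R$ stepping from an even to an odd index and $\alpha/R$ stepping from an odd to an even index, with $\alpha=R\delta$), one obtains $R\,(x_{j(n+1)+l}-x_{j(n+1)})$ equal to $l/2$ for even $l$ and to $(l+1)/2-\alpha$ for odd $l$. Substituting these, together with the gap $x_{j(n+1)+k-1}-x_{j(n+1)+k-2}$ (which equals $(1-\alpha)/R$ for even $k$ and $\alpha/R$ for odd $k$), into the displayed identity and splitting into the four parity combinations of $k$ and $l$ reproduces exactly the four stated coefficients $\tfrac{k-l-2}{2(1-\alpha)}$, $\tfrac{k-l-3+2\alpha}{2(1-\alpha)}$, $\tfrac{k-l-1-2\alpha}{2\alpha}$, and $\tfrac{k-l-2}{2\alpha}$.

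The main obstacle is not conceptual but bookkeeping: keeping the parity-dependent closed forms for $x_{j(n+1)+l}$ and for the gap straight across the four cases, and verifying the sign-flip cancellation so that the recursion is genuinely independent of the sign of $f_{k-1,k-1}^j$. The restriction $k\ge 2$ is needed because $f_1=f_0-g_0$ is built without a $\relu$ and with the non-triangular interpolation pattern of $g_0$, so the inductive step only begins once the triangular construction of the $g_k^\pm$ is in force; the restriction $l\ge k$ guarantees that the $\relu$ argument is nonnegative and is exactly the range in which the lemma is applied later.
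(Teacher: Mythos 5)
Your proposal is correct and follows essentially the same route as the paper's proof: unravel the defining recursion $f_k=f_{k-1}-\relu(g_{k-1}^+)+\relu(g_{k-1}^-)$, use the sign symmetry between $g_{k-1}^+$ and $g_{k-1}^-$ to reduce to a single unified identity, and then evaluate the affine function $g_{k-1}^+$ at $x_{j(n+1)+l}$ via the grid arithmetic of Lemma \ref{lemm:inequi_1}, splitting into the four parity cases. The only cosmetic difference is that you parameterize the affine piece in point--slope form through its zero at $x_{j(n+1)+k-2}$, whereas the paper first computes the values of $g_{k-1}^+$ at the interval endpoints $x_{j(n+1)}$ and $x_{j(n+1)+n}$ and then linearly interpolates; both yield the same coefficients.
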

	\begin{proof}
		It holds true by the construction in \cite[Lemma 2.2]{shen_external1} that
		\begin{align}
			\label{f_kl_construction}
			f_{k,l}^j = f_{k-1}(x_{j(n+1)+l})-\relu(g_{k-1}^+(x_{j(n+1)+l}))+\relu(g_{k-1}^-(x_{j(n+1)+l})).
		\end{align}
		We determine $\relu(g_{k-1}^+(x_{j(n+1)+l}))$ and $\relu(g_{k-1}^-(x_{j(n+1)+l}))$ in terms of $f_{k-1}$ based on the setup of Lemma \ref{lemm:inequi_1}.
		
For that, we recall that for $k\geq 1$ if $f_k(x_{j(n+1)+k})\geq 0$ then $g_k^+, g_k^-$ must attain values at $x_{j(n+1)}, x_{j(n+1)+n}$ such that $g_k^+(x_{j(n+1)+k})=f_k(x_{j(n+1)+k})$ and $g_k^+(x_{j(n+1)+k-1})=0$ and $g_k^-\equiv 0$ on $[x_{j(n+1)}, x_{j(n+1)+n}]$. Note that  $g_k^+$ and $g_k^-$ are linear on $[x_{j(n+1)}, x_{j(n+1)+n}]$.  In case $k$ is even, as then
		\[
		x_{j(n+1)+k}=x_{2(pj+k/2)}=\frac{pj+k/2}{R}=\frac{2pj+k}{2R}=\frac{(n+1)j+k}{2R}
		\]
		due to \eqref{zx}, the functions $g_k^+, g_k^-$ must fulfill
		\begin{itemize}
			\item $g_k^+(x_{j(n+1)})=(1-\frac{k}{2R\delta})f_k(x_{j(n+1)+k})$
			\item $g_k^+(x_{j(n+1)+n})=(\frac{2p-k}{2R\delta})f_k(x_{j(n+1)+k})$
			\item $g_k^-(x_{j(n+1)})=g_k^-(x_{j(n+1)+n})=0$.
		\end{itemize}
		In case $k$ is odd, as then
		\[
		x_{j(n+1)+k}=x_{2(pj+(k+1)/2)-1}=\frac{pj+(k+1)/2}{R}-\delta=\frac{(n+1)j+k+1}{2R}-\delta
		\]
		again due to \eqref{zx}, the functions $g_k^+, g_k^-$ must fulfill
		\begin{itemize}
			\item $g_k^+(x_{j(n+1)})=-\frac{k-1}{2(1-R\delta)}f_k(x_{j(n+1)+k})$
			\item $g_k^+(x_{j(n+1)+n})=(\frac{n-k+2(1-R\delta)}{2(1-R\delta)})f_k(x_{j(n+1)+k})$
			\item $g_k^-(x_{j(n+1)})=g_k^-(x_{j(n+1)+n})=0$.
		\end{itemize}
		Note that then independently of the parity of $k$ it holds $g_k^+(x_{j(n+1)+k})=f_k(x_{j(n+1)+k})$ and $g_k^+(x_{j(n+1)+k-1})=0$ as $g_k^+$ is linear on $[x_{j(n+1)}, x_{j(n+1)+n}].$\\
		
		If $f_k(x_{j(n+1)+k})< 0$ then $g_k^+, g_k^-$ must attain values at $x_{j(n+1)}, x_{j(n+1)+n}$ such that $g_k^-(x_{j(n+1)+k})=-f_k(x_{j(n+1)+k})$ and $g_k^-(x_{j(n+1)+k-1})=0$ and $g_k^+\equiv 0$ on $[x_{j(n+1)}, x_{j(n+1)+n}]$. Note that  $g_k^+$ and $g_k^-$ are linear on $[x_{j(n+1)}, x_{j(n+1)+n}]$. The concrete values are similar to those for the case  $f_k(x_{j(n+1)+k})\geq 0$ with exchanged roles of $g_k^+, g_k^-$ and opposite signs.\\
		
		Returning to identity \eqref{f_kl_construction} without loss of generality it suffices to consider the case $f_{k-1}(x_{j(n+1)+k-1})\geq 0$. The reason is that both $ g_{k-1}^+(x_{j(n+1)+l}), g_{k-1}^-(x_{j(n+1)+l})\geq0$ as $l\geq k$, the terms $g_{k-1}^+, g_{k-1}^-$ occur in \eqref{f_kl_construction} with different signs and $g_{k-1}^-(x_{j(n+1)+l})$ corresponds to $g_{k-1}^+(x_{j(n+1)+l})$ when applying the interpolation formula of $g_{k-1}^+$ to $-f_{k-1,k-1}^j$ instead of $f_{k-1,k-1}^j$.\\
		
		As a consequence, we may reduce \eqref{f_kl_construction} to 
		\begin{align}
			\label{f_kl_reduce}
			f_{k,l}^j=f_{k-1,l}^j-g_{k-1}^+(x_{j(n+1)+l}).
		\end{align}
		The next step is to write $g_{k-1}^+(x_{j(n+1)+l})$ as the linear interpolation between $g_{k-1}^+(x_{j(n+1)})$ and $g_{k-1}^+(x_{j(n+1)+n})$. By the previous considerations in case of even $k$ (odd $k-1$) it holds true that
		\[
		g_{k-1}^+(x_{j(n+1)+l})=-\frac{k-2}{2(1-\alpha)}f_{k-1,k-1}^j+\frac{x_{j(n+1)+l}-x_{j(n+1)}}{x_{j(n+1)+n}-x_{j(n+1)}}\frac{n-2\alpha+1}{2(1-\alpha)}f_{k-1,k-1}^j.
		\]
		Inserting even $l$ it yields
		\[
		g_{k-1}^+(x_{j(n+1)+l})=\frac{l-k+2}{2(1-\alpha)}f_{k-1,k-1}^j,
		\]
		whereas odd $l$ imply
		\[
		g_{k-1}^+(x_{j(n+1)+l})=\frac{3+l-k-2\alpha}{2(1-\alpha)}f_{k-1,k-1}^j.
		\]
		In case of odd $k$ (even $k-1$) it holds true that
		\[
		g_{k-1}^+(x_{j(n+1)+l})=(1-\frac{k-1}{2\alpha})f_{k-1,k-1}^j+\frac{x_{j(n+1)+l}-x_{j(n+1)}}{x_{j(n+1)+n}-x_{j(n+1)}}\frac{n-2\alpha+1}{2\alpha}f_{k-1,k-1}^j.
		\]
		Inserting even $l$ it yields
		\[
		g_{k-1}^+(x_{j(n+1)+l})=\frac{l-k+1+2\alpha}{2\alpha}f_{k-1,k-1}^j,
		\]
		whereas odd $l$ imply
		\[
		g_{k-1}^+(x_{j(n+1)+l})=\frac{l-k+2}{2\alpha}f_{k-1,k-1}^j.
		\]
		Together with equation \eqref{f_kl_reduce} this concludes the result.
	\end{proof}
	We proceed by showing how $f_{k,l}^j$ can be reduced to an expression containing only evaluations of the form $f_{1,i}^j$. 
	\begin{lemma}
		\label{lemma:fkl_resolved}
		In the situation of Lemma \ref{lemma:f_kl_one_step} let $2\leq k\leq l \leq n$. If
		\begin{itemize}
			\item $k$ is even and $l$ is even then $f_{k,l}^j=f_{1,l}^j-\frac{l-k+2}{2(1-\alpha)}f_{1,k-1}^j+\frac{l-k+2\alpha}{2(1-\alpha)}f_{1,k-2}^j$.
			\item $k$ is even and $l$ is odd then
			$f_{k,l}^j=f_{1,l}^j-\frac{l-k+3-2\alpha}{2(1-\alpha)}f_{1,k-1}^j+\frac{l-k+1}{2(1-\alpha)}f_{1,k-2}^j$.
			\item $k$ is odd and $l$ is even then
			$f_{k,l}^j=f_{1,l}^j-\frac{l-k+1+2\alpha}{2\alpha}f_{1,k-1}^j+\frac{l-k+1}{2\alpha}f_{1,k-2}^j$.
			\item $k$ is odd and $l$ is odd then
			$f_{k,l}^j=f_{1,l}^j-\frac{l-k+2}{2\alpha}f_{1,k-1}^j+\frac{l-k+2-2\alpha}{2\alpha}f_{1,k-2}^j$.
		\end{itemize}
	\end{lemma}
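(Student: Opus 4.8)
The plan is to prove the identity by induction on $k$, using the one-step recursion of Lemma \ref{lemma:f_kl_one_step} as the inductive mechanism. The key structural observation is that Lemma \ref{lemma:f_kl_one_step} lowers the first index of $f_{k,l}^j$ by one at the cost of introducing the diagonal term $f_{k-1,k-1}^j$; hence, to descend all the way to level $1$, I would resolve \emph{simultaneously} the off-diagonal term $f_{k-1,l}^j$ and the diagonal term $f_{k-1,k-1}^j$ via the inductive hypothesis, and then collect coefficients.

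For the base case $k=2$, a single application of Lemma \ref{lemma:f_kl_one_step} expresses $f_{2,l}^j$ in terms of $f_{1,l}^j$ and $f_{1,1}^j$, which already reproduces the $f_{1,l}^j$- and $f_{1,k-1}^j$-coefficients claimed in Lemma \ref{lemma:fkl_resolved}. The remaining term $f_{1,k-2}^j=f_{1,0}^j$ is accounted for by the fact that $f_1=f_0-g_0$ vanishes at the breakpoints $x_{j(n+1)}$: since $g_0$ is constructed to interpolate $f_0$ there, one has $f_{1,0}^j=f_1(x_{j(n+1)})=0$, so the stated coefficient of $f_{1,0}^j$ is immaterial and the base case follows for both admissible parities of $l$.

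For the inductive step, assume the resolved form holds at level $k-1$ for all admissible second indices. Lemma \ref{lemma:f_kl_one_step} gives $f_{k,l}^j=f_{k-1,l}^j+A(k,l)\,f_{k-1,k-1}^j$, where $A(k,l)$ denotes the corresponding coefficient from that lemma. I then substitute the inductive hypothesis for $f_{k-1,l}^j$ (the branch being fixed by the parities of $k-1$ and $l$) and for $f_{k-1,k-1}^j$ (a diagonal term whose two indices share the parity of $k-1$, hence landing in the even-even or odd-odd branch). This produces a linear combination of the \emph{four} level-$1$ values $f_{1,l}^j,\, f_{1,k-1}^j,\, f_{1,k-2}^j,\, f_{1,k-3}^j$, one more than the claim allows.

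The main obstacle is precisely the bookkeeping of these coefficients: one must verify, in each of the four parity cases for $(k,l)$, that the coefficient of the spurious term $f_{1,k-3}^j$ cancels to zero, and that the coefficients of $f_{1,k-1}^j$ and $f_{1,k-2}^j$ collapse exactly to those in the statement. For instance, when $k$ is even and $l$ is even, one uses the odd-even branch for $f_{k-1,l}^j$ and the odd-odd branch for $f_{k-1,k-1}^j$; a direct computation then gives the $f_{1,k-3}^j$-coefficient $\tfrac{l-k+2}{2\alpha}+\tfrac{k-l-2}{2(1-\alpha)}\cdot\tfrac{1-\alpha}{\alpha}=0$, while the $f_{1,k-2}^j$-coefficient simplifies to $\tfrac{l-k+2\alpha}{2(1-\alpha)}$, matching the claim, and the $f_{1,k-1}^j$-coefficient is read off directly as $A(k,l)$. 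The three remaining parity combinations are dispatched by the same routine, and I expect the only delicate point to be keeping track of the $\alpha$-dependent denominators $2\alpha$ versus $2(1-\alpha)$ as the parity switches between consecutive levels.
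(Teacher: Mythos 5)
Your proposal is correct and takes essentially the same route as the paper's proof: induction on the first index driven by the one-step recursion of Lemma \ref{lemma:f_kl_one_step}, substituting the inductive hypothesis for both the off-diagonal term $f_{k-1,l}^j$ and the diagonal term $f_{k-1,k-1}^j$ (whose branch is fixed by the parity of $k-1$), and checking in each parity case that the coefficient of the spurious term $f_{1,k-3}^j$ cancels while the remaining coefficients collapse to the claimed ones -- your explicit even-even computation matches the paper's algebra. The only cosmetic difference is that the paper additionally writes out $k=3$ as a second base case of its ``bilevel induction,'' which is logically redundant given that your step uses only the immediately preceding level, so your single base case $k=2$ (with $f_{1,0}^j=0$) suffices.
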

	
	\begin{proof}
		We prove the assertion via bilevel induction.\\
		
		\textit{Induction Start:} For $k=2$ and even $l$ it follows by Lemma \ref{lemma:f_kl_one_step} and $f_{1,0}^j=0$ that
		\begin{align*}
			f_{2,l}^j&=f_{1,l}^j-\frac{l}{2(1-\alpha)}f_{1,1}^j=f_{1,l}^j-\frac{l-2+2}{2(1-\alpha)}f_{1,1}^j+\frac{l-2+2\alpha}{2(1-\alpha)}f_{1,0}^j.
		\end{align*}
		Similarly for odd $l$ it follows
		\[
		f_{2,l}^j = f_{1,l}^j-\frac{l-2+3-2\alpha}{2(1-\alpha)}f_{1,1}^j+\frac{l-2+1}{2(1-\alpha)}f_{1,0}^j.
		\]
		
		For $k=3$ and even $l$ we derive
		\begin{align*}
			f_{3,l}^j&=f_{2,l}^j+\frac{2-l-2\alpha}{2\alpha}f_{2,2}^j\\
			&=(f_{1,l}^j-\frac{l}{2(1-\alpha)}f_{1,1}^j)+\frac{2-l-2\alpha}{2\alpha}(f_{1,2}^j-\frac{1}{1-\alpha}f_{1,1}^j)\\
			&=f_{1,l}^j-\frac{l-3+1+2\alpha}{2\alpha}f_{1,2}^j+\frac{l-3+1}{2\alpha}f_{1,1}^j.
		\end{align*}
		Similarly for odd $l$ it follows
		\begin{align*}
			f_{3,l}^j&=f_{2,l}^j+\frac{1-l}{2\alpha}f_{2,2}^j\\
			&=(f_{1,l}^j-\frac{l-2\alpha+1}{2(1-\alpha)}f_{1,1}^j)+\frac{1-l}{2\alpha}(f_{1,2}^j-\frac{1}{1-\alpha}f_{1,1}^j)\\
			&=f_{1,l}^j-\frac{l-3+2}{2\alpha}f_{1,2}^j+\frac{l-3+2-2\alpha}{2\alpha}f_{1,1}^j.
		\end{align*}
		
		\textit{Induction Step:} If $k+1$ is even and $l$ is even we derive by Lemma \ref{lemma:f_kl_one_step} and the induction hypothesis that
		\begin{align*}
			f_{k+1,l}^j&=f_{k,l}^j+\frac{k-l-1}{2(1-\alpha)}f_{k,k}^j\\
			&=(f_{1,l}^j-\frac{l-k+1+2\alpha}{2\alpha}f_{1,k-1}^j+\frac{l-k+1}{2\alpha}f_{1,k-2}^j)\\
			& ~ ~ ~ ~ +\frac{k-l-1}{2(1-\alpha)}(f_{1,k}^j-\frac{1}{\alpha}f_{1,k-1}^j+\frac{1-\alpha}{\alpha}f_{1,k-2}^j)\\
			&=f_{1,l}^j-\frac{l-k+1}{2(1-\alpha)}f_{1,k}^j+\frac{l-k+2\alpha-1}{2(1-\alpha)}f_{1,k-1}^j.
		\end{align*}
		If $k+1$ is even and $l$ is odd we derive
		\begin{align*}
			f_{k+1,l}^j&=f_{k,l}^j+\frac{k-l-2+2\alpha}{2(1-\alpha)}f_{k,k}^j\\
			&=(f_{1,l}^j-\frac{l-k+2}{2\alpha}f_{1,k-1}^j+\frac{l-k+2-2\alpha}{2\alpha}f_{1,k-2}^j)\\
			& ~ ~ ~ ~ +\frac{k-l-2+2\alpha}{2(1-\alpha)}(f_{1,k}^j-\frac{1}{\alpha}f_{1,k-1}^j+\frac{1-\alpha}{\alpha}f_{1,k-2}^j)\\
			&=f_{1,l}^j-\frac{l-k+2-2\alpha}{2(1-\alpha)}f_{1,k}^j+\frac{l-k}{2(1-\alpha)}f_{1,k-1}^j.
		\end{align*}
		If $k+1$ is odd and $l$ is even we derive
		\begin{align*}
			f_{k+1,l}^j&=f_{k,l}^j+\frac{k-l-2\alpha}{2\alpha}f_{k,k}^j\\
			&=(f_{1,l}^j-\frac{l-k+2}{2(1-\alpha)}f_{1,k-1}^j+\frac{l-k+2\alpha}{2(1-\alpha)}f_{1,k-2}^j)\\
			& ~ ~ ~ ~ +\frac{k-l-2\alpha}{2\alpha}(f_{1,k}^j-\frac{1}{1-\alpha}f_{1,k-1}^j+\frac{\alpha}{1-\alpha}f_{1,k-2}^j)\\
			&=f_{1,l}^j-\frac{l-k+2\alpha}{2\alpha}f_{1,k}^j+\frac{l-k}{2\alpha}f_{1,k-1}^j.
		\end{align*}
		If $k+1$ is odd and $l$ is odd we obtain
		\begin{align*}
			f_{k+1,l}^j&=f_{k,l}^j+\frac{k-l-1}{2\alpha}f_{k,k}^j\\
			&=(f_{1,l}^j-\frac{l-k+3-2\alpha}{2(1-\alpha)}f_{1,k-1}^j+\frac{l-k+1\alpha}{2(1-\alpha)}f_{1,k-2}^j)\\
			& ~ ~ ~ ~ +\frac{k-l-1}{2\alpha}(f_{1,k}^j-\frac{1}{1-\alpha}f_{1,k-1}^j+\frac{\alpha}{1-\alpha}f_{1,k-2}^j)\\
			&=f_{1,l}^j-\frac{l-k+1}{2\alpha}f_{1,k}^j+\frac{l-k+1-2\alpha}{2\alpha}f_{1,k-1}^j.
		\end{align*}
		Thus, the assertion holds also for $k+1$ which finishes the proof.
	\end{proof}
	The case $k=l$ yields the following identities.
	\begin{corollary}
		\label{corr: f_kkj}
		In the situation of Lemma \ref{lemma:f_kl_one_step} for even $k$ it holds true that
		\[
		f_{k,k}^j = f_{1,k}^j-\frac{1}{1-\alpha}f_{1,k-1}^j+\frac{\alpha}{1-\alpha}f_{1,k-2}^j
		\]
		and for odd $k$ that
		\[
		f_{k,k}^j=f_{1,k}^j-\frac{1}{\alpha}f_{1,k-1}^j+\frac{1-\alpha}{\alpha}f_{1,k-2}^j.
		\]
	\end{corollary}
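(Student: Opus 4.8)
The plan is to obtain this as the diagonal special case $l=k$ of Lemma \ref{lemma:fkl_resolved}. Since setting $l=k$ forces $l$ and $k$ to share the same parity, only the even--even and odd--odd branches of that lemma can occur, so I would simply substitute $l=k$ into those two formulas and simplify the coefficients. No new induction or case analysis is needed: the entire analytic content, namely the bilevel induction resolving $f_{k,l}^j$ in terms of the base-level values $f_{1,i}^j$, was already carried out in Lemma \ref{lemma:fkl_resolved}.

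Concretely, for even $k$ the even--even branch gives coefficients $\tfrac{l-k+2}{2(1-\alpha)}$ and $\tfrac{l-k+2\alpha}{2(1-\alpha)}$ in front of $f_{1,k-1}^j$ and $f_{1,k-2}^j$; setting $l=k$ these collapse to $\tfrac{2}{2(1-\alpha)}=\tfrac{1}{1-\alpha}$ and $\tfrac{2\alpha}{2(1-\alpha)}=\tfrac{\alpha}{1-\alpha}$, which is exactly the first claimed identity. For odd $k$ the odd--odd branch gives coefficients $\tfrac{l-k+2}{2\alpha}$ and $\tfrac{l-k+2-2\alpha}{2\alpha}$; setting $l=k$ these collapse to $\tfrac{2}{2\alpha}=\tfrac{1}{\alpha}$ and $\tfrac{2-2\alpha}{2\alpha}=\tfrac{1-\alpha}{\alpha}$, which is the second claimed identity. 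In both cases the leading term $f_{1,l}^j$ becomes $f_{1,k}^j$.

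There is no genuine obstacle here; the only points worth verifying are bookkeeping. First, the hypothesis $2\le k\le l\le n$ of Lemma \ref{lemma:fkl_resolved} specializes consistently to $2\le k\le n$, so the two displayed formulas are valid on the full relevant range of $k$. Second, the edge case $k=2$ (even) produces the term $f_{1,k-2}^j=f_{1,0}^j$, which is harmless since $f_{1,0}^j=0$, matching the convention already used in the induction start of Lemma \ref{lemma:fkl_resolved}. Thus the corollary follows directly by specialization, and its role is merely to isolate the diagonal values $f_{k,k}^j$ that feed into the construction of $W_2$ and $b_2$.
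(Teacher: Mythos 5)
Your proposal is correct and is exactly the paper's own argument: the corollary is stated immediately after the remark ``The case $k=l$ yields the following identities,'' i.e., it is obtained precisely by substituting $l=k$ into the even--even and odd--odd branches of Lemma \ref{lemma:fkl_resolved} and simplifying the coefficients, as you do. Your additional bookkeeping checks (parity forcing only two branches, the range $2\le k\le n$, and the harmless term $f_{1,0}^j=0$ when $k=2$) are all consistent with the paper.
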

	
	Having the above auxiliary results, we are now ready to prove Lemma \ref{lemm:inequi_2}.\\
	
	\textit{Proof of Lemma \ref{lemm:inequi_2}:} It suffices to estimate $\Vert W_2\Vert_\infty$ and $\Vert b_2\Vert_\infty$. Recall that the rows $(W_2)_{2k}$ and entries $(b_2)_{2k}$ correspond to the interpolation problem of $g_k^+$ and that the rows $(W_2)_{2k+1}$ and entries $(b_2)_{2k+1}$ correspond to the interpolation problem of $g_k^-$ for $1\leq k\leq n$ specified in the proof of Lemma \ref{lemma:f_kl_one_step} in the setup of Lemma \ref{lemm:inequi_2}. As a consequence, we may apply the result in Lemma \ref{lemm:inequi_1}. Here we have to be careful regarding the term $Y$. It is the absolute value of the first interpolated point plus the maximal distance between two consecutive targets for $g_k^+$ and $g_k^-$, respectively. In either case $f_{k,k}^j<0$ or $f_{k,k}^j\geq 0$ for $0\leq k,l\leq n$, $0\leq j\leq m$, it holds true that the distance between two consecutive targets (for both $g_k^+$ and $g_k^-$) corresponding to \eqref{lemm2.1_shen_complexity} is bounded by
	\[
	\frac{n+1-2\alpha}{2\min(\alpha,1-\alpha)}\max_j \vert f_{k,k}^j\vert.
	\]
	The absolute value of the first interpolated point is bounded for both $g_k^+$, $g_k^-$ by
	\begin{align}
		\label{first_interp}
		\frac{n}{2\min(\alpha,1-\alpha)}\vert f_{k,k}^0\vert.
	\end{align}
	As $0<\alpha = \frac{1}{c+1}\leq \frac{1}{2}$ we obtain the estimation
	\[
	\frac{n+1-2\alpha}{2\min(\alpha,1-\alpha)}+\frac{n}{2\min(\alpha,1-\alpha)}\leq (n+1)(c+1).
	\]
	By Lemma \ref{lemm:inequi_1} the entries $(b_2)_k$ of the parameter $b_2$ are bounded by the absolute value of the first interpolated point given in \eqref{first_interp}. Again, as a consequence of Lemma \ref{lemm:inequi_1}, we obtain that
	\begin{align}
		\label{asymp:wrows}
		\vert (W_2)_{2k}\vert_\infty, \vert (W_2)_{2k+1}\vert_\infty, \vert (b_2)_{2k}\vert_\infty, \vert (b_2)_{2k+1}\vert_\infty\lesssim c^2nR\max_j\vert f_{k,k}^j\vert
	\end{align}
	for $1\leq k\leq n$. The next step is to describe the asymptotical behavior of the rows of $W_2$ independently of the row-index. By \eqref{lemm2.1_shen_complexity} we have that
	\[
	\vert (W_2)_1\vert_\infty\leq cR\max_j(\vert y_{j(n+1)+n}-y_{j(n+1)}\vert, \vert y_{j(n+1)}-y_{j(n+1)-1}\vert)\lesssim cnRY.
	\] 
	As $(b_2)_1= y_0$ the same asymptotical upper bound applies to $\vert (b_2)_1\vert_\infty$ too. For $k=1$ it holds true that $f_{1,1}^j= y_{j(n+1)+1}-g_0(x_{j(n+1)+1})$. Using that 
	\begin{align*}
		g_0(x_{j(n+1)+1})&=y_{j(n+1)}+\frac{x_{j(n+1)+1}-x_{j(n+1)}}{x_{j(n+1)+n}-x_{j(n+1)}}(y_{j(n+1)+n}-y_{j(n+1)})\\
		&= y_{j(n+1)}+\frac{2(1-\alpha)}{n+1-2\alpha}(y_{j(n+1)+n}-y_{j(n+1)})
	\end{align*}
	we obtain 
	\[
	\vert (W_2)_{2}\vert_\infty, \vert (W_2)_{3}\vert_\infty,\vert (b_2)_{2}\vert_\infty, \vert (b_2)_{3}\vert_\infty\lesssim c^2nR\max_j\vert f_{1,1}^j\vert\lesssim c^2nRY.
	\]
	Next we show this asymptotical behavior for general $k\geq 2$.\\
	
	More concretely, we verify that for $k\geq 2$ it holds that $\max_j\vert f_{k,k}^j\vert \lesssim cY$ and thus, by \eqref{asymp:wrows} that
	\[
	\vert(W_2)_{2k}\vert_\infty, \vert (W_2)_{2k+1}\vert_\infty,\vert (b_2)_{2k}\vert_\infty, \vert (b_2)_{2k+1}\vert_\infty\lesssim c^3nRY.
	\]
	Indeed as a consequence of Corollary \ref{corr: f_kkj} we obtain that
	\[
	\vert f_{k,k}^j\vert \leq \vert f_{1,k}^j-f_{1,k-1}^j\vert+\max(\frac{1-\alpha}{\alpha}, \frac{\alpha}{1-\alpha})\vert f_{1,k-1}^j-f_{1,k-2}^j\vert.
	\]
	The estimations $\max(\frac{1-\alpha}{\alpha}, \frac{\alpha}{1-\alpha})\leq c$ and 
	\begin{align}
		\label{diff_f_1k_estim}
		\notag\vert f_{1,i+1}^j-f_{1,i}^j\vert &\leq \vert y_{j(n+1)+i+1}-y_{j(n+1)+i}\vert+\vert g_0(x_{j(n+1)+i+1})-g_0(x_{j(n+1)+i})\vert\\
		\notag&\leq Y+\frac{x_{j(n+1)+i+1}-x_{j(n+1)+i}}{x_{j(n+1)+n}-x_{j(n+1)}}\vert y_{j(n+1)+n}-y_{j(n+1)}\vert\\
		\notag&\leq Y(1+n(\frac{1}{R}-\delta)(\frac{n+1}{2R}-\delta)^{-1})\\
		\notag&=Y(1+\frac{2n(1-\alpha)}{n+1-2\alpha})\\
		&\leq 3Y,
	\end{align}
	infer that
	\[
	\max_j\vert f_{k,k}^j\vert\leq 3(c+1)Y.
	\]
	Recalling that for $k\geq 2$ 
	\[
	\vert (W_2)_{2k}\vert_\infty, \vert (W_2)_{2k+1}\vert_\infty, \vert (b_2)_{2k}\vert_\infty, \vert (b_2)_{2k+1}\vert_\infty\lesssim c^2nR\max_j\vert f_{k,k}^j\vert
	\]
	we conclude that
	\[
	\vert (W_2)_{2k}\vert_\infty, \vert (W_2)_{2k+1}\vert_\infty,\vert (b_2)_{2k}\vert_\infty, \vert (b_2)_{2k+1}\vert_\infty\lesssim c^3nRY
	\]
	and finally, the growth of the parameters of $\phi$ is given by 
	\[
	\mathcal{P}(\phi)\lesssim\max(X,c^3nRY).
	\]
	\qed\\
	Similar arguments yield the growth of parameters of the network in \cite[Lemma 2.2]{shen_external1} for an equidistant grid.
	\begin{corollary}
		\label{corr:equidist}
		In the situation of Lemma \ref{lemm:equi_1} with $y_i\geq 0$ assume that there exist $m, n \in \mathbb{N}$ with $\tilde{R}=m(n+1)$. Then there exists a two hidden layer ReLU network $\phi$ whose layers have widths $2m$ and $2n+1$, respectively, fulfilling $\phi(x_i)=y_i$ for $0\leq i\leq m(n+1)$. Furthermore, it holds true that
		\[
			\mathcal{P}(\phi)\lesssim\max(X,nRY)
		\]
		with $X$ and $Y$ defined as in \eqref{def:xy}.
	\end{corollary}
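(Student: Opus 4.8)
\textit{Proof proposal.} The plan is to reuse the construction and the entire estimation chain developed for Lemma \ref{lemm:inequi_2}, exploiting that an equidistant grid is the symmetric instance of the inequidistant one. First I would set up the two-hidden-layer network exactly as in \cite[Lemma 2.2]{shen_external1}, writing $\phi(x)=W_3\relu(W_2\relu(W_1x+b_1)+b_2)+b_3$ with $W_1=(1,\dots,1)^T$, $W_3=(1,1,-1,\dots,1,-1)$, $b_3=0$, and $b_1$ collecting the relevant breakpoints; the two hidden layers then have widths $2m$ and $2n+1$ as claimed. Exactly as in the proof of Lemma \ref{lemm:inequi_2}, these parameters enter $\mathcal{P}(\phi)$ only through $X$, so the whole difficulty again lies in the rows of $W_2$ and entries of $b_2$, which encode the inductively constructed single-hidden-layer networks $g_0,g_k^+,g_k^-$ fitting the values $f_{k,k}^j$.

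The key observation is that on the equidistant grid $x_k=k/R$ of Lemma \ref{lemm:equi_1} all consecutive spacings are equal, which is precisely the geometric situation of Lemma \ref{lemm:inequi_1} when $\delta=\tfrac{1}{2R}$, i.e. $\alpha=R\delta=\tfrac{1}{2}$ and hence $c=1$ (there the two spacings $\delta$ and $\tfrac{1}{R}-\delta$ coincide exactly for $\delta=\tfrac{1}{2R}$). Working with equal spacings, I would re-derive the analogues of Lemma \ref{lemma:f_kl_one_step}, Lemma \ref{lemma:fkl_resolved} and Corollary \ref{corr: f_kkj}, where the four parity-dependent cases collapse into a single uniform recursion because $\alpha$ and $1-\alpha$ are both $\tfrac{1}{2}$. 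In particular the reduction of $f_{k,k}^j$ to the quantities $f_{1,k}^j,f_{1,k-1}^j,f_{1,k-2}^j$ now has $O(1)$ coefficients, so the estimate $\max_j|f_{k,k}^j|\lesssim Y$ (using the bound \eqref{diff_f_1k_estim}) holds with an absolute constant instead of the factor $c$.

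Having this, I would feed the interpolation data of each $g_k^\pm$ into the single-hidden-layer bound, exactly as in \eqref{asymp:wrows}. The block-endpoint values of $g_k^\pm$ are still amplified by up to a factor $n$ (a linear function pinned to $f_{k,k}^j$ at one interior node and to $0$ at the neighbouring node, extrapolated across a block of length $n$), which produces the single factor $n$ in the final bound; with $c=1$ the remaining prefactors are constant, giving $\vert(W_2)_{2k}\vert_\infty,\vert(b_2)_{2k}\vert_\infty\lesssim nR\max_j|f_{k,k}^j|\lesssim nRY$, and likewise for $g_0$ and the odd rows. Combining this with the $X$-bound for $W_1,b_1,W_3,b_3$ yields $\mathcal{P}(\phi)\lesssim\max(X,nRY)$.

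The main obstacle I anticipate is purely one of bookkeeping: verifying that, once $\alpha=1-\alpha=\tfrac{1}{2}$, no spurious factor of $c$ and no extra factor of $n$ survives in the reduction of $f_{k,k}^j$, so that exactly one factor $n$ (from the endpoint amplification) and one factor $R$ remain. This is the same computation as in Lemmas \ref{lemma:f_kl_one_step}--\ref{lemma:fkl_resolved}, only with simpler coefficients, so I expect it to go through without any new idea.
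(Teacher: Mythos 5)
Your proposal is correct and follows essentially the same route as the paper: treating the equidistant grid as the $c=1$ (i.e. $\alpha=R\delta=\tfrac12$) instance of the inequidistant machinery, re-running Lemmas \ref{lemma:f_kl_one_step}--\ref{lemma:fkl_resolved} and Corollary \ref{corr: f_kkj} with collapsed coefficients to get $f_{k,k}^j=f_{1,k}^j-2f_{1,k-1}^j+f_{1,k-2}^j$ with $\max_j\vert f_{k,k}^j\vert\lesssim Y$ via \eqref{diff_f_1k_estim}, and then bounding the rows of $W_2$ and $b_2$ by $nR\max_j\vert f_{k,k}^j\vert\lesssim nRY$ exactly as in \eqref{asymp:wrows}. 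The paper's own proof is the same sketch, merely writing out the explicit formulas for $f_{0,0}^j$, $f_{1,1}^j$ and the constant $6Y$.
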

	\begin{proof}
	The arguments are essentially the same as for the proof of Lemma \ref{lemm:inequi_2}. For that reason we omit an explicit proof since the more complicated inequidistant case is dealt with in Lemma \ref{lemm:inequi_2}. Nevertheless, we give the main steps.\\
	
	The main observation is that an inequidistant grid of the form given in Lemma \ref{lemm:inequi_1} transforms into an equidistant grid of the form given in Lemma \ref{lemm:equi_1} for $c=1$, though scaled. As a consequence, it holds true that $\alpha = R\delta = R/(2R) = 1/2$. Following the proofs of Lemma \ref{lemma:f_kl_one_step}, Lemma \ref{lemma:fkl_resolved} and Corollary \ref{corr: f_kkj}, one obtains that $f_{0,0}^j = y_{j(n+1)}$, $f_{1,1}^j = (y_{j(n+1)+1}- y_{j(n+1)})-\frac{1}{n}(y_{j(n+1)+n}-y_{j(n+1)})$ and for $k\geq 2$
	\[
		f_{k,k}^j = f_{1,k}^j-2f_{1,k-1}^j+f_{1,k-2}^j
	\]
	for $0\leq j\leq m$. By \eqref{diff_f_1k_estim} it follows that $\max_j\vert f_{k,k}^j\vert \leq 6Y$. Thus, as by Lemma \ref{lemm:equi_1} it holds true that
	\[
	\vert (W_2)_{2k}\vert_\infty, \vert (W_2)_{2k+1}\vert_\infty,\vert (b_2)_{2k}\vert_\infty, \vert (b_2)_{2k+1}\vert_\infty\lesssim nR\max_j\vert f_{k,k}^j\vert
	\]
	we conclude that
	\[
	\vert (W_2)_{2k}\vert_\infty, \vert (W_2)_{2k+1}\vert_\infty,\vert (b_2)_{2k}\vert_\infty, \vert (b_2)_{2k+1}\vert_\infty\lesssim nRY
	\]
	and finally, the growth of the parameters of $\phi$ is given by 
	\[
	\mathcal{P}(\phi)\lesssim\max(X,nRY).
	\]
	\end{proof}

	We conclude that for the equidistant grid in Corollary \ref{corr:equidist} the growth of parameters amounts to 
	\[
		\mathcal{P}(\phi)\lesssim nRY
	\]
	and for the special inequidistant grid in Lemma \ref{lemm:inequi_2} to
	\[
		\mathcal{P}(\phi)\lesssim c^3nRY.
	\]
	
	\paragraph{Complexity estimation in \cite[Lemma 5.5]{lu_main}:}
	The result in \cite[Lemma 5.5]{lu_main} is the same as \cite[Lemma 3.4]{shen_external2}. It shows that a two hidden layer ReLU network with $d-$dimensional input and layers of width $N$ and $NL$, respectively, of the form $\tilde{\phi}(x)=W_3\relu(W_2\relu(W_1x+b_1)+b_2)+b_3$, can be expressed by a ReLU network $\phi$ with $d-$dimensional input, width $\mathcal{W}(\phi)=\mathcal{O}(N)$ and depth $\mathcal{D}(\phi)=\mathcal{O}(L)$. More concretely, for $\mathbf{g}=\relu(W_1x+b_1)$ and $\mathbf{h}=\relu(W_2 \mathbf{g}+b_2)$ defining the outputs of the intermediate layers of $\tilde{\phi}$, the $L+1$ intermediate layers of the proposed deep ReLU network $\phi$ of width $\mathcal{O}(N)$ and depth $\mathcal{O}(L)$ consist of expressions of the form $\mathbf{g}, \mathbf{h}_i$, $\relu(s_i)$ and $\relu(-s_i)$ for $1\leq i\leq L$. Dividing $W_2\in \mathbb{R}^{NL\times N}$ and $b_2\in \mathbb{R}^{NL}$ evenly into $L$ parts $W_{2,i}\in \mathbb{R}^{N\times N}$ and $b_{2,i}\in \mathbb{R}^{N}$, respectively, the $\mathbf{h}_i$ are obtained by $\mathbf{h}_i=\relu(W_{2,i}\mathbf{g}+b_{2,i})$ for $1\leq i\leq L$. Similarly dividing $W_3\in \mathbb{R}^{1\times NL}$ evenly into parts of length $N$ denoted by $W_{3,i}$, the $s_i$ are recursively defined by $s_0:=0$ and $s_i:=s_{i-1}+W_{3,i} \mathbf{h}_i$ for $1\leq i\leq L$. We argue that the necessary parameters for constructing these expressions, determining the intermediate layers of $\phi$, consist of entries of the parameters of $\tilde{\phi}$ given by $W_l, b_l$ for $l=1,2,3$. The parameters resulting in the output of the first intermediate layer of $\phi$ given by $\mathbf{g}$ are $W_1$ and $b_1$ as $\mathbf{g} = \relu(W_1x+b_1)$. The output of the second layer is given by $\mathbf{h}_1$ and $\mathbf{g}$ and obtained by $W_{2,1}$ and $b_{2,1}$ due to the definition of $\mathbf{h}_1$ and constants given by $\pm 1$ as $\mathbf{g} = \relu(\mathbf{g})-\relu(-\mathbf{g})$. For $3\leq i\leq L+1$ the output of the $i-$th intermediate layer of $\phi$ is formed by $\relu(s_{i-2}), \relu(-s_{i-2}), \mathbf{h}_{i-1}$ and $\mathbf{g}$. As a consequence of the previous considerations and recursion formula of the $s_i$, the parameters necessary to obtain the output of the $i-$th layer by the previous layer are $W_{2,i-1}, b_{2,i-1}$, $W_{3,i-2}$ and constants given by $\pm 1$ (to map $\mathbf{g}$ to $\mathbf{g}$ and $\relu(s_i)$ with $\relu(-s_i)$ to $s_i$). Finally, the necessary parameters mapping the penultimate layer of $\phi$ to the output layer consisting of $\phi(x)=W_3\mathbf{h}+b_3 = s_L+b_3$ are $b_3$, $W_{3,L}$ and again constants given by $\pm 1$. As a consequence, the asymptotical behavior of the parameters in \cite[Lemma 3.4]{shen_external2} is given by $$\mathcal{P}(\phi)\lesssim\mathcal{P}(\tilde{\phi})\lesssim\max((\vert b_i\vert_\infty)_{i=1,2,3}, (\vert W_i\vert_\infty)_{i=1,2,3}).$$

	\paragraph{Complexity estimation in \cite[Proposition 4.3]{lu_main}:} The result in \cite[Proposition 4.3]{lu_main} shows that there exists a ReLU FNN $\phi$ with width $\mathcal{W}(\phi)=\mathcal{O}(N^{1/d})$ and depth $\mathcal{D}(\phi)=\mathcal{O}(L)$ realizing the step function fulfilling $\phi(x)=k$ if $x\in [\frac{k}{K},\frac{k+1}{K}-\delta1_{\left\{k\leq K-2\right\}}]$ for $0\leq k\leq K-1$ with $K=\lfloor N^{1/d}\rfloor^2\lfloor L^{2/d}\rfloor$. The cases $d=1$ and $d\geq 2$ are considered separately.\\
	 Case $d=1$: The network $\phi$ is given by $\phi(x)=\phi_1(x)L+\phi_2(x-M^{-1}\phi_1(x))$ where $M=N^2L$ and $\phi_1$, $\phi_2$ are defined as follows. The $\phi_1$ is a ReLU network with $\phi_1(x)=m$ if $x\in[\frac{m}{M},\frac{m+1}{M}-\delta1_{\left\{m\leq M-2\right\}}]$ for $0\leq m\leq M-1$. The $\phi_2$ is a ReLU network with $\phi_2(x)=l$ if $x\in [\frac{l}{ML},\frac{l+1}{ML}-\delta1_{\left\{l\leq L-2\right\}}]$ for $0\leq l\leq L-2$. In view of the considerations on \cite[Lemma 2.2]{shen_external1} we may apply Lemma \ref{lemm:inequi_2} to $\phi_1$ with $R=N^2L, m=N, n = 2NL-1$, $Y=1, X=2$, $\delta = \frac{1}{(c+1)R}$ and thus, obtain
	\[
	 \mathcal{P}(\phi_1) \lesssim c^3N^3L^2.
	\]
	Similarly for $\phi_2$ the result in Lemma \ref{lemm:inequi_2} is applicable with $R = N^2L^2$, $m=1$, $n=2L-1$, $Y=1, X=2$, $\delta = \frac{1}{(c+1)R}$ and hence,
	\[
	\mathcal{P}(\phi_2)\lesssim c^3N^2L^3.
	\]
	As we have discussed in the paragraph above, the application of \cite[Lemma 3.4]{shen_external2} to $\phi_1,\phi_2$ resulting in modified networks with $\mathcal{W}(\phi_1)=\mathcal{O}(N), \mathcal{D}(\phi_1)=\mathcal{O}(L), \mathcal{W}(\phi_2)=\mathcal{O}(1)$ and $\mathcal{D}(\phi_2)=\mathcal{O}(L)$, does not increase the complexity of the parameters such that by definition of $\phi$ (see also \cite[Figure 13]{lu_main}) it holds true that
	\[
		\mathcal{P}(\phi)\lesssim c^3N^2L^2(N+L)
	\]
	with $\mathcal{W}(\phi)=\mathcal{O}(N)$ and $\mathcal{D}(\phi)=\mathcal{O}(L)$.\\
	
	Case $d\geq 2$: The result in Lemma \ref{lemm:inequi_2} is applicable to $\phi$ with $R= \lfloor N^{1/d}\rfloor^2\lfloor L^{2/d}\rfloor$, $m=\lfloor N^{1/d}\rfloor, n= 2\lfloor N^{1/d}\rfloor\lfloor L^{2/d}\rfloor-1$,  $Y=1$, $X=2$ and as a consequence,
	\[
	\mathcal{P}(\phi)\lesssim c^3N^{3/d}L^{4/d}.
	\]
	Again the application of \cite[Lemma 3.4]{shen_external2} to $\phi$ resulting in a modified network with $\mathcal{W}(\phi)=\mathcal{O}(N^{1/d})$ and $\mathcal{D}(\phi)=\mathcal{O}(L)$ fulfilling the properties above does not increase the complexity of the parameters. Note that the asymptotical bound for $d\geq 2$ applies also in the case $d=1$ but is worse.

	\paragraph{Complexity estimation in \cite[Lemma 5.6]{lu_main}:} The result in \cite[Lemma 5.6]{lu_main} is the same as \cite[Lemma 3.6]{shen_external2} and corresponds to a certain bit-extraction technique. More concretely, it shows that for given $\theta_{m,l}\in\left\{0,1\right\}$ for $0\leq m\leq M-1,0\leq l\leq L-1$ with $M=N^2L$ there exists a ReLU network $\phi$ with $\mathcal{W}(\phi)=\mathcal{O}(N)$ and $\mathcal{D}(\phi)=\mathcal{O}(L)$ such that $\phi(m,l)=\sum_{j=0}^l\theta_{m,j}$ for $0\leq m\leq M-1, 0\leq l\leq L-1$. The $\phi$ is given by $\phi(m,l)=\phi_2(\phi_1(m),l+1)$ where $\phi_1$ is a ReLU network mapping $m$ to the unique real number $y_m$ which has the coefficients $(\theta_{m,l})_{0\leq l\leq L-1}$ in its binary representation and $\phi_2$ is a ReLU network mapping $x,l$ to the sum of the first $l$ coefficients of the binary representation of $x$.\\
	In view of the considerations on \cite[Lemma 2.2]{shen_external1} applying Lemma \ref{lemm:inequi_2} to $\phi_1$ with $R=1, m=N, n= NL-1, Y\leq1, X=N^2L$ yields $\mathcal{P}(\phi_1)\lesssim N^2L$. The application of \cite[Lemma 3.4]{shen_external2} to $\phi_1$ resulting in a modified network with $\mathcal{W}(\phi_1)=\mathcal{O}(N)$ and $\mathcal{D}(\phi_1)=\mathcal{O}(L)$ does not increase the complexity of the parameters.\\
	To analyze the parameters of the network $\phi_2$ with $\mathcal{W}(\phi_2)=\mathcal{O}(1)$ and $\mathcal{D}(\phi_2)=\mathcal{O}(L)$ we consider the transformations between the layers of $\phi_2$. For given $(\theta_l)_{1\leq l\leq L}$ and $\xi_j$ the real number attaining binary coefficients $(\theta_l)_{j\leq l\leq L}$ the recursion formulas
	\begin{align}
		\label{thetarecursion}
		\theta_j&=\relu(2^L(\xi_j-1/2)+1)-\relu(2^L(\xi_j-1/2))\\
		\label{xirecursion}
		\xi_{j+1}&= 2\xi_j-\relu(2^L(\xi_j-1/2)+1)+\relu(2^L(\xi_j-1/2))
	\end{align}
	are shown to hold true in \cite[Lemma 3.5]{shen_external2}. Furthermore, for 
	\begin{align}
	\label{zrecursion}
	z_{l,j}=\relu(\theta_j+\relu(l-j+1)-\sigma(l-j)-1)
	\end{align}
	it is argued that $\sum_{j=1}^l\theta_j=\sum_{j=1}^L z_{l,j}$. The formulas \eqref{thetarecursion}-\eqref{zrecursion} are employed in the intermediate layers of $\phi_2$ to generate $\xi_j$, $\theta_j$, $z_{l,j}$ recursively for $1\leq l\leq L$ and finally output $\sum_{j=1}^l\theta_j=\sum_{j=1}^L z_{l,j}$ in the last layer. As a consequence, it holds true that $\mathcal{P}(\phi_2)\lesssim2^L$ due to the multiplication by $2^L$ occurring in the formulas \eqref{thetarecursion} and \ref{xirecursion}. The fact that the complexity behaves asymptotically exponential in the depth $L$ is undesirable. One can circumvent this by adapting the constructive proof of \cite[Lemma 3.5]{shen_external2} as follows. One can introduce a network $\phi_3$ of width equal to $2$ and $L$ layers with $\mathcal{P}(\phi_3)=2$ realizing the multiplication by $2^L$. Then the network $\phi_2$ can be modified by applying $\phi_3$ to $\xi_j-1/2$ (compare to formulas \eqref{thetarecursion} and \eqref{xirecursion}) right before those intermediate layers of $\phi_2$ where $\theta_j$ and $\xi_{j+1}$ are generated for $1\leq j\leq L$. As a consequence in \cite[Lemma 3.5]{shen_external2} we obtain $\mathcal{P}(\phi_2)\lesssim L$ with modified $\phi_2$. Note that the depth of the modified network is no more linear in $L$ but quadratic and given by $L^2+L+1 = \mathcal{O}(L^2)$ and similarly also for the network in \cite[Lemma 3.6]{shen_external2} which, with the modification above, fulfills $$\mathcal{P}(\phi)\lesssim N^2L.$$
	
	\paragraph{Complexity estimation in \cite[Lemma 5.7]{lu_main}:} The result in \cite[Lemma 5.7]{lu_main} corresponds to a modified bit-extraction technique. More concretely, for $N,L\in \mathbb{N}$ and $\theta_i\in\left\{0,1\right\}$ for $0\leq i\leq N^2L^2-1$ the network $\phi$ realizes $\phi(i)=\theta_i$ for $0\leq i\leq N^2L^2-1$ with $\mathcal{W}(\phi)=\mathcal{O}(N)$ and $\mathcal{D}(\phi)=\mathcal{O}(L)$. The network $\phi$ fulfills
	\[
		\phi(i)=\phi_1(\psi(i),i-L\psi(i))-\phi_2(\psi(i),i-L\psi(i))
	\]
	where $\phi_1,\phi_2,\psi$ are defined as follows. Defining $a_{m,l}:=\theta_i$ if $i=mL+l$ for $0\leq m\leq N^2L-1, 0\leq l\leq L-1$, $b_{m,0}=0$ and $b_{m,l}=a_{m,l-1}$ for $0\leq m\leq N^2L-1,1\leq l\leq L-1$, the networks $\phi_1,\phi_2$ fulfill that $\phi_1(m,l)=\sum_{j=0}^l a_{m,j}$ and $\phi_2(m,l)=\sum_{j=0}^l b_{m,j}$ for $0\leq m\leq N^2L-1, 0\leq l\leq L-1$. The asymptotics of the parameters of $\phi_1,\phi_2$ are governed under the previously discussed modifications in \cite[Lemma 5.6]{lu_main} by the complexity $\mathcal{P}(\phi_i)\lesssim N^2L$ where $\mathcal{W}(\phi_i)=\mathcal{O}(N)$ and $\mathcal{D}(\phi_i)=\mathcal{O}(L^2)$ for $i=1,2$. Finally, the $\psi$ is a ReLU network with $\psi(x)=m$ if $x\in \left[mL,(m+1)L-1\right]$ for $0\leq m\leq M-1$. We may apply Lemma \ref{lemm:inequi_2} to $\psi$ with $R=1/L, m=N, n=2NL-1, Y=1, X=N^2L^2, \delta=1=\frac{1}{(c+1)R}$, i.e., $c=L-1$, that $\mathcal{P}(\psi)\lesssim NL^3$. Again the application of \cite[Lemma 3.4]{shen_external2} resulting in a modified network $\psi$ with $\mathcal{W}(\psi)=\mathcal{O}(N)$ and $\mathcal{D}(\psi)=\mathcal{O}(L)$ does not increase the complexity of the parameters. Thus, the growth of parameters of the realizing network in \cite[Lemma 5.7]{lu_main} is given by $$\mathcal{P}(\phi)\lesssim NL(N+L^2).$$
	
	\paragraph{Complexity estimation in \cite[Proposition 4.4]{lu_main}:} The result in \cite[Proposition 4.4]{lu_main} shows that fitting partial derivatives of order at most $q-1$ at the corners of the subcubes to which $\Psi$ projects to (see \cite[Proposition 4.4]{lu_main}) is realizable by a ReLU FNN. More concretely, for $N,L, s\in \mathbb{N}$ and $0\leq \xi_i\leq 1$ for $0\leq i\leq N^2L^2-1$ there exists a ReLU FNN $\phi$ with $\mathcal{W}(\phi)=\mathcal{O}(sN\log N)$ and $\mathcal{D}(\phi)=\mathcal{O}(L^2\log L)$ such that $\vert \phi(i)-\xi_i\vert\leq N^{-2s}L^{-2s}$ for $0\leq i\leq N^2L^2-1$ and $0\leq \phi\leq 1$. For that, the ReLU FNN $\phi_j$ realizing $\phi_j(i)=\xi_{i,j}$ for $0\leq i\leq N^2L^2-1$ are introduced for $1\leq j\leq J:=\lceil 2s \log(NL+1)\rceil$ where $\xi_{i,j}\in\left\{0,1\right\}$ are such that the real number with binary coefficients given by $(\xi_{i,j})_{0\leq j\leq J}$ approximates $\xi_i$ with error bounded by $2^{-J}$. For $\tilde{\phi}(x):=\sum_{j=1}^J 2^{-j}\phi_j(x)$ it follows by the considerations in the previous paragraph that $\mathcal{P}(\tilde{\phi})\lesssim NL(N+L^2)$. Finally, the network $\phi$ is defined by $\phi(x)=\min(\relu(\tilde{\phi(x)}),1)$ which, as the minimum is expressable by ReLU FNN with parameters $\pm 1$ and constant architecture, fulfills	the growth of parameters
	\[
		\mathcal{P}(\phi)\lesssim NL(N+L^2).
	\]
	
	\paragraph{Complexity estimation in \cite[Lemma 5.1]{lu_main}:} The result in \cite[Lemma 5.1]{lu_main} shows that the function $x\mapsto x^2$ may be approximated with error $N^{-L}$ on the unit interval by a ReLU FNN $\phi$ with width $\mathcal{W}(\phi)=3N$ and depth $\mathcal{D}(\phi)=L$. The network $\phi$ is given by $\phi(x)=x-\sum_{i=1}^{Lk}2^{-2i}T_i(x)$ with $k\in \mathbb{N}$ uniquely given such that $(k-1)2^{k-1}+1\leq N\leq k2^k$ and $T_i$ sawtooth functions fulfilling $T_i(l2^{-i})=1$ for odd $0\leq l\leq 2^i$ and $T_i(l2^{-i})=0$ for even $0\leq l\leq 2^i$. The complexity of $\phi$ is governed by the growth of parameters of the sawtooth function $T_k$. The reason is that $T_1, \dots, T_k$ which are generated in the first intermediate layer of $\phi$, are transformed to the higher order $T_i$ by application of $T_k$. By Lemma \ref{lemm:equi_1} as $T_k(l2^{-k})=1$ for odd $0\leq l\leq 2^k$ and $T_k(l2^{-k})=0$ for even $0\leq l\leq 2^k$  it follows that $\mathcal{P}(\phi)\lesssim 2^k$. By the choice of $k$ it holds $2^k\leq N$ and consequently the asymptotical behavior of the parameters of $\phi$ fulfills $$\mathcal{P}(\phi)\lesssim N.$$
	
	\paragraph{Complexity estimation in \cite[Lemma 5.2]{lu_main}:} The result in \cite[Lemma 5.2]{lu_main} shows that the function $(x,y)\mapsto xy$ may be approximated with error of order $N^{-L}$ on the unit square by a ReLU FNN $\phi$ with width $\mathcal{W}(\phi)=9N$ and depth $\mathcal{D}(\phi)=L$. For $\psi$ denoting the network providing the result of the previous paragraph, the network $\phi$ is defined by $\phi(x,y)=2(\psi(\frac{x+y}{2})-\psi(\frac{x}{2})-\psi(\frac{y}{2}))$. As a consequence, we derive that the complexity of $\phi$ follows immediately by the considerations on \cite[Lemma 5.1]{lu_main} and fulfills $$\mathcal{P}(\phi)\lesssim N.$$
	
	\paragraph{Complexity estimation in \cite[Lemma 4.2]{lu_main}:} The result in \cite[Lemma 4.2]{lu_main} shows that the function $(x,y)\mapsto xy$ may be approximated with error of order $N^{-L}$ on a general square $[a,b]^2$ by a ReLU FNN $\phi$ with width $\mathcal{W}(\phi)=9N+1$ and depth $\mathcal{D}(\phi)=L$. For $\psi$ denoting the network providing the result of the previous paragraph, the network $\phi$ is defined by 
	\[
		\phi(x,y):=(b-a)^2\psi(\frac{x-a}{b-a}, \frac{y-a}{b-a})+a\relu(x+y+2\vert a\vert)-a^2-2a\vert a\vert.
	\]
	Thus, we obtain that the complexity of $\phi$ follows immediately by the considerations on \cite[Lemma 5.2]{lu_main} and fulfills $$\mathcal{P}(\phi)\lesssim N.$$
	
	\paragraph{Complexity estimation in \cite[Lemma 5.3]{lu_main}:} The result in \cite[Lemma 5.3]{lu_main} shows that multivariable functions of the form $(x_1, \dots, x_k)\mapsto x_1x_2\dots x_k$ on the $k-$unit cube may be approximated with error of order $N^{-7kL}$ by a ReLU FNN $\phi$ with width $\mathcal{W}(\phi)=\mathcal{O}(N)$ and depth $\mathcal{D}(\phi)=\mathcal{O}(L)$. For $\phi_1$ denoting the the network providing the result of the previous paragraph, the networks $\phi_i$ are recursively defined by
	\[
		\phi_{i+1}(x_1,\dots, x_{i+2}):=\phi_1(\phi_i(x_1,\dots, x_{i+1}), \relu(x_{i+2}))
	\]
	for $x_1, \dots, x_{i+2}\in \mathbb{R}$ and $\phi$ is defined by $\phi:=\phi_{k-1}$. Hence, the complexity of $\mathcal{P}(\phi)$ is a direct corollary of the previous paragraph as the network in \cite[Lemma 4.2]{lu_main} is self-composed $k-1$ times to obtain the network in \cite[Lemma 5.3]{lu_main}. As a consequence, we derive that $$\mathcal{P}(\phi)\lesssim N.$$
	
	\paragraph{Complexity estimation in \cite[Proposition 4.1]{lu_main}:} The result in \cite[Proposition 4.1]{lu_main} shows that multivariable polynomials $P(x)=x^\alpha$ of $d$ variables and degree $\tilde{k}:=\vert \alpha\vert_1\leq k$ can be approximated on the $d-$unit cube with error of order $N^{-7kL}$ by a ReLU FNN $\phi$ with width $\mathcal{W}(\phi)=\mathcal{O}(N+k)$ and depth $\mathcal{D}(\phi)=\mathcal{O}(k^2L)$. 
	
	 Denoting by $\psi$ the network providing the result of the previous paragraph and by $\mathcal{L}:\mathbb{R}^d\to \mathbb{R}^k$ the affine linear map which will be recalled from \cite[Lemma 5.3]{lu_main} in the following, the network $\phi$ is defined by $\phi = \psi\circ\mathcal{L}$. Given $x\in \mathbb{R}^d$, $z\in \mathbb{R}^{\tilde{k}}$ is defined by $z_l = x_j$ if $\sum_{i=1}^{j-1}\alpha_i< l\leq \sum_{i=1}^j\alpha_i$ for $1\leq j\leq d$, i.e., $z$ is the entrywise replication of $x$ with respect to $\alpha$. Then $x\in \mathbb{R}^d$ is mapped to $(z, 1,\dots, 1)^T\in \mathbb{R}^k$ by $\mathcal{L}$. The affine linear map $\mathcal{L}$ can be expressed by a ReLU FNN where each nonzero scalar parameter is equal to $1$. Thus, the complexity of $\mathcal{P}(\phi)$ is a direct corollary of the considerations of the previous paragraph and fulfills $$\mathcal{P}(\phi)\lesssim N.$$
	
	\paragraph{Complexity estimation in \cite[Theorem 2.2]{lu_main}:} We recall that the approximating neural network in \cite[Theorem 2.2]{lu_main} on $\Omega([0,1]^d, R,\delta)$ with $R = \lfloor N^{1/d}\rfloor^2\lfloor L^{2/d}\rfloor$ is given by
	\[
	\phi(x):=\sum_{\Vert\alpha\Vert_1\leq q-1}\varphi(\frac{1}{\alpha!}\phi_\alpha(\Psi(x)), P_\alpha(x-\Psi(x)))
	\]
	for $x\in \mathbb{R}^d$, where the role of the subnetworks $\Psi, P_\alpha, \phi_\alpha, \varphi$ is as follows:
	\begin{itemize}
		\item The ReLU FNN $\Psi$ realizes projections of subcubes of $[0,1]^d$ to exactly one corner of the subcube based on one-dimensional step functions $\psi$ with $\Psi(x)=(\psi(x_1), \dots, \psi(x_d))^T/R$ for $x\in [0,1]^d$ (see considerations on \cite[Proposition 4.3]{lu_main} for construction of $\psi$). The realized analysis revealed that the growth of parameters of $\Psi$ fulfills
		\[
			\mathcal{P}(\Psi)\lesssim c^3N^{3/d}L^{4/d}.
		\]
		\item The ReLU FNN $P_\alpha$ achieves an approximation of multinomials of order at most $q-1$ (see considerations on \cite[Proposition 4.1]{lu_main}) and is shown in the complexity estimations above to fulfill
		\[
			\mathcal{P}(P_\alpha)\lesssim N.
		\]
		\item The ReLU FNN $\phi_\alpha$ achieves fitting partial derivatives of $f$ of order at most $q-1$ at the corners of the subcubes to which $\Psi$ projects to (see considerations on \cite[Proposition 4.4]{lu_main}). For the growth of parameters we derive
		\[
			\mathcal{P}(\phi_\alpha)\lesssim NL(N+L^2).
		\]
		\item The ReLU FNN $\varphi$ approximates binomials (see considerations on \cite[Lemma 4.2]{lu_main}) for which we obtain
		\[
			\mathcal{P}(\varphi)\lesssim N.
		\]
	\end{itemize}
	As a consequence, the growth of parameters of $\phi$ in \cite[Theorem 2.2]{lu_main} is given by 
	\[
		\mathcal{P}(\phi)\lesssim \max(NL(N+L^2), c^3N^{3/d}L^{4/d}).
	\]
	The width of the realizing network fulfills $\mathcal{W}(\phi)=\mathcal{O}(q^{d+1}N\log(8N))$ and the modified depth (see the considerations on the complexity estimation in \cite[Lemma 5.6]{lu_main}) $\mathcal{D}(\phi)=\mathcal{O}(q^2L^2\log(4L))$.\\
	
	This concludes the the considerations on \cite[Theorem 2.2]{lu_main}. Next we analyze the growth of parameters of the network achieving an extension of the approximation to the whole domain based on \cite[Theorem 2.1]{lu_main}.\\
	
	\subsection{Estimation of $\mathcal{P}(\phi)$ in \cite[Theorem 2.1]{lu_main}:} The result shows that given $f\in \mathcal{C}([0,1]^d)$ and a ReLU FNN $\tilde{\phi}$ approximating $f$ uniformly with error $\epsilon>0$ outside some trifling region \eqref{trifling_region} with respect to $\delta>0$ then there exists some ReLU FNN $\phi$ approximating $f$ uniformly with error given by $\epsilon+d\omega_f(\delta)$ where the modulus of continuity $\omega_f$ is defined as 
	\[
	\omega_f(r)=\sup\{\vert f(x)-f(y)\vert:\Vert x-y\Vert\leq r, x,y\in [0,1]^d\}
	\]
	for $r>0$. The approximating ReLU FNN $\phi$ is constructed as follows. Given $\tilde{\phi}$ approximating $f$ outside a trifling region \eqref{trifling_region} as in \cite[Theorem 2.2]{lu_main}, the networks $\phi_i$ for $0\leq i\leq d$ are set  as $\phi_0=\tilde{\phi}$ and inductively
	\[
	\phi_{i+1}(x)=\text{mid}(\phi_i(x-\delta e_{i+1}), \phi_i(x), \phi_i(x+\delta e_{i+1}))
	\]
	for $0\leq i\leq d-1$. Here, the median function \emph{mid} is constructed by a ReLU FNN as
	\[
	\text{mid}(x_1,x_2, x_3)=\relu(x_1+x_2+x_3)-\relu(-x_1-x_2-x_3)-\max(x_1,x_2,x_3)-\min(x_1, x_2, x_3)
	\]
	\[
	\text{where} ~ ~ ~  \max(x,y) = \frac{1}{2}(\relu(x+y)-\relu(-x-y)+\relu(x-y)+\relu(-x+y)).
	\]
	Finally, the approximating ReLU FNN, including the trifling region, is given by $\phi=\phi_d$. As a consequence of the construction we derive immediately that
	\[
	\mathcal{P}(\phi_{i+1}) \leq \max(\mathcal{P}(\phi_i), 1, \delta).
	\]
	Thus, resolving the recursive inequality yields
	\[
	\mathcal{P}(\phi) \leq \max(\mathcal{P}(\tilde{\phi}), 1, \delta)\leq \mathcal{P}(\tilde{\phi})
	\]
	where in the last inequality we have used that $0<\delta<1$ and $1\leq \mathcal{P}(\tilde{\phi})$. Hence, the asymptotical behavior of the parameters of $\phi$ is governed by the suprema of the parameters of the reduced approximation $\tilde{\phi}$.\\
	
	This concludes the considerations on \cite[Theorem 2.1]{lu_main} and shows that the approximating network $\phi$ constructed in \cite{lu_main} under the previously discussed modification attains an error of order $N^{-2q/d}L^{-2q/d}+d\omega_f(\delta)$ for $\epsilon=N^{-2q/d}L^{-2q/d}$ and growth of parameters
	\[
		\mathcal{P}(\phi)\lesssim\max(NL(N+L^2), c^3N^{3/d}L^{4/d})
	\]
	with width $\mathcal{W}(\phi)=\mathcal{O}(q^{d+1}3^dN\log(8N))$ and depth $\mathcal{D}(\phi)=\mathcal{O}(q^2L^2\log(4L)+d)$.
	
	\subsection{Estimation of $\mathcal{P}(\phi)$ in \cite[Theorem 1.1]{lu_main}:} 
	Finally, in the main result \cite[Theorem 1.1]{lu_main}, the trifling region is chosen small enough (i.e., the parameter $\delta>0$ is chosen sufficiently small), to recover the approximation rate $N^{-2q/d}L^{-2q/d}$ on the whole domain $[0,1]^d$ in terms of the width $N$ and depth $L$. Following the proof of \cite[Theorem 1.1]{lu_main} the parameter $0<\delta < \frac{1}{3R}$ determining the trifling region \eqref{trifling_region} is chosen small enough such that
	\[
	d\omega_f(\delta)\leq N^{-2q/d}L^{-2q/d}.
	\]
	Note that $f\in\mathcal{C}^q([0,1]^d)$ with $q\geq 1$. Denoting the Lipschitz constant of $f$ by $\tilde{L}$ we have $\omega_f(\delta)\leq \tilde{L}\delta$. As a consequence it suffices to choose 
	\begin{align}
		\label{deltachoice}
		\delta \leq d^{-1}\tilde{L}^{-1}N^{-2q/d}L^{-2q/d}.
	\end{align}
Recall that in the considered analysis of growth of parameters, the dependence on $\delta $ enters via the assumption that there exists some $c\in \mathbb{N}$ with $\delta = \frac{1}{(c+1)R}$. Here, a sufficiently small $\delta >0$ can be achieved by choosing $c\in \mathbb{N}$ sufficiently large. As $R=\lfloor N^{1/d}\rfloor^2\lfloor L^{2/d}\rfloor\leq N^{2/d}L^{2/d}$ for \eqref{deltachoice} to hold, it suffices to choose $c\geq 2$ as the smallest integer fulfilling 
	\begin{align}
		\label{cchoice}
	\tilde{L}dN^{2(q-1)/d}L^{2(q-1)/d}\leq c.
	\end{align}

	Inserting this choice of $c$ we derive that the growth of parameters of the network $\phi$ analyzed in the previous subsection realizing the approximation of $f$ with error $N^{-2q/d} L^{-2q/d}$ on the whole domain $[0,1]^d$ is given by
	\[
		\mathcal{P}(\phi)\lesssim\max(d^3\tilde{L}^3N^{\frac{6q-3}{d}}L^{\frac{6q-2}{d}}, N^2L^3)
	\]
	with width $\mathcal{W}(\phi)=\mathcal{O}(q^{d+1}3^dN\log(8N))$ and depth $\mathcal{D}(\phi)=\mathcal{O}(q^2L^2\log(4L)+d)$.
	\section{Conclusions}
	In this work, we have analyzed the growth of parameters realizing certain feed forward ReLU-neural network architectures as they approximate functions of given regularity with decreasing error. We showed analytically for the deep approximation result introduced in \cite{lu_main} with ReLU activation function and slightly increased depth, that the realizing parameters of the networks approximating differentiable functions grow at polynomial rate. This rate is superior than the one of existing results in most cases, depending on the input dimension and regularity of the approximated function. In particular, for high dimensional input, it is always superior. Another conclusion that may be drawn from our obtained results is that the growth of parameters of the considered ReLU networks is better when using deep neural network architectures as opposed to shallow designs. We show that for the shallow approximation with one-hidden-layer introduced in \cite{mhaskar}, exemplarily for the Gaussian and logistic activation function, that the realizing parameters of networks approximating certain differentiable functions with finite highest derivative order under mild assumptions grow at least exponentially.\\
	We further point out that the immediate significance of explicit rates lies in the possibility of a direct incorporation in
an error analysis for neural network training.\\
	The comparison with the prior art has also shown that the a priori bounded growth of the parameters obtained for certain networks with a rectified quadratic unit (ReQU) as activation function is unmatched. This, along other beneficial properties, makes ReQU networks an interesting topic of future research.

\appendix

\section{Proof of Theorem \ref{mhaskar_main_param}}
\label{subsec:shallow}
Here we prove the negative result of Theorem \ref{mhaskar_main_param}. To this aim, first recall that the exact result of \cite[Theorem 2.1]{mhaskar} is that, for $1\leq d\leq s$, $1\leq l$ and $1\leq q\leq \infty$, a function $f\in W^{l,q}([-1,1]^s)$ may be approximated by a single-hidden-layer feed forward neural network w.r.t. to a smooth activation function $\phi:\mathbb{R}^d\to \mathbb{R}$ if there exists a $b\in \mathbb{R}^d$ such that $\phi^{(\vert k\vert)}(b)\neq 0$ for all $k\in \mathbb{N}_0^d$. More concretely it is shown that for

\begin{align}
	\label{mhaskar_neural_network}
f_m(x) = \sum_{0\leq \underline{r}\leq \underline{p}\leq\underline{k}\leq 2m}[V_{\underline{k}}(f)\tau_{\underline{k},\underline{p}}(\phi^{(\vert \underline{p}\vert)}(b))^{-1}h^{-\vert\underline{p}\vert}(-1)^{\vert \underline{r}\vert}\binom{\underline{p}}{\underline{r}}\phi(h(2\underline{r}-\underline{p})\cdot x +b)]
\end{align}
(with $h = \min\{\frac{\delta}{3ms}, \max_{0\leq \underline{k}\leq 2m}\{m^{l+\alpha}M_{\phi;m,s}\sum_{0\leq \underline{p}\leq \underline{k}}\vert\phi^{(\vert\underline{p}\vert)}(b)\vert^{-1}\vert \tau_{\underline{k}, \underline{p}}\vert\}^{-1/2}\}$, where $\delta>0$ such that $\phi$ is smooth in $(b-\delta,b+\delta)$, $\alpha = s/\min(p,2)$, $M_{\phi;m,s}$ are certain approximation constants, $\tau_{\underline{k}, \underline{p}}$ are multidimensional Chebychev coefficients such that for $T_{\underline{k}}(x)=\prod_{j=1}^{s}T_{k_j}(x_j)$ with $T_{k_j}(2\cos t)= \cos(k_j t)$  and $T_{\underline{k}}(x):=\sum_{0\leq \underline{p}\leq \underline{k}} \tau_{\underline{k},\underline{p}}x^{\underline{p}})$ it holds true that
\[
\Vert f-f_m\Vert_{L^q([-1,1]^s)}\leq c m^{-l/s}\Vert f\Vert_{W^{l,q}([-1,1]^s)}.
\]
Furthermore, the estimate 
\begin{align}
	\label{vkestimate}
	\sum_{0\leq \underline{k}\leq 2m}\vert V_{\underline{k}}(f)\vert \leq cm^\alpha\Vert f\Vert_{W^{l,q}([-1,1]^s)}
\end{align} is valid for certain coefficents $V_{\underline{k}}(f)$ of a de la Valleé Poussin type operator. The number of parameters of $f_m$ is of order $m^s$. That is, \eqref{approximation} holds with $\mathcal{X}= W^{l,q}([-1,1]^s), \mathcal{Y}=L^q([-1,1]^s)$ and $\alpha_\mathcal{X}(N,L)=N^{-l/s}$. 

For the sake of simplicity regarding the technical details we restrict ourselves to the case $s=1$ in the following considerations. Furthermore, we consider the function
\begin{align}
	\label{special_f}
f(t)=
\begin{cases}
	\pi^2/4-\arccos(t/2)^2, & \text{if } 0\leq t\leq 1,\\
	0, & \text{if } -1\leq t<0
\end{cases}
\end{align}
which is $W^{1,\infty}([-1,1])$-regular. We refer to Remark \ref{rem:generalization1} and Remark \ref{rem:generalization2} for generalizations of Theorem \ref{mhaskar_main_param} to a broader class of functions. Recall the assertion of Theorem \ref{mhaskar_main_param}:

\begingroup
\def\thetheorem{\ref{mhaskar_main_param}}
\addtocounter{theorem}{-1}
	\begin{theorem}
	Let $1\leq q\leq \infty$ and $f\in W^{1,\infty}([-1,1])$ be given as in \eqref{special_f}. Then the single hidden layer feed forward neural networks $(f_m)_{m\in\mathbb{N}}$ in \eqref{mhaskar_neural_network}  with width of order $\mathcal{O}(m)$ and activation function given by either $\phi(x)=\exp(-x^2)$ or $\phi(x)=(1+\exp(-x))^{-1}$ fulfill
	\[
	\Vert f-f_m\Vert_{L^q([-1,1])}=\mathcal{O}(m^{-1}).
	\]
	Furthermore, there exists some $c>1$ such that the realizing parameters of the $(f_{m})_m$ grow asymptotically as $\Omega(c^{m})$.
\end{theorem}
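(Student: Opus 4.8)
The plan is to prove the two assertions separately: the approximation rate is an immediate specialization of the cited theorem, whereas the exponential lower bound is obtained by isolating a single explicit outer weight in \eqref{mhaskar_neural_network} and exploiting the factor $h^{-|\underline p|}$. First I would check that $f$ in \eqref{special_f} lies in $W^{1,\infty}([-1,1])$: it is continuous at $0$ (both one-sided limits vanish) and its a.e.\ derivative is $f'(t)=\arccos(t/2)/\sqrt{1-t^2/4}$ on $(0,1)$ and $f'\equiv 0$ on $(-1,0)$, which is bounded, so $f$ is Lipschitz with a corner of $f'$ at $t=0$. Since $W^{1,\infty}([-1,1])\hookrightarrow W^{1,q}([-1,1])$ for every $1\le q\le\infty$, \cite[Theorem 2.1]{mhaskar} applies with $s=1$, $l=1$ and yields $\|f-f_m\|_{L^q([-1,1])}\le c\,m^{-l/s}\|f\|_{W^{1,q}}=\mathcal{O}(m^{-1})$.

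\textbf{Reduction to one outer weight.} For the lower bound I would first discard the inner parameters: the inner weights $h(2r-p)$ obey $|h(2r-p)|\le h\cdot 2m\le \tfrac{2\delta}{3}$ (using $h\le\frac{\delta}{3m}$ and $|2r-p|\le p\le 2m$) and the inner bias is the fixed $b$, so all inner parameters are bounded and the growth must originate from the outer weights. I single out the term with $k=p=k_m$ and $r=0$, for a suitable $k_m$ growing linearly in $m$, whose outer coefficient is $w_m=V_{k_m}(f)\,\tau_{k_m,k_m}\,(\phi^{(k_m)}(b))^{-1}h^{-k_m}$. Here $\tau_{k,k}=\tfrac12$ is the (nonzero) leading coefficient of $T_k$ under the normalization $T_k(2\cos t)=\cos(kt)$, and from $h\le\frac{\delta}{3m}$ one gets $h^{-k_m}\ge(3m/\delta)^{k_m}$. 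Since $\|\theta\|_\infty\ge|w_m|$, it suffices to prove that $|w_m|$ grows at least exponentially.

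\textbf{Controlling the activation derivatives.} Next I would bound $|\phi^{(k)}(b)|$ from above, so that its reciprocal is large. For the Gaussian, Cram\'er's inequality gives $|\phi^{(k)}(b)|=|H_k(b)|e^{-b^2}\le K\,2^{k/2}\sqrt{k!}\,e^{-b^2/2}$ with $H_k$ the (physicists') Hermite polynomials, hence $(\phi^{(k)}(b))^{-1}\gtrsim 2^{-k/2}(k!)^{-1/2}$. For the logistic $\phi$, analyticity with nearest singularities at $b\pm$ purely imaginary points at distance $\sqrt{b^2+\pi^2}$ together with Cauchy's estimates gives $|\phi^{(k)}(b)|\le C\,k!\,\rho^{-k}$ for a fixed $0<\rho<\sqrt{b^2+\pi^2}$, hence $(\phi^{(k)}(b))^{-1}\gtrsim \rho^{k}/k!$. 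The quantitative mechanism is then Stirling's formula: with $k_m\asymp m$ the polynomial-in-$m$ factor $h^{-k_m}\ge(3m/\delta)^{k_m}$ dominates the factorial decay, giving $|w_m|\gtrsim |V_{k_m}(f)|\,m^{k_m/2}(\text{const})^{m}$ for the Gaussian (super-exponential) and $|w_m|\gtrsim |V_{k_m}(f)|\,(3\rho e/\delta)^{k_m}\,m^{-2}$-type growth for the logistic (exponential), after choosing the proportionality constant in $k_m$ small enough that the exponential base exceeds $1$.

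\textbf{The main obstacle: a lower bound on $V_{k_m}(f)$.} It remains to prevent $V_{k_m}(f)$ from decaying too fast, and this is the crux of the argument, where the specific choice of $f$ in \eqref{special_f} enters. Choosing $k_m$ in the unfiltered range of the de la Vall\'ee Poussin operator, $V_{k_m}(f)$ reduces to a Fourier--Chebyshev coefficient of $f$, i.e.\ an integral of the Lipschitz, piecewise-defined $f$ against $\cos(k_m\,\cdot)$ after the substitution $t=2\cos\theta$. The function \eqref{special_f} is engineered precisely so that this coefficient admits a closed form with a genuine polynomial lower bound $|V_{k_m}(f)|\gtrsim k_m^{-2}$; the corner of $f'$ at $t=0$ produces the leading $k^{-2}$ contribution and rules out cancellation. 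Establishing this nonvanishing lower bound---as opposed to the mere summability estimate \eqref{vkestimate}---is the technical heart, and I expect it to be the hardest step. Combining it with the preceding estimate yields $\|\theta\|_\infty\ge|w_m|=\Omega(c^m)$ for some $c>1$ (in fact super-exponentially in the Gaussian case), which completes the proof.
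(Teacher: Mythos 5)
Your proposal is correct, and its skeleton coincides with the paper's proof: the approximation rate is quoted from \cite[Theorem 2.1]{mhaskar}; the inner parameters $h(2r-p)$ and $b$ are bounded (by $\tfrac{2}{3}\delta$ resp.\ $\vert b\vert$), so the growth must come from the outer coefficients; one coefficient with $\tau_{k,k}=\tfrac12$ and $h^{-k}\ge(3m/\delta)^{k}$ is isolated; and a polynomial lower bound on the de la Vall\'ee Poussin coefficients is played against a factorial-type upper bound on $\phi^{(k)}(b)$ via Stirling. The genuine divergence is in how the two key technical ingredients are implemented. For the derivative bounds, the paper argues elementarily: an induction on the Hermite recursion gives $\Vert\phi^{(p)}\Vert_{L^\infty(\mathbb{R})}\le p!$ for the Gaussian (Lemma \ref{gauss_activ}), and a recursion for the polynomials $P_p$ with $\phi^{(p)}=P_p(\phi)$ gives $\Vert\phi^{(2m)}\Vert_{L^\infty(\mathbb{R})}\le(2m)!\,2^{-2m-1}$ for the logistic (Lemma \ref{lem:softlog}); you instead invoke Cram\'er's inequality and Cauchy's estimates. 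Your bounds are sharper and buy something concrete: with $\sqrt{k!}$ in place of $k!$ the Gaussian case becomes super-exponential with no condition on $\delta$, and your freedom to shrink the proportionality constant in $k_m$ removes the restrictions $\delta<3e/2$ resp.\ $\delta<3e$ under which the paper's two lemmas yield growth (restrictions the theorem statement itself never imposes); the paper's route, in exchange, is self-contained. For the coefficients, the paper inserts $p=k=2m$ and uses $\min_{0\le k\le 2m}\vert V_k(f)\vert\gtrsim m^{-3}$, while you pick $k_m\le m$ in the unfiltered range where $V_{k_m}(f)=2\hat{f^*}(k_m)$; both are fine. Two small corrections to your heuristic there: the relevant corner is that of $f$ (jump of $f'$) at $t=0$, which under $t=2\cos\theta$ becomes jumps of $(f^*)'$ at $\pm\pi/2$, and for \emph{odd} $k$ these two contributions do cancel at leading order --- explicitly $\vert\hat{f^*}(k)\vert=k^{-2}$ for even $k$ but $\asymp k^{-3}$ for odd $k$ --- so your bound $\vert V_{k_m}(f)\vert\gtrsim k_m^{-2}$ requires taking $k_m$ even (harmless, and in fact any polynomial lower bound is swallowed by the exponential factor). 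Moreover, this step is not the "technical heart" you anticipate: it is a short explicit integration by parts; the actual labor in the paper sits precisely in the derivative bounds that you outsourced to classical inequalities.
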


\endgroup

To prove this, we analyze the suprema of the realizing parameters of $f_m$ in \eqref{mhaskar_neural_network},
\begin{align}
	\label{mhaskar_param_growth}
	\max_{0\leq r\leq p\leq k\leq 2m}\left(\vert V_{k}(f)\tau_{k, p}(\phi^{(p)}(b))^{-1}h^{-p}\binom{p}{r}\vert, h\vert 2r-p\vert, \vert b\vert\right).
\end{align}
As $h\vert 2r-p\vert \leq \frac{2m\delta}{3m}=\frac{2}{3}\delta$ for $0\leq r\leq p\leq2m$ the asymptotical behavior of \eqref{mhaskar_param_growth} is determined by the term
\begin{align}
	\label{mh_upper_1}
	I_m=\max_{0\leq r\leq p\leq k\leq 2m}\bigg\vert V_{k}(f)\tau_{k, p}(\phi^{(p)}(b))^{-1}h^{-p}\binom{p}{r}\bigg\vert.
\end{align}

Using $\binom{p}{r}\geq 1$ for $p\geq r$ together with boundedness of $h^{-p}$ by $(3m/\delta)^{p}$ from below and $\vert\phi^{(p)}(b)\vert\leq \max_{0\leq \tilde{p}\leq 2m}\vert\phi^{(\tilde{p})}(b)\vert$ for $0\leq p\leq 2m$ we derive that
\begin{align}
	\label{intermed_estim1}
	I_m\geq (\max_{0\leq \tilde{p}\leq 2m}\vert\phi^{(\tilde{p})}(b)\vert)^{-1}\max_{0\leq p\leq k\leq 2m}\vert V_k(f)\vert \vert \tau_{k,p}\vert (3m/\delta)^p.
\end{align}
Estimating $\vert V_k(f)\vert$ in \eqref{intermed_estim1} from below by its minimum over $0\leq k\leq 2m$ and the remaining maximum by $\vert \tau_{2m,2m}\vert (3m/\delta)^{2m}$ by inserting $p=k=2m$ we obtain
\begin{align}
	\label{estim_I}
	\frac{1}{2}(3m/\delta)^{2m}(\max_{0\leq p\leq 2m}\vert\phi^{(\vert p\vert)}(b)\vert)^{-1}\min_{0\leq k\leq 2m}\vert V_{k}(f)\vert\leq I_m
\end{align}
as $\tau_{2m,2m}=1/2$.
 Following \cite{mhaskar} and using $s=1$ it can be shown that
	\begin{align}
		\label{VKestimation}
		V_k(f)=\begin{cases}
			\hat{f^*}(0) & \text{if } k=0,\\
			2\hat{f^*}(k) & \text{if } 1\leq k\leq m,\\
			2\frac{2m-k+1}{m+1}\hat{f^*}(k) & \text{if } m+1\leq k\leq 2m
		\end{cases}
	\end{align}
	where $f^*(t) = f(2\cos t)$ for $t\in[-\pi,\pi]$ is a periodic modification of the extension of $f$ to $[-2,2]$. Depending on $f$ the minimum over the coefficients $V_{k}(f)$ on the left hand side of \eqref{estim_I} may be calculated explicitly. For $f$ as in \eqref{special_f} it follows that
	\begin{align}
		\label{special_f_mod}
		f^*(t)=
		\begin{cases}
			\pi^2/4-t^2, & \text{if } ~ t\in [-\pi/2,\pi/2]\\
			0, & \text{if } ~ t\in[-\pi,-\pi/2)\cup(\pi/2,\pi]
		\end{cases}.
	\end{align}

\begin{figure}
	\centering
\begin{tikzpicture}
	\draw[<->](-2.2,0)--(2.2,0) node[right]{$x$};
	\draw [->](0,0)--(0,2.7) node[above]{$y$};
	\foreach \x in {-pi/2, pi/2} \draw (\x,-3pt)--(\x,3pt);
	\draw (-pi/2,0) node[below]{$-\pi/2$};
	\draw (pi/2,0) node[below]{$\pi/2$};
	\draw[thick,blue] (1.4,1.7) node[below]{$f$};
	\draw[thick,red] (-1.4,1.7) node[below]{$f^*$};
	\draw[thick,blue,-] plot [domain=0:2,samples=200] (\x,{pi^2/4-rad(acos(\x/2))^2});
	\draw[thick,blue,-] plot [domain=-2:0,samples=2] (\x,0);
	
	\draw[thick,red,-] plot [domain=-pi/2:pi/2,samples=200] (\x,{pi^2/4-\x*\x});
	\draw[thick,red,-] plot [domain=-2:-pi/2,samples=2] (\x,0);
	\draw[thick,red,-] plot [domain=pi/2:2,samples=2] (\x,0);
\end{tikzpicture}
\caption{The approximated function $f$ in \eqref{special_f} and the modification $f^*$ in \eqref{special_f_mod}.}
\end{figure}
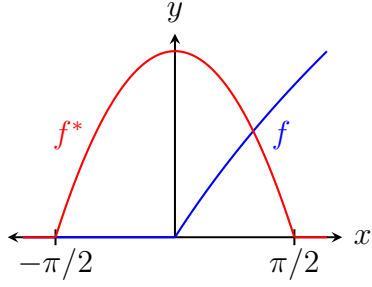
	For $\hat{f^*}(k)=\frac{1}{2\pi}\int_{-\pi}^\pi f^*(t)\exp(-ikt)\dx t$ it can be shown that $\hat{f^*}(0)=\pi^2/12$ and $\vert \hat{f^*}(k)\vert\geq k^{-3}$. As a consequence, it follows for $f$ as in \eqref{special_f} that
	\begin{align}
		\label{Vkestimation}
	\min_{0\leq k\leq 2m}\vert V_k(f)\vert \geq \frac{1}{4m^3}
	\end{align}

for $m\geq 1$. In view of \eqref{estim_I} it remains to estimate 
		\begin{align}
			\label{lowerbd}
			(3m/\delta)^{2m}(\max_{0\leq p\leq 2m}\vert\phi^{(p)}(b)\vert)^{-1}
		\end{align}
		from below. The next two lemmata show that the term in \eqref{lowerbd} may grow asymptotically exponential in $m$ for the activation function $\phi(x)=\exp(-x^2)$ or $\phi(x) = (1+\exp(-x))^{-1}$, thereby completing the proof of Theorem \ref{mhaskar_main_param}.
		\begin{lemma}
			\label{gauss_activ}
			In the situation above for $\phi(x)=\exp(-x^2)$ it holds true that
			\begin{align}
				\label{gauss_asymp}
				(3m/\delta)^{2m}(\max_{0\leq p\leq 2m}\vert\phi^{( p)}(b)\vert)^{-1}=\Omega\bigg(\frac{1}{\sqrt{ m}}\bigg(\frac{3e}{2\delta}\bigg)^{2m}\bigg)
			\end{align}
			and hence, the term in \eqref{gauss_asymp} grows at least exponentially if $0<\delta < 3e/2$.
		\end{lemma}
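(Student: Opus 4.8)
The plan is to reduce the claim to an elementary upper bound on $\max_{0\le p\le 2m}\vert\phi^{(p)}(b)\vert$ and then to invoke Stirling's formula. Since $\phi(z)=\exp(-z^2)$ is entire and $b\in\mathbb{R}$ is the fixed parameter from \cite[Theorem 2.1]{mhaskar} (so that $M:=\max_{\vert z-b\vert=1}\vert\exp(-z^2)\vert<\infty$ is a constant depending only on $b$, hence independent of $m$), I would apply the Cauchy estimates on the circle of radius $1$ centred at $b$. This gives, for every $p\in\mathbb{N}_0$,
\[
\vert\phi^{(p)}(b)\vert\le\frac{p!}{1^{p}}\max_{\vert z-b\vert=1}\vert\phi(z)\vert=M\,p!.
\]
As $p\mapsto p!$ is increasing, the maximum over $0\le p\le 2m$ is attained at the largest index in the bound, so that $\max_{0\le p\le 2m}\vert\phi^{(p)}(b)\vert\le M\,(2m)!$.

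Substituting this estimate into the left-hand side of \eqref{gauss_asymp} yields
\[
(3m/\delta)^{2m}\Big(\max_{0\le p\le 2m}\vert\phi^{(p)}(b)\vert\Big)^{-1}\ge\frac{(3m/\delta)^{2m}}{M\,(2m)!}.
\]
Next I would bound $(2m)!$ from above by Stirling's inequality, e.g.\ $(2m)!\le e\sqrt{2m}\,(2m/e)^{2m}$, producing a denominator of the form (constant)$\cdot\sqrt{m}\,(2m/e)^{2m}$. The powers of $m$ then cancel inside the $2m$-th power, since
\[
\frac{(3m/\delta)^{2m}}{(2m/e)^{2m}}=\Big(\frac{3m/\delta}{2m/e}\Big)^{2m}=\Big(\frac{3e}{2\delta}\Big)^{2m},
\]
and collecting $M$ together with the Stirling constant into a single $m$-independent factor gives the asserted lower bound $\Omega\!\big(\tfrac{1}{\sqrt m}\,(3e/(2\delta))^{2m}\big)$. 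The final claim about at-least-exponential growth follows because $3e/(2\delta)>1$ precisely when $0<\delta<3e/2$, in which regime the exponential base exceeds one and the polynomial factor $1/\sqrt m$ is negligible.

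The only point requiring genuine care is the choice of a \emph{fixed} contour radius in the Cauchy estimate: taking $r=1$, rather than the near-optimal $m$-dependent radius $r\asymp\sqrt m$, deliberately sacrifices tightness but produces exactly the factorial $p!$, which after Stirling reproduces the constant $3e/(2\delta)$ in the statement. I expect this to be the crux of the argument; a sharper, $m$-dependent radius would yield an even larger lower bound and thus also prove the lemma, but it would not recover the stated constant. The remaining manipulations, including the verification that $M$ is finite and independent of $m$, are routine.
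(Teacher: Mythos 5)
Your proof is correct, but it takes a genuinely different route from the paper's. Where you invoke the Cauchy integral estimate on the circle $\vert z-b\vert=1$ to obtain $\vert\phi^{(p)}(b)\vert\leq M\,p!$ with $M=\max_{\vert z-b\vert=1}\vert \exp(-z^2)\vert$ a finite constant depending only on $b$, the paper works entirely over the reals: it writes $\phi^{(p)}=(-1)^pH_p\phi$ with $H_p$ the Hermite polynomials and proves the constant-free global bound $\Vert\phi^{(p)}\Vert_{L^\infty(\mathbb{R})}\leq p!$ by induction, using the recursion $2xH_p(x)-2pH_{p-1}(x)=H_{p+1}(x)$ together with a separate analysis of the critical points of $\phi^{(p)}$ on $\mathbb{R}\setminus[-1,1]$ and of the values on $[-1,1]$. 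Both arguments then finish identically via Stirling's formula: your cancellation $\left(\frac{3m/\delta}{2m/e}\right)^{2m}=\left(\frac{3e}{2\delta}\right)^{2m}$ is the same computation as the paper's $\frac{m^{2m}}{(2m)!}\sim\frac{1}{\sqrt{4\pi m}}\left(\frac{e}{2}\right)^{2m}$ after factoring out $(3/\delta)^{2m}$. What each approach buys: your complex-analytic estimate is a few lines and avoids the inductive case analysis entirely; the harmless price is the $b$-dependent constant $M$, which is absorbed into the $\Omega(\cdot)$ because $b$ and the contour radius are fixed independently of $m$ (and the lemma only claims an asymptotic lower bound). The paper's argument is longer but self-contained in real analysis and yields the cleaner uniform statement $\Vert\phi\Vert_{W^{2m,\infty}(\mathbb{R})}\leq(2m)!$, with no extraneous constant. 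Your closing observation is also on point: the fixed radius $r=1$ is precisely what reproduces the factorial $p!$ and hence the stated base $3e/(2\delta)$, whereas an $m$-dependent radius would give a sharper but differently shaped bound.
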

		\begin{proof}
			First note that $\phi(x)=\exp(-x^2)$ fulfills the requirements of \cite[Theorem 2.1]{mhaskar}. Furthermore, as
			\[
				\max_{0\leq p\leq 2m}\vert\phi^{(p)}(b)\vert\leq \max_{0\leq p\leq 2m}\Vert\phi^{(p)}\Vert_{L^\infty(\mathbb{R})}
			\]
			it suffices to estimate $\Vert \phi\Vert_{W^{2m,\infty}(\mathbb{R})}$ for $m\in \mathbb{N}$. We start by noting that
			\begin{align}
				\label{hermite}
				\phi^{(p)}= (-1)^pH_p \phi
			\end{align}
			where $H_p$ denotes the $p-$th Hermite polynomial. The Hermite polynomials fulfill the recursion formula
			\begin{align}
				\label{hermite_rec}
				2x H_p(x) -2p H_{p-1}(x)=H_{p+1}(x)
			\end{align}
			for $p\geq 1$. We claim that for $p\geq 0$ it holds true that
			\[
				\Vert\phi^{(p)}\Vert_{L^\infty(\mathbb{R})}\leq p!
			\]
			We show first for $U=\mathbb{R}\backslash[-1,1]$ that $\Vert\phi^{(p)}\Vert_{L^\infty(U)}\leq p!$.
			The assertion for $p=0$ is trivial as $0\leq \phi\leq 1$. Assuming that
			\begin{align}
				\label{hypothesis}
				\Vert \phi^{(p-1)}\Vert_{L^\infty(U)}\leq (p-1)!
			\end{align}
			 we determine first the extrema of the function $\vert\phi^{(p)}\vert$ in $U$. The local extrema of $\phi^{(p)}$ in $U$ are determined by the roots of $\phi^{(p+1)}$. Using \eqref{hermite} together with positivity of $\phi$ and the recursion formula \eqref{hermite_rec} one derives that in a local extremum $x^*\in U$ of $\phi^{(p)}$ the equation
			\begin{align}
				\label{gauss_phi_step}
				\vert x^*\vert \vert \phi^{(p)}(x^*)\vert = p\vert \phi^{(p-1)}(x^*)\vert
			\end{align}
			is valid. As a consequence of \eqref{gauss_phi_step}, we derive using \eqref{hypothesis} and $\lim_{x\to\pm\infty}\phi^{(p)}(x)=0$ for $\vert x^*\vert\geq 1$ that
			\[
				\vert \phi^{(p)}(x^*)\vert\leq p!
			\]
			It remains to argue that $\vert \phi^{(p)}\vert\leq p!$ in $[-1,1]$. For $p=0$ the assertion is again trivial. For $p=1$ it is straightforward to show that the maximum of $\vert \phi'\vert$ on $[-1,1]$ is attained at $1/\sqrt{2}$ with value less than one. For $p=2, p=3$ one can show that already $\vert H_p\vert\leq p!$ yielding the assertion as $\vert\phi\vert\leq 1$. Finally, for $p\geq 4$ assuming the assertion for $p-1$ and $p-2$ we obtain for $z\in [-1,1]$ that
			\begin{align*}
				\vert \phi^{(p)}(z)\vert &= \vert H_p(z)\phi(z)\vert\\
				&=\vert \phi(z)(2z H_{p-1}(z)-2(p-1)H_{p-2}(z))\vert\\
				&=2\vert z\phi^{(p-1)}(z)+(p-1)\phi^{(p-2)}(z)\vert\\
				&\leq 2((p-1)!+(p-1)!)\\
				&=4(p-1)!\\
				&\leq p!
			\end{align*}
			as $p\geq 4$. We conclude that 
			\[
				\Vert \phi\Vert_{W^{2m,\infty}(U)}\leq (2m)!
			\]
			and thus, that
			\[
				\frac{m^{2m}}{(2m)!}\leq m^{2m}(\max_{0\leq p\leq 2m}\vert\phi^{(\vert p\vert)}(b)\vert)^{-1}.
			\]
			Employing Stirling's Formula yields $\frac{m^{2m}}{(2m)!}\sim \frac{1}{\sqrt{4\pi m}}(\frac{e}{2})^{2m}$ and finally the assertion of the lemma.
		\end{proof}
		\begin{lemma}
			\label{lem:softlog}
			In the situation above for $\phi(x) = (1+\exp(-x))^{-1}$ it holds true that
			\begin{align}
				\label{log_asymp}
			(3m/\delta)^{2m}(\max_{0\leq p\leq 2m}\vert\phi^{(\vert p\vert)}(b)\vert)^{-1}=\Omega\bigg(\frac{1}{\sqrt{ m}}\bigg(\frac{3e}{\delta}\bigg)^{2m}\bigg)
			\end{align}
			and hence, the term in \eqref{log_asymp} grows at least exponentially if $0<\delta<3e$.
		\end{lemma}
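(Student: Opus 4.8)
The plan is to follow the template of Lemma~\ref{gauss_activ}: once we have a factorial-type upper bound on $\max_{0\le p\le 2m}\vert\phi^{(p)}(b)\vert$, the claimed lower bound on \eqref{lowerbd} follows from Stirling's formula exactly as in the Gaussian case. The only structural difference is that for the logistic activation the derivatives decay by an additional factor $2^{-p}$ compared with the bound $\Vert\phi^{(p)}\Vert_{L^\infty}\le p!$ used in Lemma~\ref{gauss_activ}; this extra gain is precisely what upgrades the base of the exponential from $3e/(2\delta)$ to $3e/\delta$. Concretely, I would first record that $\phi(x)=(1+\exp(-x))^{-1}$ is smooth and admits a point $b$ with $\phi^{(p)}(b)\neq0$ for all $p$, so that \cite[Theorem~2.1]{mhaskar} and the chain of estimates culminating in \eqref{estim_I} apply.

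The heart of the argument is the derivative estimate $\vert\phi^{(p)}(b)\vert\le C\,p!/2^{p}$ for all $p\in\mathbb{N}_0$, with $C$ independent of $p$ and $b$, which I would obtain from Cauchy's integral formula rather than a real-variable recursion. The function $z\mapsto(1+e^{-z})^{-1}$ is meromorphic with poles exactly at $z=i\pi(2k+1)$, $k\in\mathbb{Z}$; the poles nearest the real axis lie at $\pm i\pi$, so $\phi$ is holomorphic on every disk $\{\vert z-b\vert<\pi\}$ with $b\in\mathbb{R}$. Applying Cauchy's estimate on the circle $\vert z-b\vert=r$ with the admissible radius $r=2<\pi$ gives
\[
\vert\phi^{(p)}(b)\vert\le \frac{p!}{2^{p}}\sup_{\vert z-b\vert=2}\vert\phi(z)\vert .
\]
The supremum is bounded uniformly in $b$ by some finite $C$: the circle of radius $2$ stays at distance $\pi-2>0$ from every pole, $\phi$ is $2\pi i$-periodic in the imaginary direction, and $\phi(z)\to1$ as $\Re z\to+\infty$ while $\phi(z)\to0$ as $\Re z\to-\infty$.

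Since $p\mapsto p!/2^{p}$ is increasing for $p\ge1$, this gives $\max_{0\le p\le 2m}\vert\phi^{(p)}(b)\vert\le C\,(2m)!/2^{2m}$, whence
\[
(3m/\delta)^{2m}\Big(\max_{0\le p\le 2m}\vert\phi^{(p)}(b)\vert\Big)^{-1}\ge \frac{1}{C}\,\frac{(6m/\delta)^{2m}}{(2m)!}.
\]
Inserting Stirling's formula $(2m)!\sim\sqrt{4\pi m}\,(2m/e)^{2m}$ makes the right-hand side asymptotic to $(2C\sqrt{\pi m})^{-1}(3e/\delta)^{2m}$, which is $\Omega\big(m^{-1/2}(3e/\delta)^{2m}\big)$ and grows at least exponentially precisely when $3e/\delta>1$, i.e.\ $\delta<3e$, giving \eqref{log_asymp}.

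The main obstacle is the derivative bound, and within it the uniform-in-$b$ control $C<\infty$ of $\vert\phi\vert$ on the circle of radius $2$; the choice $r=2$ is what produces the stated base $3e/\delta$ (any $r\in(2,\pi)$ would give a larger base, but the statement is tuned to $r=2$ to mirror the form of Lemma~\ref{gauss_activ}). A purely real-variable alternative, differentiating the identity $\phi'=\phi(1-\phi)$ and tracking extrema as in Lemma~\ref{gauss_activ}, is possible in principle but awkward, since the relation obtained at an extremum degenerates when $\phi(x^\ast)=\tfrac12$; the analyticity argument avoids this entirely.
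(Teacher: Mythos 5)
Your proof is correct, but it takes a genuinely different route from the paper. The paper stays entirely real-variable: it writes $\phi^{(p)}=P_p(\phi)$ for polynomials $P_p$ of degree $p+1$ satisfying $P_{p+1}=P_p'P_1$ with $P_1(x)=x^2-x$, and then runs a two-index induction (via the Leibniz rule) on the quantities $\alpha_{n,k}=\Vert P_n^{(k)}\Vert_{L^\infty([0,1])}$ to prove $\alpha_{n,k}\leq \frac{n!(n+1)!}{(n-k+1)!}2^{k-n-1}$, which specializes to $\Vert\phi^{(2m)}\Vert_{L^\infty(\mathbb{R})}\leq (2m)!\,2^{-2m-1}$; from there it concludes with Stirling's formula exactly as you do. Your Cauchy-estimate argument reaches the same bound $\vert\phi^{(p)}(b)\vert\leq C\,p!/2^p$ in one step, using that the poles of $z\mapsto(1+e^{-z})^{-1}$ sit at $i\pi(2k+1)$ so that a disk of radius $2<\pi$ around any real $b$ is admissible, and the uniform bound $C$ on the circle is easily made rigorous (e.g. $\vert 1+e^{-z}\vert^2=(e^{-x}+\cos y)^2+\sin^2 y\geq\sin^2 2$ on the strip $\vert\Im z\vert\leq 2$, giving $C=1/\sin 2$). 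What each approach buys: yours is shorter, sidesteps the delicate double induction (and, as you note, the degeneracy issues a naive extremum-tracking argument would face), gives uniformity in $b$ for free, and makes transparent that any radius $r\in(2,\pi)$ would yield an even larger base $3er/(2\delta)$; the paper's argument is elementary in that it needs no complex analysis, mirrors the technique of its Gaussian lemma, and produces a fully explicit constant rather than an unspecified $C$. Two cosmetic points: the monotonicity of $p\mapsto p!/2^p$ only gives $\max_{0\leq p\leq 2m}\vert\phi^{(p)}(b)\vert\leq C\max(1,(2m)!/2^{2m})$, so your displayed inequality as stated needs $m\geq 2$ (immaterial for an $\Omega$-statement), and the opening appeal to \cite[Theorem 2.1]{mhaskar} is context rather than part of what the lemma asserts, matching what the paper does in its Gaussian counterpart.
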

		\begin{proof}
			Similarly as in Lemma \ref{gauss_activ} we estimate $\Vert \phi\Vert_{W^{2m,\infty}(\mathbb{R})}$. It is straightforward to show that 
				\[
					\phi^{(p)}=P_p(\phi)
				\]
				for polynomials $P_p$ of degree $p+1$ fulfilling the recursion formula
				\[
					P_{p+1} = P_p' P_1
				\]
				for $p\geq 1$ where $P_1(x)=x^2-x$. As $0\leq\phi\leq 1$ the problem of estimating $\Vert \phi\Vert_{W^{2m,\infty}(\mathbb{R})}$ reduces to determining an upper bound of
					\[
						\Vert P_p\Vert_{L^\infty([0,1])}.
					\]
				Employing the general Leibniz rule we derive
				\begin{align}
					\label{recurr}
					P_n^{(k)}=\sum_{j=0}^k\binom{k}{j}P_{n-1}^{(k+1-j)}P_1^{(j)}=P_{n-1}^{(k+1)}P_1+kP_{n-1}^{(k)}P_1'+\frac{k(k-1)}{2}P_{n-1}^{(k-1)}P_1''.
				\end{align}
				Define for $n\geq1, k\geq 0$
				\begin{align}
					\label{alphadef}
				\alpha_{n,k}=\Vert P_n^{(k)}\Vert_{L^\infty([0,1])}.
				\end{align}
				Let further $\alpha_{n,-1}:=0$ for all $n\geq 0$ and $\alpha_{0,k}=\delta_{k1}$, i.e., one if $k=1$ and zero else for $k\in\mathbb{N}$. Note that $\alpha_{n,k}=0$ for $k\geq n+2$. Then by \eqref{recurr} the recursive inequality
				\begin{align}
					\label{alpha_rec}
					\alpha_{n,k}\leq \frac{1}{4}\alpha_{n-1,k+1}+k\alpha_{n-1,k}+k(k-1)\alpha_{n-1,k-1}
				\end{align}
				is valid for $1\leq n$ and $0\leq k\leq n+1$. We claim that
				\begin{align}
					\label{alpha_claim}
					\alpha_{n,k} \leq \frac{n! (n+1)!}{(n-k+1)!}2^{k-n-1}.
				\end{align}
				We prove the assertion via induction. For $n=1$ \eqref{alpha_claim} follows by comparing the right hand side of \eqref{alpha_claim} to the definition of $\alpha_{1,k}$ in \eqref{alphadef} directly. In the induction step for $n+1$ and $0\leq k\leq n+2$ we consider four cases.\\
				
				In case $1\leq k\leq n$ we obtain by \eqref{alpha_rec} and the induction hypothesis for $n$ that
				\begin{align*}
					\alpha_{n+1,k}&\leq \frac{1}{4}\alpha_{n,k+1}+k\alpha_{n,k}+k(k-1)\alpha_{n,k-1}\\
					&\leq\frac{1}{4}\frac{n!(n+1)!}{(n-k)!}2^{k-n}+k\frac{n!(n+1)!}{(n-k+1)!}2^{k-n-1}+k(k-1)\frac{n!(n+1)!}{(n-k+2)!}2^{k-n-2}\\
					&=\frac{(n+1)!(n+2)!}{(n-k+2)!}2^{k-n-2}.
				\end{align*}
				For $k=n+1$ we have similarly
				\begin{align*}
					\alpha_{n+1,n+1}&\leq (n+1)[\alpha_{n,n+1}+n\alpha_{n,n}]\\
					&\leq (n+1)[n!(n+1)!+n!(n+1)!\frac{n}{2}]\\
					&=\frac{1}{2}(n+1)!(n+2)!
				\end{align*}
				For $k=n+2$ we derive
				\begin{align*}
					\alpha_{n+1,n+2}&\leq (n+1)(n+2)\alpha_{n,n+1}\\
					&\leq (n+1)!(n+2)!
				\end{align*}
				Finally for $k=0$ it holds true that
				\begin{align*}
					\alpha_{n+1, 0}&\leq \frac{1}{4}\alpha_{n,1}\\
					&\leq (n+1)!2^{-n-2}
				\end{align*}
				Thus, the assertion in \eqref{alpha_claim} is valid. As a consequence
				\[
					\Vert\phi\Vert_{W^{2m,\infty}(\mathbb{R})}=\Vert P_{2m}\Vert_{L^\infty([0,1])}=\alpha_{2m,0}\leq\frac{(2m)!}{2^{2m+1}}.
				\]
				As in Lemma \ref{gauss_activ} by employing Stirling's formula it follows that
				\[
					(3m/\delta)^{2m}(\max_{0\leq p\leq 2m}\phi^{(\vert p\vert)}(b))^{-1}=\Omega\bigg(\frac{1}{\sqrt{ m}}\bigg(\frac{3e}{\delta}\bigg)^{2m}\bigg).
				\]
		\end{proof}
		\begin{proof}[Proof of Theorem \ref{mhaskar_main_param}]
			By the considerations in Lemma \ref{gauss_activ} for $\phi(x)=\exp(-x^2)$ and Lemma \ref{lem:softlog} for $\phi(x)=(1+\exp(-x))^{-1}$ together with the estimations \eqref{estim_I} and \eqref{Vkestimation} it follows that there exists some $\tilde{c}>0$ (depending on $\delta>0$) such that $I_m$ is bounded from below by
			\begin{align}
				\label{lowerbd2}
				I_m \geq \frac{1}{8}\tilde{c}^mm^{-3}
			\end{align}
			for sufficiently large $m\in\mathbb{N}$ where we make use of the exponential lower bound of the term in \eqref{lowerbd}. Hence, for $1<c<\tilde{c}$ we derive that the realizing parameters of $(f_m)_m$ grow asymptotically as $\Omega(c^m)$. Employing \cite[Theorem 2.1]{mhaskar} the remaining assertions of the Theorem follow.
		\end{proof}
		\begin{remark}
			\label{rem:generalization1}
			We recall that in view of Lemma \ref{gauss_activ} and Lemma \ref{lem:softlog} the realizing parameters of the networks $f_m$ are essentially bounded from below by
			\begin{align}
				\label{lowerbdparam}
				c^m \min_{0\leq k\leq 2m}\vert V_k(f)\vert
			\end{align}
			for some $c>1$. For $f$ as in \eqref{special_f} we showed that $\min_{0\leq k\leq 2m}\vert V_k(f)\vert\geq \frac{1}{4} m^{-3}$. More generally, if there exists some $\tilde{c}>0$ and $\beta\in \mathbb{N}_0$ such that
			\begin{align}
				\label{Vk}
				\min_{0\leq k\leq 2m}\vert V_k(f)\vert \geq \tilde{c} m^{-\beta},
			\end{align}
			then the realizing parameters of $f_m$ grow exponentially due to the lower bound in \eqref{lowerbdparam}. By \eqref{VKestimation} the bound in \eqref{Vk} is equivalent to $\vert \hat{f^*}(m)\vert\geq \tilde{c}m^{-\beta}$ for some $\tilde{c}>0, \beta\in\mathbb{N}_0$ with $\hat{f^*}(m)$ the Fourier coefficients of $f^*$. As a consequence of \cite[Section 4, Theorem 4.1, Exercise 4.1]{Katznelson2004} for a given zero sequence $(a_m)_m$ with $a_m \geq \tilde{c}m^{-\beta}$ there exists a function of certain regularity depending on $(a_m)_m$ whose Fourier coefficients are exactly the $a_m$. Hence, the negative result of Theorem \ref{mhaskar_main_param} can be recovered for a much broader class of functions $f^*$ and hence, also $f$.
		\end{remark}
		\begin{remark}
			\label{rem:generalization2}
			In the following we consider a general class of functions with direct conditions on $f$ rather than $f^*$ as in Remark \ref{rem:generalization1}. Under the assumption that $f\in \mathcal{C}^{l-1}([-2,2])$ with $f^{(l-1)}$ absolutely continuous and $f^{(l)}$ discontinuous, we can argue
that \eqref{Vk} holds true at least for a subsequence of increasing natural numbers $m$ as follows.
First note that, for $f^*(t)=f(2\cos t)$, it immediately follows that $f^*\in \mathcal{C}^{l-1}([-\pi,\pi])$ with $(f^*)^{(l-1)}$ absolutely continuous and $(f^*)^{(l)}$ discontinuous. From \cite[4.4 Theorem]{Katznelson2004} it then follows that $\vert \hat{f^*}(m)\vert = o(m^{-l})$.  Conversely, by \cite[Exercise 4.2]{Katznelson2004}, for $g\in L^1((-\pi,\pi))$ such that $\vert\hat{g}(m)\vert=\mathcal{O}(m^{-(l+1+\epsilon)})$ for some $\epsilon>0$ it follows that $g\in \mathcal{C}^l((-\pi,\pi))$. Thus, as $f^{*(l)}$ is discontinuous, there exists a subsequence $(m_i)_i\subseteq \mathbb{N}$ fulfilling
			\[
				\vert \hat{f^*}(m_i)\vert > \frac{1}{m_i^{l+2}}
			\]
			for all $i\in \mathbb{N}$, which immediately yields a polynomial lower bound on the coefficients $V_k(f)$ in \eqref{VKestimation} for this subsequence. Setting
			\[
				m(m_i)=\max\{m_i~|~m_i\leq 2m\}
			\]
			we can then estimate $I_m$ in \eqref{mh_upper_1}, in a slightly different way, from below by 
			\[
				I_m\geq \bigg\vert V_{m(m_i)}(f)(\phi^{(m(m_i))}(b))^{-1}\left(\frac{3m(m_i)}{2\delta}\right)^{m(m_i)}\bigg\vert.
			\]
			By similar techniques as before and applying Lemma \ref{gauss_activ} and Lemma \ref{lem:softlog} we derive that $I_{m_i}$ grows exponentially as $i\to \infty$, i.e., we recover exponential growth of the parameters in the above setup at least for a subsequence of increasing architectures.
\end{remark}

	\bibliographystyle{plain}
	\bibliography{references}

\end{document}